\def\eqref#1{equation~\ref{#1}}
\def\1{\bm{1}}
\DeclareMathAlphabet{\mathsfit}{\encodingdefault}{\sfdefault}{m}{sl}
\SetMathAlphabet{\mathsfit}{bold}{\encodingdefault}{\sfdefault}{bx}{n}
\newtheorem{theorem}{Theorem}
\newtheorem{lemma}[theorem]{Lemma}
\theoremstyle{definition}
\newtheorem{definition}{Definition}
\theoremstyle{definition}
\newtheorem{remark}{Remark}
\title{A Theoretical Perspective: How to Prevent Model Collapse in Self-consuming Training Loops}
\author{Shi Fu$^{1}$\quad Yingjie Wang$^{1}$\footnotemark[1]\quad Yuzhu Chen$^{2}$\quad Xinmei Tian$^{2}$\quad Dacheng Tao$^{1}$\footnotemark[1]\\[1.2pt]
  $^1$Generative AI Lab, College of Computing and Data Science, \\
 \ \ Nanyang Technological University, Singapore 639798,\\
  $^2$University of Science and Technology of China, Hefei, China \\
   \texttt{fs311@mail.ustc.edu.cn}\textbf{,}\ \texttt{yingjiewang@upc.edu.cn}\textbf{,}\\ \texttt{cyzkrau@mail.ustc.edu.cn}\textbf{,}\ \texttt{xinmei@ustc.edu.cn}\textbf{,}
   \texttt{dacheng.tao@gmail.com}
}
\begin{document}

\maketitle
\renewcommand{\thefootnote}{\fnsymbol{footnote}} 
\footnotetext[1]{Corresponding authors}

\begin{abstract}

High-quality data is essential for training large generative models, yet the vast reservoir of real data available online has become nearly depleted. Consequently, models increasingly generate their own data for further training, forming Self-consuming Training Loops (STLs). However, the empirical results have been strikingly inconsistent: some models degrade or even collapse, while others successfully avoid these failures, leaving a significant gap in theoretical understanding to explain this discrepancy. This paper introduces the intriguing notion of \textit{recursive stability} and presents the first theoretical generalization analysis, revealing how both model architecture and the proportion between real and synthetic data influence the success of STLs. We further extend this analysis to transformers in in-context learning, showing that even a constant-sized proportion of real data ensures convergence, while also providing insights into optimal synthetic data sizing.

\end{abstract}

\section{Introduction}
The quest of high-quality data is paramount in the training of generative artificial intelligence (AI), such as large language models (LLMs). However, the vast reservoir of publicly available data on the internet has nearly been exhausted \citep{villalobos2022will}, pushing the research community to seek innovative yet plausible solutions. One promising approach is to train the next generation of LLMs using synthetic data generated by earlier generations of the models themselves \citep{briesch2023large}. Additionally, reliance on synthetic data has become almost unavoidable, as many existing datasets are already polluted with synthetic content \citep{schuhmann2022laion}, which proves difficult to detect reliably \citep{sadasivan2023can}. This has led to the development of Self-consuming Training Loops (STLs), as illustrated in Figure \ref{figure_selfconsuming}, where generative models are recursively trained on a mix of real and synthetic data generated by the models themselves. In theory, these STLs of data creation and refinement could propel models to new levels of capability, reducing reliance on external datasets. 

However, despite their potential, the empirical results of STLs have been highly inconsistent across studies \citep{shumailov2024ai,alemohammadself,xing2025caveats,dohmatob2024strong}. Some studies \citep{shumailov2024ai} have encountered significant setbacks—certain models have shown signs of stagnation, failing to improve or adapt, while others have even regressed, leading to sharp declines in performance. Conversely, other works \citep{gerstgrasser2024is,gillmanself,alemohammad2024self,ferbach2024self} have successfully avoided model collapse by incorporating sufficient real data, augmenting with synthetic data, or introducing guidance during the generation process. However, these observed phenomena lack thorough theoretical explanations. 


When and how do STLs generalize effectively, thereby preventing model collapse from a theoretical perspective? Even among “refined” LLMs drawing from similar pools of model-generated data, the results vary significantly \citep{briesch2023large,fu2024championing}. These inconsistencies highlight the urgency of establishing theoretical guarantees for STLs by exploring the underlying mechanisms that determine when synthetic data generation either facilitates or impedes model development. Initial theoretical explorations have started to address these gaps. For instance, \cite{shumailov2024ai} and \cite{alemohammadself} demonstrated model collapse when models were trained exclusively on synthetic data, using simplified Gaussian models to illustrate this issue. In a more detailed theoretical study, \cite{bertrandstability} derived upper bounds on parameter deviations for likelihood-based models in STLs, establishing convergence under strict statistical and optimization assumptions. Meanwhile, \cite{futowards} relaxed these assumptions and provided bounds on the divergence between synthetic and real data distributions for a simplified diffusion model.


However, existing theoretical research lacks a unified framework and has yet to thoroughly investigate the generalization error of STLs. Additionally, current studies often overlook the role of model architectures in preventing model collapse. Moreover, the behavior of transformers within STLs remains largely unexamined, leaving significant theoretical gaps in the literature. Notably, there is limited exploration of the theoretical trade-offs introduced by synthetic data augmentation. This paper aims to address these gaps with the following contributions:

1. \textbf{Theoretical Generalization Framework:} This paper fills a gap in prior research by being the first to establish generalization error bounds. The key innovation, recursive stability, is introduced to address the core theoretical challenges, specifically the complex recursive structures and the non-i.i.d. nature of the data. Moreover, we demonstrate that the generalization error converges under the following conditions: (1) the generative model satisfies recursive stability, and (2) the proportion of real data is maintained at a non-negligible constant level, thus preventing model collapse.

2. \textbf{Application to Transformers in In-Context Learning:} This paper is the first to extend the theoretical framework to transformer models in in-context learning. We prove that transformers in this setting satisfy recursive stability with a constant-level proportion of real data, controlling output differences in STLs under small perturbations to the initial dataset. Consequently, we show that the generalization error is bounded by $\mathcal{O}(n^{-1}\log^2(n) + n^{-1/2}\log(n) + n^{-1/4})$.

3. \textbf{Trade-off Analysis of Synthetic Data Augmentation:}  We investigate the trade-off in synthetic data augmentation. By employing decomposition techniques, we demonstrate that while synthetic data improves the generalization performance of each generation on mixed datasets, it concurrently exacerbates distribution divergence across successive generations. Our theoretical findings further show that the optimal size of synthetic augmentation increases as the size of real dataset decreases.

\begin{figure}[t] \label{figure_selfconsuming}
\vskip 0.2in
\begin{center}
\centerline{\includegraphics[width=\columnwidth]{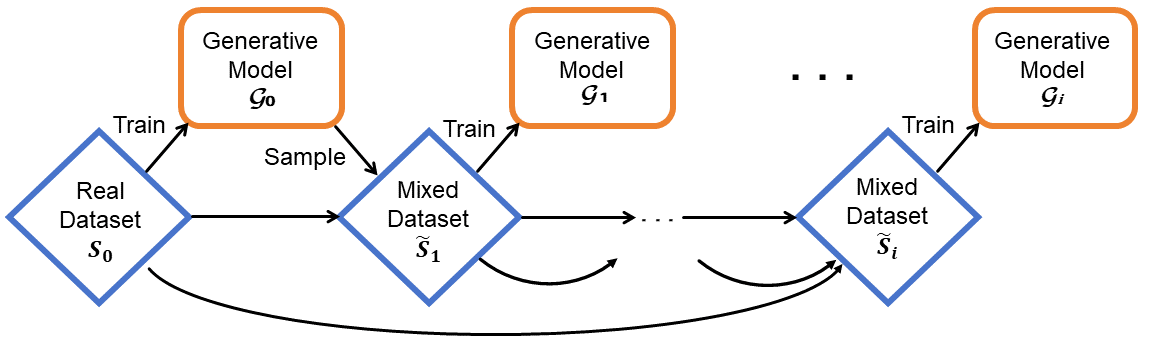}}
\caption{Self-consuming Training Loops: The initial model $\mathcal{G}_0$ is trained on the real dataset $S_0$. For generation $1 \leq j \leq i$, the model $\mathcal{G}_j$ is trained on the mixed dataset $\widetilde{S}_j$. 
}
\label{icml-historical}
\end{center}
\vskip -0.2in
\end{figure}

\section{Related Work}
This section reviews STLs research and algorithmic stability studies.

\textbf{Self-consuming Training Loops}. Recent research has increasingly focused on generative models trained within STLs \citep{shumailov2024ai}, with much of the analysis conducted from an empirical perspective \citep{martinez2023towards}. For example, \citet{shumailov2024ai, briesch2023large} observe a decline in diversity in language models when a portion of the model's outputs is recursively used as inputs. 
Additionally, \citet{wyllie2024fairness} highlights that recursive training on synthetic data amplifies biases, resulting in significant fairness concerns. To mitigate model collapse, some studies suggest incorporating real data into the training process \citep{alemohammadself}, expanding the size of synthetic datasets \citep{dohmatobtale,gerstgrasser2024is,dohmatob2024model,feng2024beyond1}, or providing guidance during the generation process \citep{gillmanself,alemohammad2024self,feng2024beyond2}.


While empirical research has extensively explored STLs of generative models, theoretical studies on this process remain relatively sparse \citep{kanabar2025minimax,seddik2024bad,marchi2024heat,gerstgrasser2024is,zhu2024synthesize,tao2024survey}. Notably, \citet{shumailov2024ai} and \citet{alemohammadself} offer initial theoretical insights by analyzing a simplified Gaussian model. In a more comprehensive analysis, \citet{bertrandstability} derive upper bounds on parameter deviations between those obtained within a STL and the optimal values, relying on assumptions about statistical and optimization error bounds. In contrast, \citet{futowards} propose bounds on the divergence between synthetic and real-world data distributions, without such assumptions. However, current research lacks a unified theoretical framework that accounts for the influence of different model architectures and does not provide generalization error bounds for STLs, thus failing to rigorously establish the conditions that guarantee the prevention of model collapse. Furthermore, the behavior of transformers within STLs remains unexplored, leaving substantial theoretical gaps.

\textbf{Algorithmic stability}. Algorithmic stability ensures generalization bounds independent of model capacity. A key measure, uniform stability, was introduced by \cite{bousquet2002stability} and has been instrumental in analyzing the generalization behavior of regularization methods. This measure was later extended to SGD \citep{hardt2016train}, including non-convex and non-smooth settings \citep{charles2018stability,bassily2020stability,lei2023stability}. Recent work shows that uniform stability can also provide near-optimal bounds with high probability \citep{feldman2019high,bousquet2020sharper,klochkov2021stability,li2021high,wang2024generalization}.


Building on these foundations, recent research has focused on stability in more complex, non-i.i.d. settings. A common approach models data from stationary and mixing sequences \citep{Doukhan1994,yu1994rates}, where weakening dependencies allow stability bounds through mixing coefficients \citep{mohri2010stability,he2016stability,fu2023sharper}. However, estimating these coefficients remains challenging. Additionally, some studies \citep{zheng2023toward} address non-i.i.d. data by leveraging conditional independence properties. Nonetheless, current methodologies struggle with the complexities of STLs, as the non-i.i.d. nature of mixed datasets, where each generation's data is influenced by previous generations, presents unresolved challenges for stability frameworks.

\begin{remark}
 Building on previous challenges, our work advances this area by developing a more comprehensive theoretical framework for analyzing generative models within STLs. Specifically, we present the first generalization error bound by addressing the additional complexity arising from the non-i.i.d. nature of mixed datasets. To address this, we propose the key innovation of recursive stability, which quantifies error propagation across generations of synthetic data. Moreover, we are the first to extend this theoretical framework to transformers, explicitly utilizing error decomposition to illustrate the trade-off introduced by augmenting datasets with synthetic data.
\end{remark}

\section{Preliminaries}
In this section, we begin by formally describing the training process of generative models in STLs, then introduce algorithmic stability with a focus on uniform stability, and finally define recursive stability to address the challenges specific to STLs.

\subsection{Generative Models within Self-consuming Training Loops}
Generative models have made significant strides in producing highly realistic data, such as images and text, which are frequently shared online and often indistinguishable from real content. Meanwhile, the supply of real data has nearly been exhausted. Consequently, deep generative models increasingly rely on synthetic data, either unintentionally \citep{schuhmann2022laion} or intentionally \citep{huang2022large}. This reliance creates a recursive cycle where successive generations are trained on mixed datasets of real and synthetic data, a process known as an STL, as shown in Figure \ref{figure_selfconsuming}.

More concretely, we explore a stochastic process that evolves through sequential generations. In an STL, we start with an initial dataset $S_0$, consisting of real data points $\boldsymbol{z} \in \mathcal{Z}$, sampled from the original real distribution $\mathcal{D}_0$. The initial generative model $\mathcal{G}_0$ is trained on this real dataset $S_0$, producing the first generation synthetic dataset $S_1$, whose distribution is denoted as $\mathcal{D}_1$. Next, the real dataset $S_0$ is combined with the synthetic dataset $S_1$ in a certain proportion to form a new mixed dataset $\widetilde{S}_1$, with distribution $\widetilde{\mathcal{D}}_1$. The next generation generative model $\mathcal{G}_1$ is then trained on this mixed dataset $\widetilde{S}_1$. Moving forward, for each subsequent generation $1\leq j \leq i$, the mixed dataset $\widetilde{S}_j$ is composed of real data and synthetic data from previous generations. The generative model $\mathcal{G}_j$ is trained on $\widetilde{S}_j$, producing the synthetic dataset $S_{j+1}$. This STL proceeds iteratively until the maximum generation, denoted as $i$, is reached.


\subsection{Algorithmic Stability}
Algorithmic stability measures the impact of modifying or removing a small number of examples from the training set on the resulting model, a key concept in statistical learning theory \citep{bousquet2002stability}. Its primary advantage lies in providing generalization bounds independent of model capacity. Among various stability notions \citep{shalev2010learnability}, we focus on uniform stability, the most widely studied form. Let $S$ and $S'$ be two datasets differing by one point. Then, we formally define uniform stability as follows:

\begin{definition}(Uniform Stability \citep{bousquet2002stability}). Algorithm $\mathcal{A}$ is uniformly $\beta_n$-stable with respect to the loss function $\ell$ if the following holds
$$
\forall S,\ S' \in \mathcal{Z}^n,\ \forall \boldsymbol{z} \in \mathcal{Z},\ \sup _{\boldsymbol{z}}\left|\ell(\mathcal{A}(S), \boldsymbol{z})-\ell\left(\mathcal{A}\left(S'\right), \boldsymbol{z}\right)\right| \leq \beta_n.
$$
\end{definition}
Traditional notions of stability have predominantly been studied in the context of learning algorithms, such as SGD \citep{lei2020fine}. More recently, there has been significant progress in extending the concept of stability to generative models \citep{farnia2021train,zheng2023toward,li2023transformers}. Building on these advancements, we propose \textit{recursive stability} to specifically address generative models within STLs. This new stability measure is designed to quantify the differences in a generative model’s outputs after multiple
generations of recursive training when small perturbations are applied to the initial real dataset. The formal definition of recursive stability is presented below.

\begin{definition}(Recursive Stability)\label{iterative stability}
Let \(S_0\) represent the original real dataset, and \(S'_0\) denote a dataset differing from \(S_0\) by a single example. A generative model \(\mathcal{G}\) is said to be recursively \(\gamma_n^{i,\alpha}\)-stable with respect to the distance measure \(d\) after the \(i\)-th generation of STLs, where the ratio of real to synthetic data is set to \(\alpha\), if the following condition holds:  
\[
\forall S_0, S'_0 \in \mathbb{Z}^n, \quad d\left(\mathcal{G}^{(i)}(S_0), \mathcal{G}^{(i)}(S'_0)\right) \leq \gamma_n^{i,\alpha}.
\]  
where $\mathcal{G}^{(i)}$ denotes the output of the generative model at the $i$-th generation in the STLs. The distance measure $d$ quantifies the deviation between the outputs generated from inputs $S_0$ and $S_0'$ across STLs. Specifically, $d$ can be defined using Total Variation (TV) distance, Kullback-Leibler (KL) divergence, or various norms (e.g., $\ell_2$ norm), allowing flexibility in assessing the differences in generated outputs.
\end{definition}

\section{General Theoretical Results}

In this section, we present a general framework for analyzing generalization error. Moving beyond traditional analyses of parameter changes \citep{bertrandstability} and distributional discrepancies \citep{futowards}, we focus on evaluating the utility of synthetic data after recursive training \citep{hittmeir2019utility,xu2023utility}. Specifically, we examine the behavior of a uniformly stable learning algorithm $\mathcal{A}$ trained on the mixed dataset $\widetilde{S}_i$ in the $i$-th generation. Our goal is to study the generalization error of the hypothesis $\mathcal{A}(\widetilde{S}_i)$. Formally, we aim to bound $|R_{\mathcal{D}_0}(\mathcal{A}(\widetilde{S}_i)) - \widehat{R}_{\widetilde{S}_i}(\mathcal{A}(\widetilde{S}_i))|$, where $R_{\mathcal{D}_0}(\mathcal{A}(\widetilde{S}_i)) = \mathbb{E}_{\boldsymbol{z} \sim \mathcal{D}_0}[\ell(\mathcal{A}(\widetilde{S}_i), \boldsymbol{z})]$ represents the population risk of $\mathcal{A}(\widetilde{S}_{i})$ under the real distribution $\mathcal{D}_0$, and $\widehat{R}_{\widetilde{S}_i}(\mathcal{A}(\widetilde{S}_i)) = \frac{1}{n} \sum_{\boldsymbol{z}_i \in \widetilde{S}_i} \ell(\mathcal{A}(\widetilde{S}_i), \boldsymbol{z}_i)$ denotes the empirical risk on the mixed dataset. To derive this bound, we first decompose the generalization error as follows.
\begin{align}
\left|R_{\mathcal{D}_0}(\mathcal{A}(\widetilde{S}_i))-\widehat{R}_{\widetilde{S}_i}(\mathcal{A}(\widetilde{S}_i))\right| \leq \underbrace{\left|R_{\mathcal{D}_0}(\mathcal{A}(\widetilde{S}_i))-R_{\widetilde{\mathcal{D}}_i}(\mathcal{A}(\widetilde{S}_i))\right|}_{\text {Cumulative distribution shift across generations}}+\underbrace{\left| R_{\widetilde{\mathcal{D}}_i}(\mathcal{A}(\widetilde{S}_i))-\widehat{R}_{\widetilde{S}_i}(\mathcal{A}(\widetilde{S}_i)) \right|}_{\text {Generalization error on mixed distributions}}. \notag
\end{align}
The first term captures the accumulation of error and distribution divergence over multiple generations within the STLs. This heavily depends on the capacity of the generative model to preserve distributional fidelity across generations, requiring recursive techniques to manage error propagation. The second term reflects the generalization performance of the learning algorithm on the non-i.i.d. mixed dataset, where synthetic data points are influenced by the initial real dataset. Drawing on \cite{zheng2023toward}, we observe that while $S_0$ satisfies the i.i.d. assumption, the synthetic datasets $S_i$ follow a conditional i.i.d. assumption given $S_0$. Leveraging this, along with moment bounds and concentration inequalities, we address the challenge of bounding the second term and managing dependencies within the STLs. We now present the following result.
\begin{theorem}[General Generalization Bound]\label{theorem_generalization}Assume that $\mathcal{A}$ is a $\beta_n$-uniformly stable learning algorithm and the loss function $\ell$ is bounded by $M$. Let $n$ represent the sample size of the mixed dataset $\widetilde{S}_j$, defined as $\widetilde{S}_j=\alpha S_0+(1-\alpha) S_j$ for $1 \leq j \leq i$, where $0<\alpha\leq 1$ denotes the proportion of real data. Assume further that the generative model $\mathcal{G}$ is recursively $\gamma_n^i$-stable, and the TV distance for each generation $T V(\widetilde{\mathcal{D}}_j, \mathcal{D}_{j+1})$ is of the same order, denoted by $d_{\mathrm{TV}}(n)$. Then, for any $\delta \in(0,1)$, with probability at least $1-\delta$, the following holds:
\begin{align}
&\left|R_{\mathcal{D}_0}(\mathcal{A}(\widetilde{S}_i))-\widehat{R}_{\widetilde{S}_i}(\mathcal{A}(\widetilde{S}_i))\right| \lesssim \gamma_n^i \alpha M\log (n\alpha)\log(1/\delta)+ n^{-1 / 2}M \sqrt{\log 1/\delta} \notag\\
&\quad+\beta_n\left(\log n \log (1/\delta)+\alpha\sqrt{(1-\alpha)n\log (1/\delta)}\right)+d_{\mathrm{TV}}(n)M\left(1-(1-\alpha)^i\right) \alpha^{-1}, \label{mainthero_1}
\end{align}
where $\gamma_n^i= \sup_{j}TV(\mathcal{D}_{i}^{n(1-\alpha)}(S_{0}'),\mathcal{D}_{i}^{n(1-\alpha)}(S_{0}))$, with $S_0$ and $S_0'$ representing two real datasets of size $n$, differing by only a single data point.
\end{theorem}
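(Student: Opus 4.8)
The plan is to bound the two terms in the displayed decomposition separately, then combine them with a union bound. For the first term, the cumulative distribution shift, I would introduce the chain of intermediate distributions $\mathcal{D}_0, \widetilde{\mathcal{D}}_1, \ldots, \widetilde{\mathcal{D}}_i$ and telescope. Writing $\widetilde{\mathcal{D}}_j = \alpha \mathcal{D}_0 + (1-\alpha)\mathcal{D}_j$, the TV distance $TV(\mathcal{D}_0, \widetilde{\mathcal{D}}_j)$ is at most $(1-\alpha)\,TV(\mathcal{D}_0, \mathcal{D}_j)$, and each $TV(\mathcal{D}_0,\mathcal{D}_{j+1})$ can be bounded recursively by $TV(\mathcal{D}_0,\widetilde{\mathcal{D}}_j) + TV(\widetilde{\mathcal{D}}_j,\mathcal{D}_{j+1}) \le (1-\alpha)TV(\mathcal{D}_0,\mathcal{D}_j) + d_{\mathrm{TV}}(n)$ by the triangle inequality. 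Solving this linear recursion gives $TV(\mathcal{D}_0, \mathcal{D}_i) \lesssim d_{\mathrm{TV}}(n)\big(1-(1-\alpha)^i\big)\alpha^{-1}$, and since $|\ell|\le M$, the difference in population risks is at most $M$ times the TV distance between $\mathcal{D}_0$ and $\widetilde{\mathcal{D}}_i$, yielding the last term of \eqref{mainthero_1}.

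For the second term, the generalization error on the mixed distribution $\widetilde{\mathcal{D}}_i$, I would split $\widetilde{S}_i$ into its real part (the $n\alpha$ points from $S_0$) and its synthetic part (the $n(1-\alpha)$ points from $S_i$), and handle each by a uniform-stability concentration argument in the spirit of Feldman--Vondrák / Bousquet et al. On the real part the data are genuinely i.i.d., so the standard high-probability bound for $\beta_n$-stable algorithms gives a contribution of order $\beta_n \log n \log(1/\delta) + n^{-1/2}M\sqrt{\log(1/\delta)}$. On the synthetic part I would invoke the conditional-i.i.d. structure noted after the decomposition: conditionally on $S_0$, the points of $S_i$ are i.i.d. from $\mathcal{D}_i$, so the same machinery applies conditionally; the extra subtlety is that perturbing one point of $S_0$ perturbs the whole conditional law of $\mathcal{A}(\widetilde{S}_i)$, and this is exactly where recursive stability $\gamma_n^i$ enters — it controls, via the TV bound $\gamma_n^i = \sup_j TV(\mathcal{D}_i^{n(1-\alpha)}(S_0'),\mathcal{D}_i^{n(1-\alpha)}(S_0))$, how much the synthetic sample's distribution can move, contributing the $\gamma_n^i \alpha M \log(n\alpha)\log(1/\delta)$ term. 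The cross term between the real and synthetic halves, together with the variance of the empirical average over the mixed sample, produces the $\alpha\sqrt{(1-\alpha)n\log(1/\delta)}$ factor multiplying $\beta_n$.

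The main obstacle I anticipate is the second term, specifically making the conditional-i.i.d. concentration argument rigorous while simultaneously tracking the dependence of $\mathcal{A}(\widetilde{S}_i)$ on $S_0$ through both channels — directly (as $\alpha n$ of the training points) and indirectly (through the generative chain that produced $S_i$). One cannot simply condition on $S_0$ and quote an i.i.d. bound, because the hypothesis whose risk we measure is itself a function of that conditioning variable; the clean way is to first establish a McDiarmid-type bounded-differences estimate for the map $S_0 \mapsto R_{\widetilde{\mathcal{D}}_i}(\mathcal{A}(\widetilde{S}_i)) - \widehat R_{\widetilde{S}_i}(\mathcal{A}(\widetilde{S}_i))$ — where the per-coordinate sensitivity is governed by $\beta_n$ for the real coordinates and by $\gamma_n^i$ (through the TV coupling) for the effect on the synthetic block — and then combine it with a conditional concentration bound for the fluctuation of the empirical average around its conditional mean. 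Assembling these pieces, bounding each sensitivity term, and checking that the logarithmic factors come out as stated is the technical heart of the argument; the remaining steps are routine applications of the triangle inequality and a final union bound over the two events.
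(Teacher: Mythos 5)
Your proposal follows essentially the same route as the paper's own proof: telescope TV distances across generations with geometric decay $(1-\alpha)^j$ to obtain the cumulative-shift term, then split the mixed-sample generalization error into a real block and a synthetic block, each handled by Bousquet-style moment/McDiarmid bounds in which the per-coordinate sensitivity is $\beta_n$ for the direct effect of a perturbation and $\gamma_n^i$ (through a TV coupling of the conditional law of the synthetic block) for the induced effect through the generative chain. The paper resolves the ``main obstacle'' you flag in exactly the way you anticipate: it centers around the $S_{i,1-\alpha}$-conditional mean, shows the resulting functions $h_j$ of $S_{0,\alpha}$ have bounded difference $2\beta_n+2M\gamma_n^i$, and then invokes the moment bound of Lemma~\ref{theorem_moment} together with Lemma~\ref{lemma_highprobability} to pass to high probability.
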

\begin{remark}\textbf{Recursive Stability in STLs}. 
In Theorem \ref{theorem_generalization}, the recursive stability parameter is quantified using the TV distance to measure the divergence between the distributions of the $n(1-\alpha)$ synthetic data points generated by the model $\mathcal{G}_i$ at the $i$-th generation. Notably, the concept of recursive stability, introduced in Definition \ref{iterative stability}, is adaptable to various metrics, making it applicable across different types of generative models. In Theorem \ref{therorem_stability of transformer}, the recursive stability parameter for transformers is instead defined using the $\ell_2$ norm between tokens, allowing this concept to be generalized to a broader range of model architectures.

Moreover, Theorem \ref{theorem_generalization} demonstrates that generative models with higher recursive stability exhibit better performance after undergoing the STL. Specifically, the results indicate that the convergence rate of recursive stability parameter is at least faster than $\mathcal{O}(1 / \log n)$, which is a relatively mild condition. Furthermore, Theorem \ref{therorem_stability of transformer} shows that, under mild assumptions, the recursive stability parameter for transformers in in-context learning settings achieves a convergence rate of $\gamma_n^i = \mathcal{O}(1 / n)$ when measured by the $\ell_2$ norm between tokens.

\end{remark}
\begin{remark}\textbf{Effect of Real Data Proportion on Error Control}.\label{remark_real} Previous experimental results \citep{shumailov2024ai,alemohammadself} have demonstrated that incorporating real data can mitigate model collapse and help control errors. This remark focuses on exploring the effect of the real data proportion $\alpha$ on the generalization error within the STLs. As shown in Theorem \ref{theorem_generalization}, the real data proportion $\alpha$ plays a significant role in the cumulative distribution shift across generations, specifically in the term $2 M\left(1-(1-\alpha)^i\right) \alpha^{-1} d_{\mathrm{TV}}(n)$.

When \(\alpha \to 0\), we observe that \(\frac{(1 - (1 - \alpha)^i)}{\alpha} \to i\), leading to a linear accumulation of errors due to the Distribution Shift, making it increasingly challenging to control the overall error. This observation aligns with the theoretical results reported in \cite{shumailov2024ai,dohmatob2024model, futowards}. However, it is important to note that the conditions on $\alpha$ for controlling this term are not strict. In fact, as long as $\alpha$ remains at a non-negligible constant level, the expression $\left(1-(1-\alpha)^i\right) \alpha^{-1}$ remains bounded, effectively controlling the error. This aligns with theoretical intuition: when $\alpha$ is too small, the mixed dataset contains insufficient real data, resulting in a more severe distribution shift.

Moreover, the proportion of real data $\alpha$ also impacts the generalization error on mixed distributions, primarily through its effect on the recursive stability parameter $\gamma_n^i$. As $\alpha$ increases, the generative model becomes more recursively stable. We will further explore the influence of $\alpha$ on the recursive stability parameter $\gamma_n^i$ for specific generative models, such as transformers, in Theorem \ref{theo_transformer_generalization}, particularly in Remark \ref{remark_stability of transformer}.
    
\end{remark}

\begin{remark}\textbf{Convergence Rate of Uniform Stability Parameter}. With respect to the uniform stability parameter $\beta_n$, we observe from the third term on the right-hand side of inequality \ref{mainthero_1} that the convergence rate of $\beta_n$ must be at least $\mathcal{O}(1 / \sqrt{n})$ to adequately control the error. This is a relatively mild requirement.

For example, in the case of widely used algorithms such as SGD, it has been shown that the uniform stability parameter $\beta_n$ converges at a rate of $\mathcal{O}(\log (n) / n)$ under the assumptions of Lipschitz continuity and smoothness of the loss function \citep{zhang2022stability}. Additionally, for regularization-based algorithms, such as kernel regularization schemes and the Minimum Relative Entropy (MRE) algorithm, it has been demonstrated that $\beta_n$ can achieve a convergence rate of $\mathcal{O}(1 / n)$ under certain conditions \citep{bousquet2002stability}.

\end{remark}

\begin{remark}\textbf{Convergence of the Distribution Shift Term $d_{\mathrm{TV}}(n)$}.\label{remark_distrubbutionshift}
Regarding the convergence of the term $2 M\left(1-(1-\alpha)^i\right) \alpha^{-1} d_{\mathrm{TV}}(n)$, as discussed in Remark \ref{remark_real}, when $\alpha$ remains a non-negligible constant, attention turns to the distribution shift term $d_{\mathrm{TV}}(n)$. This term critically depends on the generative model's capacity and quantifies the divergence between the learned distribution and the input distribution in each generation. 

Theoretical studies have provided various convergence rates for $d_{\mathrm{TV}}(n)$ across different generative models. For instance, in diffusion models, $d_{\mathrm{TV}}(n)$ has been shown to converge at a rate of $\mathcal{O}\left(1 / n^{1 / 4}\right)$ \citep{futowards}. Similarly, for GANs, the convergence rate is also $\mathcal{O}\left(1 / n^{1 / 4}\right)$ \citep{liang2021well}. More generally, by applying Pinsker's inequality to relate KL divergence and TV distance, the convergence rates for other models, such as Bias potential models and Normalizing flows, have been explored in previous works \citep{yang2022mathematical}. Additionally, we will further examine the behavior of transformer models in Theorem \ref{theo_transformer_generalization}, demonstrating the flexibility of our theoretical framework across a wide range of generative models.


\end{remark}

\begin{remark} \textbf{Comparision with Previous Works}.
In the realm of theoretical research on the STL, where models are recursively trained on the synthetic data they generate, the foundational work was introduced by \cite{shumailov2024ai} and \cite{alemohammadself}. They provided the initial theoretical definitions and analyzed the behavior of a simplistic multivariate Gaussian toy model in such loops. However, their analyses were limited to basic theoretical insights and lacked in-depth exploration of more complex generative models.

Recent advancements in this field have primarily come from \cite{bertrandstability} and \cite{futowards}. \cite{bertrandstability} established an upper bound on the deviation of likelihood-based model output parameters from the optimal ones, denoted as $\left\|\theta_{i}-\theta^*\right\|$. This was achieved by making direct assumptions on the upper bounds of both statistical and optimization errors in generative models, as outlined in their Assumption 3. In contrast, \cite{futowards} derived bounds on the TV distance, addressing the distribution divergence between the synthetic data distributions produced by future models and the original real data distribution, with a specific focus on diffusion models. Our work makes significant theoretical advancements over both \cite{bertrandstability} and  \cite{futowards} in several key aspects:

1. \textbf{Innovative Concept of Recursive Stability.} A central technical contribution of our work is the extension of the traditional notion of algorithmic stability. We define recursive stability, a crucial factor for controlling error propagation across generations. This novel concept tackles the theoretical challenges posed by non-i.i.d. data and recursive structures within STLs, while also incorporating the influence of model architectures into the generalization error. Moreover, recursive stability serves as a new measure for assessing the stability of generative models within STLs. In Theorem \ref{therorem_stability of transformer}, we further establish an upper bound on the recursive stability parameter for transformers under mild conditions, underscoring the broad applicability and robustness of our framework.

2. \textbf{Establishing the First Generalization Error Bound for STLs.} While \cite{bertrandstability} primarily focused on parameter deviations in generative models and \cite{futowards} concentrated on distribution divergence, our work emphasizes the utility of the generated data produced by STLs. Specifically, by utilizing recursive stability, we present the first generalization error bound that quantifies the gap between the population risk on the initial real data distribution $\mathcal{D}_0$ and the empirical risk of the hypothesis $\mathcal{A}(\widetilde{S}_i)$, generated by applying learning algorithms to the synthetic data produced after multiple generations of STLs. This introduces a new layer of complexity compared to prior work, as it necessitates handling not only the distribution shifts within STLs but also the challenges arising from the non-i.i.d. nature of the mixed datasets, where each generation’s data is influenced by all preceding generations.

3. \textbf{A More General Framework Accounting for Model Structure.} Our proposed theoretical framework is more comprehensive than previous studies. \cite{bertrandstability} restricted their analysis to simplified likelihood-based generative models, while \cite{futowards} focused specifically on diffusion models. Importantly, neither of their theoretical results accounted for the impact of different model architectures. In contrast, as discussed in Remark \ref{remark_distrubbutionshift}, our framework explicitly incorporates the effects of varying model structures, thereby extending its applicability to a broader range of generative models. Notably, we are the first to extend the theory of SLTs to transformer models, further broadening the scope of our approach across diverse generative model architectures.

4. \textbf{Comprehensive Collapse Prevention Through Recursive Stability}. In addition to the existing theoretical work, which primarily analyzes conditions to avoid model collapse based on the proportion of real data (e.g., \cite{bertrandstability,futowards}), our work extends these analyses by considering the impact of model architecture. Specifically, Theorem \ref{theorem_generalization} demonstrates that under a recursive stability condition and a non-negligible constant level of real data, model collapse can be avoided across a variety of model architectures. This analysis offers broader conditions for preventing collapse by incorporating recursive stability, deepening the understanding of how model architecture affects training robustness.

\end{remark}

\begin{remark}\textbf{Proof Sketch of Theorem \ref{theorem_generalization}}. We first decompose the generalization error of STLs into two distinct terms: (1) the cumulative distribution shift across generations, and (2) the generalization error on the mixed dataset.

\textbf{Cumulative Distribution Shift:} This term measures the shift between the real dataset $\mathcal{D}_{0}$ and the mixed distribution $\mathcal{D}_i$ after the $i$-th generation. Using the TV distance to quantify the shift introduced by the generative model, we bound the difference as:
$$\left|R_{\mathcal{D}_0}(\mathcal{A}(\widetilde{S}_i))-R_{\widetilde{\mathcal{D}}_i}(\mathcal{A}(\widetilde{S}_i))\right|\leq(1-\alpha)\left|R_{\mathcal{D}_0}(\mathcal{A}(\widetilde{S}_i))-R_{\widetilde{\mathcal{D}}_{i-1}}(\mathcal{A}(\widetilde{S}_i))\right|+2(1-\alpha)MTV(\widetilde{\mathcal{D}}_{i-1},\mathcal{D}_i).$$
By leveraging the recursive structure of the generative process, this cumulative distribution shift can be bounded across all generations as:
$$|R_{\mathcal{D}_0}(A(S_i))-R_{\mathcal{D}_i}(A(S_i))|\leq2M\left(1-(1-\alpha)^i\right)\alpha^{-1}d_{\mathrm{TV}}(n).$$
\textbf{Generalization Error on the Mixed Dataset:} The second term quantifies the generalization error when training on the mixed dataset $\widetilde{S}_{i}$, which consists of both real and synthetic data. Our goal is to establish a moment bound on the generalization error, which can be decomposed as follows:
$$\|\alpha R_{\mathcal{D}_0}(\mathcal{A}(\widetilde{S}_i))-\frac{1}{n}\sum_{\boldsymbol{z}_i\in S_{0,\alpha}}\ell(\mathcal{A}(\widetilde{S}_i),\boldsymbol{z}_i)\|_p+\|(1-\alpha)R_{\mathcal{D}_i}(\mathcal{A}(\widetilde{S}_i))-\frac{1}{n}\sum_{\boldsymbol{z}_i\in S_{i,1-\alpha}}\ell(\mathcal{A}(\widetilde{S}_i),\boldsymbol{z}_i)\|_p.$$
In this context, \(S_{0,\alpha}\) represents 
a proportion \(\alpha\) of the \(n\) data points in \(S_0\), leading to a total 
of \(n \times \alpha\) data points. Similarly, \(S_{i,1-\alpha}\) denotes a 
subset of the synthetic dataset \(S_i\), where \(S_{i,1-\alpha} \subseteq S_i\) 
and its size is \((1 - \alpha) \times |S_i|\).  For each term, we leverage the uniform stability $\beta_n$ of the learning algorithm $\mathcal{A}$ and the
recursive stability $\gamma_n^i$ of the generative model to address the non-i.i.d. nature of the mixed dataset. The mixed dataset exhibits conditional independence \citep{zheng2023toward}, with synthetic data conditioned on the initial real dataset $S_0$, allowing the application of recursive techniques to derive the moment bound. Subsequently, Lemma \ref{theorem_moment} and Lemma \ref{lemma_highprobability} are utilized to derive the high-probability bound for the final result.
\end{remark}

\section{Theoretical Analysis of Transformers in In-Context Learning}\label{section_transformer}
In this section, we first present the transformer in in-context learning (ICL) and its settings within SLTs in Section \ref{subsection_tra1}. In Section \ref{subsection_tra2}, we prove that it satisfies recursive stability, followed by the derivation of the generalization error bound for transformers in ICL in Section \ref{subsection_tra3}. Finally, in Section \ref{subsection_tra4}, we explore the scenario of synthetic data augmentation and investigate the associated trade-offs.
\subsection{Settings of Transformer in In-context Learning}\label{subsection_tra1}

\textbf{In-Context Learning Setting}. ICL involves a transformer model processing a sequence of input-output examples to perform inference without parameter updates. Unlike traditional supervised learning, where a model is trained on a fixed dataset and then makes predictions, ICL allows the model to adapt on-the-fly to new queries based on the provided examples. We denote a prompt, containing $n$ in-context examples followed by the ($n+1$)-th query input, as
$
S_{0}=\left(\boldsymbol{z}_1, \boldsymbol{z}_2, \ldots, \boldsymbol{z}_{n}, \boldsymbol{x}_{n+1}\right),
$
where $\left(\boldsymbol{z}_i\right)_{i=1}^n=\left(\boldsymbol{x}_i, \boldsymbol{y}_i\right)_{i=1}^n \in \mathcal{Z}=\mathcal{X} \times \mathcal{Y}$ represents i.i.d. in-context samples, and $\boldsymbol{x}_{n+1} \in \mathcal{X}$ is the query input whose label we want to predict. The transformer model, denoted as $\mathrm{TF}(\cdot)$, takes the prompt $S_0$ as input and outputs the predicted label $\hat{\boldsymbol{y}}_{n+1}$ for the query $\boldsymbol{x}_{n+1}$: $\hat{\boldsymbol{y}}_{n+1}=\mathrm{TF}(S_0)$.

\textbf{Recursive Data Generation in STLs with ICL}. We extend the traditional ICL setting to an STL, where the transformer recursively generates new data using its own ICL predictions. Starting with an initial real dataset $S_0$, this serves as the initial real in-context examples for the transformer. The process begins by sampling the first generation queries $\left\{\boldsymbol{x}_{1, j}\right\}_{j=1}^n$ i.i.d. from the input distribution $\mathcal{X}$. Each query $\boldsymbol{x}_{1, j}$ is incorporated into the in-context examples from $S_0$ as a new query $\boldsymbol{x}_{0,n+1}$, and the transformer predicts the corresponding label $\hat{\boldsymbol{y}}_{1, j}$. This produces a synthetic dataset $S_1$, consisting of inputs $\left\{\boldsymbol{x}_{1, j}\right\}_{j=1}^n$ and their predicted labels $\left\{\hat{\boldsymbol{y}}_{1, j}\right\}_{j=1}^n$. A mixed dataset $\widetilde{S}_j$ is then formed and used as the in-context examples for the next generation. This process continues, with each generation producing a new synthetic dataset $S_{j+1}$ based on the updated mixed dataset $\widetilde{S}_j$.

\subsection{Recursive Stability of In-Context Learning with Transformers}\label{subsection_tra2}
In this section, we demonstrate that transformers exhibit recursive stability within the ICL framework. Following the ICL setting from \cite{li2023transformers}, we show that the model effectively controls error propagation from perturbations in the initial real dataset, ensuring stability across the STLs.
\begin{theorem}\label{therorem_stability of transformer} Let $S_{0}, S_0^{\prime}$ be two initial real datasets that only differ at the inputs $\boldsymbol{z}_j=\left(\boldsymbol{x}_j, \boldsymbol{y}_j\right)$ and $\boldsymbol{z}_j^{\prime}=$ $\left(\boldsymbol{x}_j^{\prime}, \boldsymbol{y}_j^{\prime}\right)$ where $1\leq j\leq n$. Assume the inputs and labels lie within the unit Euclidean ball in $\mathbb{R}^d$. Represent the prompts $S_{0}$ and $S_0^{\prime}$ as matrices $\boldsymbol{Z}_0, \boldsymbol{Z}_0^{\prime} \in \mathbb{R}^{(2n+1) \times d}$. Let $\mathrm{TF}(\cdot)$ be an $L$-layer transformer. Given $\boldsymbol{Z}_{0}$ as the initial input, the $k$-th layer applies MLPs and self-attention, producing the output:
$$
\left.\boldsymbol{Z}_{k}=\operatorname{Parallel\_\operatorname {MLPs}(ATTN}\left(\boldsymbol{Z}_{k-1}\right)\right) \text { where } \operatorname{ATTN}(\boldsymbol{Z}):=\operatorname{softmax}\left(\boldsymbol{Z} \boldsymbol{W} \boldsymbol{Z}^{\top}\right) \boldsymbol{Z} \boldsymbol{V}.
$$
Assume $\mathrm{TF}$ is normalized as $\|\boldsymbol{V}\| \leq 1,\|\boldsymbol{W}\| \leq B_W $ and $\operatorname{MLPs}$ obey $\operatorname {MLP}(\boldsymbol{z})=\operatorname{ReLU}(\boldsymbol{M} \boldsymbol{z})$ with $\|\boldsymbol{M}\| \leq 1$. Let $\mathrm{TF}$ output the last token of the final layer $\boldsymbol{Z}_{L}$ that corresponds to the query $\boldsymbol{x}_{j, n+1}$. Let $n$ represent the sample size of the mixed dataset $\widetilde{S}_j$, where $\widetilde{S}_j=\alpha S_0+(1-\alpha) S_j$ for $1 \leq j \leq i$. Then, we obtain:
\begin{align}
     \left\|\operatorname{TF}(\widetilde{S}_i)-\operatorname{TF}(\widetilde{S}_i^{\prime})\right\|_{\ell_2}\lesssim 
   (1-\alpha)^i \frac{\widetilde{B}_W^{(i+1)L}}{2n+1},\notag 
\end{align}
where $\widetilde{B}_W=\left(1+2 B_W\right) e^{2 B_W}$ and $\widetilde{S}_i^{\prime}$ denotes the mixed dataset at the $i$-th generation in the STL when the initial real dataset is $S_0'$. Additionally, if the measure $d$ for the recursive stability parameter in Definition \ref{iterative stability} is taken as the $\ell_2$ norm, then the recursive stability $\gamma_n^i \lesssim  (1-\alpha)^i \frac{\widetilde{B}_W^{(i+1)L}}{2n+1}$.    
\end{theorem}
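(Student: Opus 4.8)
\textbf{Proof proposal for Theorem \ref{therorem_stability of transformer}.}

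The plan is to track how a single-point perturbation in the initial real dataset propagates through two nested structures: the $L$ layers of a single transformer call, and the $i$ generations of the STL. First I would establish a one-call sensitivity bound: if two prompts $\boldsymbol{Z}_0, \boldsymbol{Z}_0'$ differ only in one token-row, how much can the output token corresponding to the query differ after $L$ layers? The key observation is that each layer is a composition of a self-attention map and parallel ReLU-MLPs, and under the stated normalizations ($\|\boldsymbol{V}\|\le 1$, $\|\boldsymbol{W}\|\le B_W$, $\|\boldsymbol{M}\|\le 1$, inputs in the unit ball) each of these is Lipschitz with a controllable constant. The softmax attention is the delicate piece: I would use the fact that $\operatorname{softmax}(\boldsymbol{Z}\boldsymbol{W}\boldsymbol{Z}^\top)$ is a row-stochastic matrix, bound the perturbation of its entries via the Lipschitzness of softmax (whose Jacobian has operator norm at most $1$) applied to the perturbation of the logits $\boldsymbol{Z}\boldsymbol{W}\boldsymbol{Z}^\top$, and then propagate through the subsequent multiplication by $\boldsymbol{Z}\boldsymbol{V}$. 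This should yield a per-layer amplification factor of the form $\widetilde{B}_W=(1+2B_W)e^{2B_W}$ — the $e^{2B_W}$ coming from bounding ratios of exponentials in the softmax when logits are bounded by $2B_W$ on the unit ball, and the $(1+2B_W)$ from combining the product-rule terms. Composing over $L$ layers gives a bound of order $\widetilde{B}_W^L/(2n+1)$ for one transformer call, where the $1/(2n+1)$ factor arises because a single perturbed in-context row contributes only a $1/(2n+1)$ fraction of the attention weight mass, so its influence on the aggregated (convex-combination) output token is correspondingly diluted. This dilution-by-$1/(2n+1)$ is exactly the mechanism identified in \cite{li2023transformers}, and I would invoke or adapt their in-context stability argument here.

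Next I would set up the recursion over generations. Write $\Delta_i := \|\operatorname{TF}(\widetilde{S}_i)-\operatorname{TF}(\widetilde{S}_i')\|_{\ell_2}$. The mixed dataset $\widetilde{S}_i = \alpha S_0 + (1-\alpha) S_i$ feeds into generation $i$; the real component $\alpha S_0$ differs between the two runs only in the single perturbed point (contributing an $O(1/(2n+1))$ term weighted by its prevalence), while the synthetic component $(1-\alpha) S_i$ is itself the output of generation $i-1$ applied to the previously perturbed mixed dataset, so its contribution is governed by $\Delta_{i-1}$, scaled by the $(1-\alpha)$ proportion and further contracted by the one-call factor $\widetilde{B}_W^L/(2n+1)$ (since $S_i$ consists of $n(1-\alpha)$-many such transformer-generated tokens each inheriting the perturbation). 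The resulting recursion has the schematic form $\Delta_i \lesssim (1-\alpha)\,\widetilde{B}_W^{L}\,\Delta_{i-1} + (\text{base term from the real part})$, with base case $\Delta_0 \lesssim \widetilde{B}_W^{L}/(2n+1)$. Unrolling this geometric-type recursion — and being slightly careful that the layer-exponent accumulates as $(i+1)L$ rather than $iL$ because generation $i$ adds its own $L$ layers on top of the depth-$iL$ effective composition from prior generations — yields the claimed bound $\Delta_i \lesssim (1-\alpha)^i \widetilde{B}_W^{(i+1)L}/(2n+1)$.

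Finally, the last sentence of the theorem is immediate: if the distance measure $d$ in Definition \ref{iterative stability} is the $\ell_2$ norm between the output tokens, then by definition $\gamma_n^i = d(\mathcal{G}^{(i)}(S_0),\mathcal{G}^{(i)}(S_0')) = \|\operatorname{TF}(\widetilde{S}_i)-\operatorname{TF}(\widetilde{S}_i')\|_{\ell_2}$, so the bound just proved transfers verbatim, giving $\gamma_n^i \lesssim (1-\alpha)^i \widetilde{B}_W^{(i+1)L}/(2n+1)$.

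\textbf{Main obstacle.} I expect the hardest part to be the one-call self-attention sensitivity bound with the precise constant $\widetilde{B}_W=(1+2B_W)e^{2B_W}$: one must carefully bound the perturbation of the softmax-weighted aggregation, separating (a) the change in attention weights (controlled via softmax-Lipschitzness and the $2B_W$ logit bound on the unit ball, producing the $e^{2B_W}$) from (b) the change in the values being aggregated (producing the $1+2B_W$ through a product-rule decomposition), all while keeping the crucial $1/(2n+1)$ dilution factor intact and verifying it survives the ReLU-MLP layers (which are $1$-Lipschitz, hence harmless) and does not degrade when composed across $L$ layers. A secondary subtlety is bookkeeping the generation recursion so that the exponent on $\widetilde{B}_W$ comes out as $(i+1)L$ and the $(1-\alpha)^i$ factor is tight rather than merely $(1-\alpha)$ to the first power; this requires correctly identifying that at each generation the perturbation is simultaneously diluted by the real-data proportion being only $(1-\alpha)$ of the relevant synthetic channel and contracted by a fresh transformer call.
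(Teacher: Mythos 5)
Your proposal follows essentially the same route as the paper's proof: establish a per-call Lipschitz bound through each attention$+$MLP layer with the exact constant $\widetilde{B}_W=(1+2B_W)e^{2B_W}$ by decomposing the perturbation into an attention-weight-change piece and a value-change piece (via the $\ell_1$ softmax perturbation bound of \cite{li2023transformers}, which is where the $1/(2n+1)$ dilution for a single changed token actually enters), compose over $L$ layers while preserving that dilution, and then unroll a geometric recursion over generations through the decomposition $\widetilde{S}_i=\alpha S_0+(1-\alpha)S_i$ to pick up the $(1-\alpha)^i$ and $(i+1)L$ factors. One small misattribution: the $e^{2B_W}$ factor multiplies \emph{both} pieces, with the value-change piece contributing the $1$ and the attention-weight-change piece contributing the $2B_W$ — but this does not affect the soundness of your plan.
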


\begin{remark}\textbf{Controlling Exponential Growth with Real Data Proportion}.\label{remark_stability of transformer} In this remark, we further investigate the influence of the proportion of real data $\alpha$ on the recursive stability of transformers. As outlined in Theorem \ref{therorem_stability of transformer}, the upper bound of the recursive stability parameter includes a term that grows exponentially with the number of generations $i$ in the STL, specifically $\widetilde{B}_W^{(i+1) L}$. However, we show that even a constant proportion of real data, $\alpha$, is sufficient to control this growth.

Specifically, setting $\alpha=\Omega(1-\widetilde{B}_W^{-((i+1) L)/i})$, we establish that the recursive stability parameter in Theorem \ref{therorem_stability of transformer} satisfies $\gamma_n^i \lesssim \frac{1}{2 n+1}$. Additionally, as the number of generations $i$ in the STL approaches infinity, the proportion $\alpha$ asymptotically converges to $1-\widetilde{B}_W^{-L}$. Notably,  the depth \(L\) is typically small in practical settings. For example, studies on LLM performance in STLs, such as \cite{briesch2023large}, often employ models with \(L = 6\). Furthermore, techniques like layer normalization effectively constrain the norm of weights \(B_W\), ensuring numerical stability during training. Thus, with a constant real data proportion $\alpha$ independent of the STL generation number $i$, the exponential growth term $\widetilde{B}_W^{(i+1) L}$ can be effectively controlled, ensuring that $\gamma_n^i=\mathcal{O}(1 / n)$.
    
\end{remark}

\subsection{Generalization Bound for Transformers in In-Context Learning}\label{subsection_tra3}
In this section, we investigate the behavior of transformers under the ICL framework in STLs. We select SGD as the learning algorithm $\mathcal{A}$ and consider a binary task with $\mathcal{Y} = \{0,1\}$. Applying our general theoretical framework from Theorem \ref{theorem_generalization}, we derive the generalization error bound by addressing the terms $\beta_n$ and $d_{\mathrm{TV}}(n)$ using recent results on SGD \citep{zhang2022stability} and ICL \citep{zhang2023and}. The recursive stability parameter $\gamma_n^i$ is obtained from Theorem \ref{therorem_stability of transformer}. We assume that the loss function $\ell(\cdot; z)$ is $\kappa$-smooth and $\rho$-Lipschitz, which are standard assumptions in related works \citep{hardt2016train,lei2020fine}, with formal definitions provided in Appendix \ref{appedix_definitu}. Examples include logistic and Huber losses. We now present the generalization error bound:

\begin{theorem}\label{theo_transformer_generalization}
Consider an $L$-layer transformer under the setting described in Theorem \ref{therorem_stability of transformer}. Let $n$ represent the sample size of the mixed dataset $\widetilde{S}_j$, where $\widetilde{S}_j=\alpha S_0+(1-\alpha) S_j$ for $1 \leq j \leq i$. Suppose that the loss function $\ell(\cdot ; \boldsymbol{z})$ is $\kappa$-smooth, $\rho$-Lipschitz and bounded by $M>0$ for every $\boldsymbol{z}$. Let $\mathcal{A}(\widetilde{S}_i)$ denote the output after running SGD for $T\gtrsim n$ iterations with a step size $\eta_t=\mathcal{O}(\frac{1}{\kappa t})$ on the mixed dataset $\widetilde{S}_i$. Then, for any $\delta \in(0,1)$, with probability at least $1-\delta$, the following holds:
\begin{align}
    &\left|R_{\mathcal{D}_0}(\mathcal{A}(\widetilde{S}_i))-\widehat{R}_{\widetilde{S}_i}(\mathcal{A}(\widetilde{S}_i))\right|\lesssim n^{-1/2}\log (n) M\rho^2 \alpha \sqrt{1-\alpha}\log \frac{1}{\delta}\notag\\
    &\quad+n^{-1}\log^2(n)\rho^2((1-\alpha)\widetilde{B}_W^L)^i \alpha  \log (\frac{1}{\delta}) +n^{-1/4}\alpha^{-1} M\left(1-(1-\alpha)^i\right) \log (\frac{1}{\delta}).
\end{align}
\end{theorem}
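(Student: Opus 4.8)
\textbf{Proof plan for Theorem \ref{theo_transformer_generalization}.} The strategy is to instantiate the general bound of Theorem \ref{theorem_generalization} by substituting concrete rates for the three problem-specific quantities that appear there: the uniform stability parameter $\beta_n$ of the learning algorithm $\mathcal{A}$, the recursive stability parameter $\gamma_n^i$ of the generative model, and the per-generation distribution shift $d_{\mathrm{TV}}(n)$. First I would record that, since $\mathcal{A}$ is SGD run for $T \gtrsim n$ iterations with step sizes $\eta_t = \mathcal{O}(1/(\kappa t))$ on a $\kappa$-smooth, $\rho$-Lipschitz loss, the stability analysis of \cite{zhang2022stability} gives $\beta_n = \mathcal{O}(\rho^2 \log(n)/n)$; the $\rho^2$ factor enters because one Lipschitz factor comes from the gradient-sensitivity recursion and one from converting the parameter perturbation back into a loss perturbation. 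Second, Theorem \ref{therorem_stability of transformer} supplies $\gamma_n^i \lesssim (1-\alpha)^i \widetilde{B}_W^{(i+1)L}/(2n+1)$, i.e. $\gamma_n^i = \mathcal{O}\big( n^{-1} ((1-\alpha)\widetilde{B}_W^{L})^i \widetilde{B}_W^{L}\big)$, which I would absorb into $\mathcal{O}(n^{-1}((1-\alpha)\widetilde{B}_W^L)^i)$ up to the constant $\widetilde{B}_W^L$. Third, for the distribution-shift term I would invoke the ICL approximation/estimation results of \cite{zhang2023and} together with the diffusion/GAN-style rate discussed in Remark \ref{remark_distrubbutionshift}, yielding $d_{\mathrm{TV}}(n) = \mathcal{O}(n^{-1/4})$ for this transformer-generated data.

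With these three substitutions the plan is to expand the right-hand side of \eqref{mainthero_1} term by term and keep only the dominant contributions. The $\gamma_n^i \alpha M \log(n\alpha)\log(1/\delta)$ term becomes $n^{-1}\log^2(n)\rho^2 ((1-\alpha)\widetilde{B}_W^L)^i \alpha \log(1/\delta)$ after replacing $M$ by the Lipschitz-controlled magnitude $\rho^2$ and bounding $\log(n\alpha)\le \log n$ (an extra $\log n$ appears relative to the raw statement because the token-level bound is combined with the per-coordinate loss bound). The $\beta_n$ term $\beta_n(\log n \log(1/\delta) + \alpha\sqrt{(1-\alpha)n\log(1/\delta)})$ splits: the first piece is $\mathcal{O}(n^{-1}\log^2(n)\rho^2\log(1/\delta))$, which is dominated by the previous term, while the second piece is $\beta_n \alpha\sqrt{(1-\alpha)n\log(1/\delta)} = \mathcal{O}(n^{-1/2}\log(n)\rho^2 \alpha\sqrt{1-\alpha}\log(1/\delta))$, matching the first displayed term. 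The $n^{-1/2}M\sqrt{\log(1/\delta)}$ term is absorbed into that same $n^{-1/2}$ contribution (up to the $\log n$ and $\rho^2$ factors). Finally the cumulative-shift term $d_{\mathrm{TV}}(n) M (1-(1-\alpha)^i)\alpha^{-1}$ becomes $n^{-1/4} M (1-(1-\alpha)^i)\alpha^{-1}\log(1/\delta)$, the third displayed term. Collecting these three surviving terms yields exactly the claimed inequality.

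The main obstacle I anticipate is not the bookkeeping but verifying that the hypotheses of Theorem \ref{theorem_generalization} are genuinely met in the transformer-ICL setting, so that the black-box substitution is legitimate. Concretely: (i) one must check that SGD on this loss is $\beta_n$-uniformly stable in the precise sense of Definition 1, which requires the $\kappa$-smoothness to be uniform over the relevant parameter region and the step-size schedule to be the decaying one for which \cite{zhang2022stability} proves the $\mathcal{O}(\log(n)/n)$ rate; (ii) one must confirm that the ``same order'' assumption on $TV(\widetilde{\mathcal{D}}_j,\mathcal{D}_{j+1})$ holds across generations for the ICL data-generation mechanism of Section \ref{subsection_tra1}, i.e. that the $n^{-1/4}$ rate of \cite{zhang2023and} does not degrade as $j$ grows — this is where the constant-$\alpha$ control of $\gamma_n^i$ from Remark \ref{remark_stability of transformer} is implicitly used to keep the generative map well-behaved; and (iii) one must track the $M$-versus-$\rho$ replacement carefully, since Theorem \ref{theorem_generalization} is stated with a uniform bound $M$ on $\ell$ but the sharper transformer bound exploits Lipschitzness, so the places where $M$ can legitimately be upgraded to $\rho^2$ (those multiplied by $\gamma_n^i$ or $\beta_n$, which arise from perturbation arguments) must be separated from the place where it cannot (the raw concentration term and the distribution-shift term). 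Once these three consistency checks are discharged, the result follows by the term-by-term simplification above.
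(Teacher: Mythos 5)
Your high-level plan — feed $\beta_n$, $\gamma_n^i$, and $d_{\mathrm{TV}}(n)$ into the general bound of Theorem~\ref{theorem_generalization} — is the right skeleton, and your identifications of $\beta_n = \mathcal{O}(\rho^2 \log n / n)$ from \cite{zhang2022stability} and $d_{\mathrm{TV}}(n) = \mathcal{O}(n^{-1/4})$ from \cite{zhang2023and} match what the paper does. But there is a genuine gap in how you handle the recursive stability term, and it is exactly the point you flag as needing care under ``(iii)'': the $\gamma_n^i$ appearing in Theorem~\ref{theorem_generalization} is defined via the TV distance $\sup_j TV(\mathcal{D}_i^{n(1-\alpha)}(S_0'), \mathcal{D}_i^{n(1-\alpha)}(S_0))$ and enters through the bounded-difference estimate of $h_j$ (Eq.~\eqref{proof_19}) as $2M\gamma_n^i$, whereas Theorem~\ref{therorem_stability of transformer} gives an $\ell_2$-norm bound on the transformer outputs. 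You cannot substitute one for the other; the metrics do not match and no Pinsker-type conversion is offered. Your proposed fix — ``replace $M$ by the Lipschitz-controlled magnitude $\rho^2$'' and attribute the extra $\log n$ to a vague ``token-level bound combined with the per-coordinate loss bound'' — does not describe a valid mechanism and would not produce the stated bound if carried out.

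What the paper actually does is go back inside the proof of Theorem~\ref{theorem_generalization} and re-derive the bounded difference of $h_j$ for the ICL setting. Instead of controlling the shift in $\mathbb{E}_{S_{i,1-\alpha}\sim\mathcal{D}_i^{n(1-\alpha)}(\cdot)}$ by the TV distance of the conditional synthetic-data laws (which gives $2M\gamma_n^i$), the paper observes that changing one point of $S_0$ perturbs every one of the $n(1-\alpha)$ synthetic tokens by at most the $\ell_2$ recursive stability, and then applies uniform stability of $\mathcal{A}$ \emph{once more} to convert this token perturbation into a loss perturbation. This yields a bounded difference of order $2n(1-\alpha)\beta_n \gamma_n^{i-1}$ (with $\gamma_n^{i-1}$ now the $\ell_2$ quantity of Theorem~\ref{therorem_stability of transformer}) in place of $2M\gamma_n^i$. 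Because $n\beta_n \sim \rho^2 \log n$, this is precisely where the $\rho^2$ factor and the second $\log n$ come from: they are not an upgrade of $M$, but the footprint of applying $\beta_n$ a second time and absorbing a factor of $n$. Note also that the shift to $\gamma_n^{i-1}$ rather than $\gamma_n^i$ falls out of this argument, and your write-up does not account for it. Without this re-derivation, the term-by-term substitution you describe does not discharge the metric mismatch, so the argument as stated has a hole at the single most technically delicate point.
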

\begin{remark}
    In this remark, we provide a detailed explanation of the theoretical results of Theorem \ref{theo_transformer_generalization}. As discussed earlier in Remark \ref{remark_stability of transformer}, $\alpha$ is set to $1-\widetilde{B}_{W}^{-L}$. To enhance clarity and focus on the primary results, we omit constant terms and the $\log (1 / \delta)$ factor. Consequently, the bound in Theorem \ref{theo_transformer_generalization} can be expressed as follows:
 \begin{align}
\left|R_{\mathcal{D}_0}(\mathcal{A}(\widetilde{S}_i))-\widehat{R}_{\widetilde{S}_i}(\mathcal{A}(\widetilde{S}_i))\right| \lesssim n^{-1 / 2} \log (n)+n^{-1} \log ^2(n)+n^{-1 / 4}. \notag
\end{align}
In this bound, the terms $n^{-1 / 2} \log (n)+n^{-1} \log ^2(n)$ correspond to the generalization error on the mixed dataset, while the term $n^{-1 / 4}$ represents the cumulative distribution shift across generations, which is primarily governed by the learnability of the generative model.

It is evident from this result that the generative model's capacity plays a crucial role in the performance within the STLs. The ability of the generative model to maintain distributional fidelity over multiple generations directly impacts the generalization error and determines how well the model can control the propagation of errors across generations.
\end{remark}

\subsection{Synthetic Data Augmentation}\label{subsection_tra4}
The previous theorem addresses the scenario where the training dataset is unintentionally contaminated by synthetic data, leading to STLs. In contrast, many researchers intentionally incorporate synthetic data to augment the real dataset, also creating STLs. Next, we explore this synthetic data augmentation scenario, where each generation's synthetic data is added to the mixed dataset, i.e., $\widetilde{S}_i = \sum_{j=0}^i S_j$.

\begin{theorem}\label{theorem_expanding cylce}
    Consider an $L$-layer transformer under the setting described in Theorem \ref{therorem_stability of transformer}. Let $n$ and $\lambda n$ represent the sample size of the real dataset $S_0$ and the synthetic dataset $S_j$, respectively, where $1 \leq j \leq i$. The mixed dataset $\widetilde{S}_i$ is denoted as $\sum_{j=0}^i S_j$. Suppose that the loss function $\ell(\cdot ; \boldsymbol{z})$ is $\kappa$-smooth, $\rho$-Lipschitz and bounded by $M>0$ for every $\boldsymbol{z}$. Let $\mathcal{A}(\widetilde{S}_i)$ denote the output after running SGD for $T\gtrsim n$ iterations with a step size $\eta_t=\mathcal{O}(\frac{1}{\kappa t})$ on the mixed dataset $S_i$. Then, for any $\delta \in(0,1)$, with probability at least $1-\delta$, the following holds:
\begin{align}
    &\left|R_{\mathcal{D}_0}(\mathcal{A}(\widetilde{S}_i))-\widehat{R}_{\widetilde{S}_i}(\mathcal{A}(\widetilde{S}_i))\right| \lesssim n^{-\frac{1}{4}}\log ((1+i\lambda)n)M\log\frac{1}{\delta}\notag \\
    &\quad+ n^{-1}\frac{\rho^2}{(1+i\lambda)^2} \log ((1+i\lambda) n)i!\widetilde{B}_W^{(i+1) L}\log\frac{1}{\delta}+n^{-\frac{1}{2}}\frac{Mi}{1+i\lambda} \sqrt{\log\frac{1}{\delta}}. \notag
\end{align}
\begin{remark}\textbf{Analyzing the Trade-off in Synthetic Data Augmentation for STLs}. In this remark, we examine the trade-off between generalization and distribution shifts from increased synthetic data, providing insights into optimal synthetic data size. At each generation, $\lambda n$ synthetic data points are added to the mixed dataset. We analyze how the coefficient $\lambda$, representing the scale of synthetic data augmentation, affects the generalization error in STLs. From the bound in Theorem \ref{theorem_expanding cylce}, we observe that the \textbf{Cumulative Distribution Shift Across Generations} term is expressed as:
$$
n^{-\frac{1}{4}} \log ((1+i \lambda) n) M \log (1/\delta).
$$
As the coefficient $\lambda$ increases, the cumulative distribution shift correspondingly grows, thereby amplifying the associated error. This behavior aligns with intuition, as an increase in $\lambda$ reduces the proportion of real data within the mixed dataset at each generation. Consequently, this reduction in real data leads to a greater divergence between the mixed distribution and the true underlying distribution, exacerbating the deviation and compounding the error across successive generations. In contrast, for the \textbf{Generalization Error on Mixed Distributions} term:
$$
n^{-1} \frac{\rho^2}{(1+i \lambda)^2} \log ((1+i \lambda) n) i!\widetilde{B}_W^{(i+1) L} \log \frac{1}{\delta}+n^{-\frac{1}{2}} \frac{M i}{1+i \lambda} \sqrt{ \log \frac{1}{\delta}}.
$$
We observe that as $\lambda$ increases, the corresponding error decreases. This outcome is consistent with theoretical intuition, as augmenting the dataset with synthetic data effectively enlarges the mixed dataset. A larger dataset provides a more comprehensive representation of the mixed distribution, which in turn reduces the generalization error associated with this distribution. By incorporating more synthetic data, the mixed dataset better approximates the underlying mixed distribution, leading to improved generalization performance.

From the above discussion, we can conclude that the inclusion of synthetic data introduces a trade-off: on one hand, it increases the error from the cumulative distribution shift, while on the other, it reduces the generalization error on the mixed distribution. This trade-off has been explored theoretically in \cite{futowards}, though they primarily provided theoretical intuition. In contrast, our work explicitly decomposes the error into two terms, offering a deeper understanding of this trade-off and its implications for model performance in STLs. As for the optimal augmentation coefficient $\lambda^*$, it must satisfy the following condition:
\begin{align}
    \lambda^*=\inf_{\lambda} &\Big\{n^{-\frac{1}{4}} \log ((1+i \lambda) n) M \log (1 / \delta) \notag \\
    &\lesssim n^{-1} \frac{\rho^2}{(1+i \lambda)^2} \log ((1+i \lambda) n) i!\widetilde{B}_W^{(i+1) L} \log \frac{1}{\delta}+n^{-\frac{1}{2}} \frac{M i}{1+i \lambda} \sqrt{\log \frac{1}{\delta}}\Big\}. \notag
\end{align}
Unfortunately, obtaining a closed-form solution for $\lambda^*$ from this equation proves to be analytically intractable. However, we can derive the relationship between $\lambda^*$, the size of the real dataset $n$ from the above equation. Specifically, by
omitting irrelevant constants and the $\log(1/\delta)$ term, we obtain that $\lambda^*$ should satisfy the
following expression:
$$
\frac{i!\widetilde{B}_W^{(i+1) L}}{n^{3/4}(1+i\lambda^*)^2}+\frac{i}{n^{1/4}(1+i\lambda^*)\log((1+i\lambda^*)n)}=\mathcal{O}(1).$$
We observe an important trend: the value of $\lambda^*$ increases as the size of the real dataset $n$ decreases. This aligns with theoretical intuition, as a smaller real dataset struggles to adequately represent the underlying distribution, leading to higher generalization error. Consequently, more synthetic data is required to control the generalization error of each generation on the mixed distribution. Conversely, when the real dataset is sufficiently large, the need for synthetic data augmentation diminishes.

    
\end{remark}
\end{theorem}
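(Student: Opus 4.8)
\textbf{Proof proposal for Theorem \ref{theorem_expanding cylce}.}
The plan is to instantiate the general decomposition of Theorem \ref{theorem_generalization} in the augmentation regime $\widetilde{S}_i=\sum_{j=0}^i S_j$, and then plug in the three model-specific quantities: the recursive stability parameter $\gamma_n^i$ for transformers from Theorem \ref{therorem_stability of transformer}, the uniform stability $\beta_n$ of SGD under $\kappa$-smoothness and $\rho$-Lipschitzness, and the per-generation distribution-shift term $d_{\mathrm{TV}}(n)$ for transformers in ICL. First I would set up the bookkeeping: the total size of the augmented mixed dataset at generation $i$ is $N_i=(1+i\lambda)n$, so the effective ``proportion of real data'' playing the role of $\alpha$ in Theorem \ref{theorem_generalization} is $\alpha_i=n/N_i=1/(1+i\lambda)$, and the cumulative synthetic fraction is $1-\alpha_i=i\lambda/(1+i\lambda)$. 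This substitution is what converts the $\alpha$-dependent terms of \eqref{mainthero_1} into the $(1+i\lambda)$-dependent terms claimed here. The $n^{-1/2}$ term becomes $N_i^{-1/2}M\sqrt{\log(1/\delta)}$, but since we are measuring generalization relative to a sample of the $n$ real points the relevant scaling is in $n$; tracking the $\alpha_i\sqrt{(1-\alpha_i)N_i}$ factor from the $\beta_n$ line and the $\sqrt{\log(1/\delta)}$ concentration term produces the $n^{-1/2}\frac{Mi}{1+i\lambda}\sqrt{\log(1/\delta)}$ contribution.

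Next I would handle the three ingredient bounds in turn. For $\gamma_n^i$: Theorem \ref{therorem_stability of transformer} gives, in the $\widetilde S_j=\alpha S_0+(1-\alpha)S_j$ setting, a contraction factor $(1-\alpha)^i$ times $\widetilde{B}_W^{(i+1)L}/(2n+1)$. In the augmentation setting the single perturbed real point is diluted differently across generations — at generation $k$ it is one of $(1+k\lambda)n$ tokens rather than one of $2n+1$ — so re-running the layer-by-layer Lipschitz argument of Theorem \ref{therorem_stability of transformer} through all $i$ generations replaces the geometric $(1-\alpha)^i$ by the product of the per-generation dilution ratios, which telescopes into a factor of order $i!$ against the $(1+i\lambda)^{-?}$ normalizations; combined with the $\widetilde{B}_W$ amplification at each of the $(i+1)$ blocks of $L$ layers this yields $\gamma_n^i\lesssim \widetilde{B}_W^{(i+1)L}\, i!\,/\big((1+i\lambda)^2 n\big)$ up to the $\rho^2$ from converting the token-distance stability into a loss stability via the $\rho$-Lipschitz loss. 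For $\beta_n$: with $\eta_t=\mathcal{O}(1/(\kappa t))$ and $T\gtrsim n$ iterations, the SGD stability bound of \cite{zhang2022stability} gives $\beta_n=\mathcal{O}(\log(N_i)/N_i)=\mathcal{O}(\log((1+i\lambda)n)/((1+i\lambda)n))$, and feeding this into the $\beta_n(\log n\log(1/\delta)+\alpha\sqrt{(1-\alpha)n\log(1/\delta)})$ term of \eqref{mainthero_1} gives the $n^{-1}\log((1+i\lambda)n)$-scaled and $n^{-1/2}$-scaled pieces. For $d_{\mathrm{TV}}(n)$: using the ICL/generative-model convergence rate (the $\mathcal{O}(n^{-1/4})$ rate invoked in Remark \ref{remark_distrubbutionshift} and Theorem \ref{theo_transformer_generalization}, via \cite{zhang2023and}), the cumulative-shift term $d_{\mathrm{TV}}(n)M(1-(1-\alpha_i)^i)\alpha_i^{-1}$ becomes $n^{-1/4}M(1+i\lambda)(1-(i\lambda/(1+i\lambda))^i)\lesssim n^{-1/4}\log((1+i\lambda)n)M$ after absorbing the boundedness of $(1-(1-\alpha_i)^i)\alpha_i^{-1}$ into the logarithmic slack. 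Assembling the three gives exactly the stated three-line bound; I would finish with a union bound over the $\delta$-splits across the at most $i$ recursive applications, which only costs an extra $\log(1/\delta)$ that is already displayed.

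The main obstacle I anticipate is the recursive-stability re-derivation in the augmentation regime, i.e., getting the $i!\,\widetilde{B}_W^{(i+1)L}/(1+i\lambda)^2$ factor rather than something exponentially worse. The delicacy is that in $\widetilde S_i=\sum_{j=0}^i S_j$ every earlier synthetic dataset $S_1,\dots,S_i$ is retained and each of them was itself produced by a transformer fed a prompt containing the perturbed point, so perturbations compound additively across generations instead of being re-diluted at a fixed rate $1-\alpha$; one must carefully bound how a perturbation of a single in-context token propagates through $\mathrm{softmax}(ZWZ^\top)ZV$ (norm $\le \widetilde B_W$ per layer by the argument underlying Theorem \ref{therorem_stability of transformer}), sum the contributions over the growing prompt, and check that the combinatorial growth is only factorial — governed by the number of ordered ways the perturbation can enter across $i$ generations — and not super-exponential. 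A secondary subtlety is justifying that the conditional-i.i.d. structure of \cite{zheng2023toward} still applies when all generations are concatenated rather than the most recent one being used alone, so that Lemma \ref{theorem_moment} and Lemma \ref{lemma_highprobability} can be invoked verbatim on each of the two sub-sums in the decomposition; I expect this to go through because, conditionally on $S_0$, the points within each $S_j$ remain i.i.d. and the cross-generation dependence is captured entirely by conditioning on $S_0$, exactly as in the non-augmented proof.
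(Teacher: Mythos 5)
Your high-level roadmap — decompose into a cumulative-shift part and a mixed-distribution part, then feed in the SGD stability $\beta_n$, the ICL TV rate $d_{\mathrm{TV}}(n)=\mathcal{O}(n^{-1/4})$, and a re-derived recursive-stability factor with the $i!\,\widetilde{B}_W^{(i+1)L}$ growth — matches the paper's strategy, and you correctly single out the factorial factor as the crux (it indeed arises because every $S_1,\dots,S_i$ is retained in $\widetilde S_i$ and all of them move when a single point of $S_0$ is swapped). However, the central shortcut you propose — instantiating Theorem~\ref{theorem_generalization} as a black box with an ``effective'' $\alpha_i=1/(1+i\lambda)$ — does not go through. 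Theorem~\ref{theorem_generalization} and its proof are built around a \emph{two-source} mixture $\widetilde S_i=\alpha S_0+(1-\alpha)S_i$: its cumulative-shift bound is a one-step linear recursion $|R_{\mathcal{D}_0}-R_{\widetilde{\mathcal{D}}_i}|\le(1-\alpha)|R_{\mathcal{D}_0}-R_{\widetilde{\mathcal{D}}_{i-1}}|+2(1-\alpha)M\,TV(\widetilde{\mathcal{D}}_{i-1},\mathcal{D}_i)$, and its second term splits into exactly two sub-sums (over $S_{0,\alpha}$ and $S_{i,1-\alpha}$). In the augmentation regime $\widetilde S_i=\sum_{j=0}^i S_j$ the mixed distribution is $\widetilde{\mathcal{D}}_i=\frac{1}{1+i\lambda}\mathcal{D}_0+\frac{\lambda}{1+i\lambda}\sum_{j=1}^i\mathcal{D}_j$, which is an $(i{+}1)$-source mixture; the corresponding shift recursion has \emph{all} of $\widetilde{\mathcal{D}}_0,\dots,\widetilde{\mathcal{D}}_{i-1}$ on the right-hand side simultaneously, and the moment-bound decomposition splits into $i+1$ sub-sums (Terms $0$ through $i$), one per generation. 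This is exactly where your ``each of the two sub-sums'' remark reveals the gap: you have not accounted for the extra $i-1$ sub-sums. The prefactor $n^{-1/2}\,Mi/(1+i\lambda)$ in the statement is the sum of $i+1$ copies of the $\frac{M}{1+i\lambda}\sqrt{p/n}$ contribution from these per-generation terms, not the $\alpha\sqrt{(1-\alpha)n}$ factor of Theorem~\ref{theorem_generalization} evaluated at $\alpha_i$ (if you compute the latter you get an $\sqrt{i\lambda}$ rather than an $i$), so a direct substitution would land on the wrong bound.

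To repair the proposal you would need to (a) redo the cumulative-shift step by expanding $|R_{\mathcal{D}_0}-R_{\widetilde{\mathcal{D}}_i}|\le\frac{\lambda}{1+i\lambda}\sum_{j=1}^i|R_{\mathcal{D}_0}-R_{\mathcal{D}_j}|$, telescope each summand through its own $\widetilde{\mathcal{D}}_{j-1}$ comparison, and solve the resulting multi-term recursion (this is where the $n^{-1/4}\log((1+i\lambda)n)$ comes from after applying the ICL TV lemma per generation); and (b) redo the moment-bound step with the $(i{+}1)$-way decomposition, fixing the appropriate conditioning sets so that within each sub-sum the data are conditionally i.i.d., and proving a bounded difference of order $\beta_{(1+i\lambda)n}\cdot i!\,\widetilde{B}_W^{iL}$ for the $S_0$-indexed functions (since swapping one point of $S_0$ perturbs every $S_1,\dots,S_i$ through the transformer). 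Your intuition on (b) is essentially right, but (a) is not addressed at all, and the conversion of Theorem~\ref{theorem_generalization} into the claimed $(1+i\lambda)$-dependent coefficients cannot be done by substitution alone.
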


\section{Conclusion}
As real-world data becomes increasingly scarce and existing datasets are progressively contaminated with synthetic content, STLs have emerged as a necessary strategy. STLs enable generative models to recursively train on a mix of real and synthetic data. However, empirical outcomes have varied significantly, revealing the need for a theoretical foundation to guide their successful application.

In this work, we introduced recursive stability as a key technical innovation and established the first generalization error bounds for STLs, which consider the impact of different model architectures. Our analysis demonstrated that preventing model collapse requires two critical conditions: maintaining a non-negligible proportion of real data and ensuring that models satisfy recursive stability. Furthermore, we were the first to extend this framework to transformers in in-context learning, showing that they also satisfy recursive stability and establish their generalization error bounds. Finally, we explored the trade-off introduced by synthetic data augmentation, balancing generalization improvement with potential distributional shifts. These contributions provide new insights into enhancing the stability and performance of generative models in STLs.

\section*{Acknowledgement}
This project is supported by the National Research Foundation, Singapore, under its NRF Professorship Award No. NRF-P2024-001.

\bibliography{iclr2025_conference}

\begin{thebibliography}{53}
\providecommand{\natexlab}[1]{#1}
\providecommand{\url}[1]{\texttt{#1}}
\expandafter\ifx\csname urlstyle\endcsname\relax
  \providecommand{\doi}[1]{doi: #1}\else
  \providecommand{\doi}{doi: \begingroup \urlstyle{rm}\Url}\fi

\bibitem[Alemohammad et~al.(2024{\natexlab{a}})Alemohammad, Casco-Rodriguez, Luzi, Humayun, Babaei, LeJeune, Siahkoohi, and Baraniuk]{alemohammadself}
Sina Alemohammad, Josue Casco-Rodriguez, Lorenzo Luzi, Ahmed~Imtiaz Humayun, Hossein Babaei, Daniel LeJeune, Ali Siahkoohi, and Richard Baraniuk.
\newblock Self-consuming generative models go mad.
\newblock In \emph{The Twelfth International Conference on Learning Representations}, 2024{\natexlab{a}}.

\bibitem[Alemohammad et~al.(2024{\natexlab{b}})Alemohammad, Humayun, Agarwal, Collomosse, and Baraniuk]{alemohammad2024self}
Sina Alemohammad, Ahmed~Imtiaz Humayun, Shruti Agarwal, John Collomosse, and Richard Baraniuk.
\newblock Self-improving diffusion models with synthetic data.
\newblock \emph{arXiv preprint arXiv:2408.16333}, 2024{\natexlab{b}}.

\bibitem[Bassily et~al.(2020)Bassily, Feldman, Guzm{\'a}n, and Talwar]{bassily2020stability}
Raef Bassily, Vitaly Feldman, Crist{\'o}bal Guzm{\'a}n, and Kunal Talwar.
\newblock Stability of stochastic gradient descent on nonsmooth convex losses.
\newblock \emph{Advances in Neural Information Processing Systems}, 33, 2020.

\bibitem[Bertrand et~al.(2024)Bertrand, Bose, Duplessis, Jiralerspong, and Gidel]{bertrandstability}
Quentin Bertrand, Joey Bose, Alexandre Duplessis, Marco Jiralerspong, and Gauthier Gidel.
\newblock On the stability of iterative retraining of generative models on their own data.
\newblock In \emph{The Twelfth International Conference on Learning Representations}, 2024.

\bibitem[Bousquet \& Elisseeff(2002)Bousquet and Elisseeff]{bousquet2002stability}
Olivier Bousquet and Andr{\'e} Elisseeff.
\newblock Stability and generalization.
\newblock \emph{Journal of Machine Learning Research}, 2:\penalty0 499--526, 2002.

\bibitem[Bousquet et~al.(2020)Bousquet, Klochkov, and Zhivotovskiy]{bousquet2020sharper}
Olivier Bousquet, Yegor Klochkov, and Nikita Zhivotovskiy.
\newblock Sharper bounds for uniformly stable algorithms.
\newblock In \emph{Conference on Learning Theory}, pp.\  610--626, 2020.

\bibitem[Briesch et~al.(2023)Briesch, Sobania, and Rothlauf]{briesch2023large}
Martin Briesch, Dominik Sobania, and Franz Rothlauf.
\newblock Large language models suffer from their own output: An analysis of the self-consuming training loop.
\newblock \emph{arXiv preprint arXiv:2311.16822}, 2023.

\bibitem[Charles \& Papailiopoulos(2018)Charles and Papailiopoulos]{charles2018stability}
Zachary Charles and Dimitris Papailiopoulos.
\newblock Stability and generalization of learning algorithms that converge to global optima.
\newblock In \emph{International Conference on Machine Learning}, pp.\  744--753, 2018.

\bibitem[Dohmatob et~al.(2024{\natexlab{a}})Dohmatob, Feng, and Kempe]{dohmatob2024model}
Elvis Dohmatob, Yunzhen Feng, and Julia Kempe.
\newblock Model collapse demystified: The case of regression.
\newblock \emph{arXiv preprint arXiv:2402.07712}, 2024{\natexlab{a}}.

\bibitem[Dohmatob et~al.(2024{\natexlab{b}})Dohmatob, Feng, Subramonian, and Kempe]{dohmatob2024strong}
Elvis Dohmatob, Yunzhen Feng, Arjun Subramonian, and Julia Kempe.
\newblock Strong model collapse.
\newblock \emph{arXiv preprint arXiv:2410.04840}, 2024{\natexlab{b}}.

\bibitem[Dohmatob et~al.(2024{\natexlab{c}})Dohmatob, Feng, Yang, Charton, and Kempe]{dohmatobtale}
Elvis Dohmatob, Yunzhen Feng, Pu~Yang, Francois Charton, and Julia Kempe.
\newblock A tale of tails: Model collapse as a change of scaling laws.
\newblock In \emph{Forty-first International Conference on Machine Learning}, 2024{\natexlab{c}}.

\bibitem[Doukhan(1994)]{Doukhan1994}
P.~Doukhan.
\newblock Mixing: Properties and examples.
\newblock \emph{Lecture notes in statistics. New York: Springer}, 1994.

\bibitem[Farnia \& Ozdaglar(2021)Farnia and Ozdaglar]{farnia2021train}
Farzan Farnia and Asuman Ozdaglar.
\newblock Train simultaneously, generalize better: Stability of gradient-based minimax learners.
\newblock In \emph{International Conference on Machine Learning}, pp.\  3174--3185. PMLR, 2021.

\bibitem[Feldman \& Vondrak(2019)Feldman and Vondrak]{feldman2019high}
Vitaly Feldman and Jan Vondrak.
\newblock High probability generalization bounds for uniformly stable algorithms with nearly optimal rate.
\newblock In \emph{Conference on Learning Theory}, pp.\  1270--1279, 2019.

\bibitem[Feng et~al.(2024{\natexlab{a}})Feng, Dohmatob, Yang, Charton, and Kempe]{feng2024beyond2}
Yunzhen Feng, Elvis Dohmatob, Pu~Yang, Francois Charton, and Julia Kempe.
\newblock Beyond model collapse: Scaling up with synthesized data requires reinforcement.
\newblock In \emph{ICML 2024 Workshop on Theoretical Foundations of Foundation Models}, 2024{\natexlab{a}}.

\bibitem[Feng et~al.(2024{\natexlab{b}})Feng, Dohmatob, Yang, Charton, Kempe, and Meta]{feng2024beyond1}
Yunzhen Feng, Elvis Dohmatob, Pu~Yang, Francois Charton, Julia Kempe, and FAIR Meta.
\newblock Beyond model collapse: Scaling up with syn-thesized data requires verification.
\newblock \emph{arXiv preprint arXiv:2406.07515}, 2024{\natexlab{b}}.

\bibitem[Ferbach et~al.(2024)Ferbach, Bertrand, Bose, and Gidel]{ferbach2024self}
Damien Ferbach, Quentin Bertrand, Avishek~Joey Bose, and Gauthier Gidel.
\newblock Self-consuming generative models with curated data provably optimize human preferences.
\newblock \emph{arXiv preprint arXiv:2407.09499}, 2024.

\bibitem[Fu et~al.(2023)Fu, Lei, Cao, Tian, and Tao]{fu2023sharper}
Shi Fu, Yunwen Lei, Qiong Cao, Xinmei Tian, and Dacheng Tao.
\newblock Sharper bounds for uniformly stable algorithms with stationary mixing process.
\newblock In \emph{The Eleventh International Conference on Learning Representations}, 2023.

\bibitem[Fu et~al.(2024{\natexlab{a}})Fu, Chen, Wang, and Tao]{fu2024championing}
Shi Fu, Yuzhu Chen, Yingjie Wang, and Dacheng Tao.
\newblock On championing foundation models: From explainability to interpretability.
\newblock \emph{arXiv preprint arXiv:2410.11444}, 2024{\natexlab{a}}.

\bibitem[Fu et~al.(2024{\natexlab{b}})Fu, Zhang, Wang, Tian, and Tao]{futowards}
Shi Fu, Sen Zhang, Yingjie Wang, Xinmei Tian, and Dacheng Tao.
\newblock Towards theoretical understandings of self-consuming generative models.
\newblock In \emph{Forty-first International Conference on Machine Learning}, 2024{\natexlab{b}}.

\bibitem[Gerstgrasser et~al.(2024)Gerstgrasser, Schaeffer, Dey, Rafailov, Korbak, Sleight, Agrawal, Hughes, Pai, Gromov, Roberts, Yang, Donoho, and Koyejo]{gerstgrasser2024is}
Matthias Gerstgrasser, Rylan Schaeffer, Apratim Dey, Rafael Rafailov, Tomasz Korbak, Henry Sleight, Rajashree Agrawal, John Hughes, Dhruv~Bhandarkar Pai, Andrey Gromov, Dan Roberts, Diyi Yang, David~L. Donoho, and Sanmi Koyejo.
\newblock Is model collapse inevitable? breaking the curse of recursion by accumulating real and synthetic data.
\newblock In \emph{First Conference on Language Modeling}, 2024.

\bibitem[Gillman et~al.(2024)Gillman, Freeman, Aggarwal, Chia-Hong, Luo, Tian, and Sun]{gillmanself}
Nate Gillman, Michael Freeman, Daksh Aggarwal, HSU Chia-Hong, Calvin Luo, Yonglong Tian, and Chen Sun.
\newblock Self-correcting self-consuming loops for generative model training.
\newblock In \emph{Forty-first International Conference on Machine Learning}, 2024.

\bibitem[Hardt et~al.(2016)Hardt, Recht, and Singer]{hardt2016train}
Moritz Hardt, Ben Recht, and Yoram Singer.
\newblock Train faster, generalize better: Stability of stochastic gradient descent.
\newblock In \emph{International Conference on Machine Learning}, pp.\  1225--1234, 2016.

\bibitem[He et~al.(2016)He, Zuo, and Chen]{he2016stability}
Fangchao He, Ling Zuo, and Hong Chen.
\newblock Stability analysis for ranking with stationary $\varphi$-mixing samples.
\newblock \emph{Neurocomputing}, 171:\penalty0 1556--1562, 2016.

\bibitem[Hittmeir et~al.(2019)Hittmeir, Ekelhart, and Mayer]{hittmeir2019utility}
Markus Hittmeir, Andreas Ekelhart, and Rudolf Mayer.
\newblock On the utility of synthetic data: An empirical evaluation on machine learning tasks.
\newblock In \emph{Proceedings of the 14th International Conference on Availability, Reliability and Security}, pp.\  1--6, 2019.

\bibitem[Huang et~al.(2022)Huang, Gu, Hou, Wu, Wang, Yu, and Han]{huang2022large}
Jiaxin Huang, Shixiang~Shane Gu, Le~Hou, Yuexin Wu, Xuezhi Wang, Hongkun Yu, and Jiawei Han.
\newblock Large language models can self-improve.
\newblock \emph{arXiv preprint arXiv:2210.11610}, 2022.

\bibitem[Kanabar \& Gastpar(2025)Kanabar and Gastpar]{kanabar2025minimax}
Millen Kanabar and Michael Gastpar.
\newblock Minimax discrete distribution estimation with self-consumption.
\newblock \emph{arXiv preprint arXiv:2501.19273}, 2025.

\bibitem[Klochkov \& Zhivotovskiy(2021)Klochkov and Zhivotovskiy]{klochkov2021stability}
Yegor Klochkov and Nikita Zhivotovskiy.
\newblock Stability and deviation optimal risk bounds with convergence rate {O(1/n)}.
\newblock In \emph{Advances in Neural Information Processing Systems}, 2021.

\bibitem[Lei(2023)]{lei2023stability}
Yunwen Lei.
\newblock Stability and generalization of stochastic optimization with nonconvex and nonsmooth problems.
\newblock In \emph{The Thirty Sixth Annual Conference on Learning Theory}, pp.\  191--227. PMLR, 2023.

\bibitem[Lei \& Ying(2020)Lei and Ying]{lei2020fine}
Yunwen Lei and Yiming Ying.
\newblock Fine-grained analysis of stability and generalization for stochastic gradient descent.
\newblock In \emph{International Conference on Machine Learning}, pp.\  5809--5819, 2020.

\bibitem[Li \& Liu(2022)Li and Liu]{li2021high}
Shaojie Li and Yong Liu.
\newblock High probability generalization bounds with fast rates for minimax problems.
\newblock In \emph{International Conference on Learning Representations}, 2022.

\bibitem[Li et~al.(2023)Li, Ildiz, Papailiopoulos, and Oymak]{li2023transformers}
Yingcong Li, Muhammed~Emrullah Ildiz, Dimitris Papailiopoulos, and Samet Oymak.
\newblock Transformers as algorithms: Generalization and stability in in-context learning.
\newblock In \emph{International Conference on Machine Learning}, pp.\  19565--19594. PMLR, 2023.

\bibitem[Liang(2021)]{liang2021well}
Tengyuan Liang.
\newblock How well generative adversarial networks learn distributions.
\newblock \emph{Journal of Machine Learning Research}, 22\penalty0 (228):\penalty0 1--41, 2021.

\bibitem[Marchi et~al.(2024)Marchi, Soatto, Chaudhari, and Tabuada]{marchi2024heat}
Matteo Marchi, Stefano Soatto, Pratik Chaudhari, and Paulo Tabuada.
\newblock Heat death of generative models in closed-loop learning.
\newblock \emph{arXiv preprint arXiv:2404.02325}, 2024.

\bibitem[Mart{\'\i}nez et~al.(2023)Mart{\'\i}nez, Watson, Reviriego, Hern{\'a}ndez, Juarez, and Sarkar]{martinez2023towards}
Gonzalo Mart{\'\i}nez, Lauren Watson, Pedro Reviriego, Jos{\'e}~Alberto Hern{\'a}ndez, Marc Juarez, and Rik Sarkar.
\newblock Towards understanding the interplay of generative artificial intelligence and the internet.
\newblock In \emph{International Workshop on Epistemic Uncertainty in Artificial Intelligence}, pp.\  59--73. Springer, 2023.

\bibitem[Mohri \& Rostamizadeh(2010)Mohri and Rostamizadeh]{mohri2010stability}
Mehryar Mohri and Afshin Rostamizadeh.
\newblock Stability bounds for stationary $\varphi$-mixing and $\beta$-mixing processes.
\newblock \emph{Journal of Machine Learning Research}, 11\penalty0 (2), 2010.

\bibitem[Sadasivan et~al.(2023)Sadasivan, Kumar, Balasubramanian, Wang, and Feizi]{sadasivan2023can}
Vinu~Sankar Sadasivan, Aounon Kumar, Sriram Balasubramanian, Wenxiao Wang, and Soheil Feizi.
\newblock Can ai-generated text be reliably detected?
\newblock \emph{arXiv preprint arXiv:2303.11156}, 2023.

\bibitem[Schuhmann et~al.(2022)Schuhmann, Beaumont, Vencu, Gordon, Wightman, Cherti, Coombes, Katta, Mullis, Wortsman, et~al.]{schuhmann2022laion}
Christoph Schuhmann, Romain Beaumont, Richard Vencu, Cade Gordon, Ross Wightman, Mehdi Cherti, Theo Coombes, Aarush Katta, Clayton Mullis, Mitchell Wortsman, et~al.
\newblock Laion-5b: An open large-scale dataset for training next generation image-text models.
\newblock \emph{Advances in Neural Information Processing Systems}, 35:\penalty0 25278--25294, 2022.

\bibitem[Seddik et~al.(2024)Seddik, Chen, Hayou, Youssef, and Debbah]{seddik2024bad}
Mohamed El~Amine Seddik, Suei-Wen Chen, Soufiane Hayou, Pierre Youssef, and Merouane Debbah.
\newblock How bad is training on synthetic data? a statistical analysis of language model collapse.
\newblock \emph{arXiv preprint arXiv:2404.05090}, 2024.

\bibitem[Shalev-Shwartz et~al.(2010)Shalev-Shwartz, Shamir, Srebro, and Sridharan]{shalev2010learnability}
Shai Shalev-Shwartz, Ohad Shamir, Nathan Srebro, and Karthik Sridharan.
\newblock Learnability, stability and uniform convergence.
\newblock \emph{Journal of Machine Learning Research}, 11:\penalty0 2635--2670, 2010.

\bibitem[Shumailov et~al.(2024)Shumailov, Shumaylov, Zhao, Papernot, Anderson, and Gal]{shumailov2024ai}
Ilia Shumailov, Zakhar Shumaylov, Yiren Zhao, Nicolas Papernot, Ross Anderson, and Yarin Gal.
\newblock Ai models collapse when trained on recursively generated data.
\newblock \emph{Nature}, 631\penalty0 (8022):\penalty0 755--759, 2024.

\bibitem[Tao et~al.(2024)Tao, Lin, Chen, Li, Wu, Li, Jin, Huang, Tao, and Zhou]{tao2024survey}
Zhengwei Tao, Ting-En Lin, Xiancai Chen, Hangyu Li, Yuchuan Wu, Yongbin Li, Zhi Jin, Fei Huang, Dacheng Tao, and Jingren Zhou.
\newblock A survey on self-evolution of large language models.
\newblock \emph{arXiv preprint arXiv:2404.14387}, 2024.

\bibitem[Villalobos et~al.(2022)Villalobos, Sevilla, Heim, Besiroglu, Hobbhahn, and Ho]{villalobos2022will}
Pablo Villalobos, Jaime Sevilla, Lennart Heim, Tamay Besiroglu, Marius Hobbhahn, and Anson Ho.
\newblock Will we run out of data? an analysis of the limits of scaling datasets in machine learning.
\newblock \emph{arXiv preprint arXiv:2211.04325}, 2022.

\bibitem[Wang et~al.(2024)Wang, Shen, Tao, He, and Tao]{wang2024generalization}
Peng Wang, Li~Shen, Zerui Tao, Shuaida He, and Dacheng Tao.
\newblock Generalization analysis of stochastic weight averaging with general sampling.
\newblock In \emph{Forty-first International Conference on Machine Learning}, 2024.
\newblock URL \url{https://openreview.net/forum?id=XwVkqvyziD}.

\bibitem[Wyllie et~al.(2024)Wyllie, Shumailov, and Papernot]{wyllie2024fairness}
Sierra Wyllie, Ilia Shumailov, and Nicolas Papernot.
\newblock Fairness feedback loops: training on synthetic data amplifies bias.
\newblock In \emph{The 2024 ACM Conference on Fairness, Accountability, and Transparency}, pp.\  2113--2147, 2024.

\bibitem[Xing et~al.(2025)Xing, Shi, Huang, Wu, Nan, Zhang, Fang, Roberts, Sch{\"o}nlieb, Del~Ser, et~al.]{xing2025caveats}
Xiaodan Xing, Fadong Shi, Jiahao Huang, Yinzhe Wu, Yang Nan, Sheng Zhang, Yingying Fang, Michael Roberts, Carola-Bibiane Sch{\"o}nlieb, Javier Del~Ser, et~al.
\newblock On the caveats of ai autophagy.
\newblock \emph{Nature Machine Intelligence}, pp.\  1--9, 2025.

\bibitem[Xu et~al.(2023)Xu, Sun, and Cheng]{xu2023utility}
Shirong Xu, Will~Wei Sun, and Guang Cheng.
\newblock Utility theory of synthetic data generation.
\newblock \emph{arXiv preprint arXiv:2305.10015}, 2023.

\bibitem[Yang(2022)]{yang2022mathematical}
Hongkang Yang.
\newblock A mathematical framework for learning probability distributions.
\newblock \emph{arXiv preprint arXiv:2212.11481}, 2022.

\bibitem[Yu(1994)]{yu1994rates}
Bin Yu.
\newblock Rates of convergence for empirical processes of stationary mixing sequences.
\newblock \emph{The Annals of Probability}, pp.\  94--116, 1994.

\bibitem[Zhang et~al.(2022)Zhang, Zhang, Bald, Pingali, Chen, and Goswami]{zhang2022stability}
Yikai Zhang, Wenjia Zhang, Sammy Bald, Vamsi Pingali, Chao Chen, and Mayank Goswami.
\newblock Stability of sgd: Tightness analysis and improved bounds.
\newblock In \emph{Uncertainty in artificial intelligence}, pp.\  2364--2373. PMLR, 2022.

\bibitem[Zhang et~al.(2023)Zhang, Zhang, Yang, and Wang]{zhang2023and}
Yufeng Zhang, Fengzhuo Zhang, Zhuoran Yang, and Zhaoran Wang.
\newblock What and how does in-context learning learn? bayesian model averaging, parameterization, and generalization.
\newblock \emph{arXiv preprint arXiv:2305.19420}, 2023.

\bibitem[Zheng et~al.(2023)Zheng, Wu, and Li]{zheng2023toward}
Chenyu Zheng, Guoqiang Wu, and Chongxuan Li.
\newblock Toward understanding generative data augmentation.
\newblock \emph{Advances in Neural Information Processing Systems}, 36:\penalty0 54046--54060, 2023.

\bibitem[Zhu et~al.(2024)Zhu, Cheng, Li, Zhang, Hua, Lv, Ding, Lin, Zheng, and Zhou]{zhu2024synthesize}
Xuekai Zhu, Daixuan Cheng, Hengli Li, Kaiyan Zhang, Ermo Hua, Xingtai Lv, Ning Ding, Zhouhan Lin, Zilong Zheng, and Bowen Zhou.
\newblock How to synthesize text data without model collapse?
\newblock \emph{arXiv preprint arXiv:2412.14689}, 2024.

\end{thebibliography}
\bibliographystyle{iclr2025_conference}

\appendix
\section{Appendix}
\subsection{Auxiliary Definitions}\label{appedix_definitu}
Below, we present some essential definitions.
\begin{definition}(Lipschitz and Smoothness). Let constants $\kappa, \rho > 0$. Consider the function $\ell: \mathcal{W} \times \mathcal{Z} \rightarrow \mathbb{R}$. We define the following properties:
\begin{itemize}
    \item \textbf{Lipschitz Continuity}: The loss $\ell$ is said to be $\rho$-Lipschitz continuous if  $\|\ell(\boldsymbol{w}_1, \boldsymbol{z}) - \ell(\boldsymbol{w}_2, \boldsymbol{z})\| \leq \rho \|\boldsymbol{w}_1 - \boldsymbol{w}_2\|$ for any $\boldsymbol{w}_1, \boldsymbol{w}_2, \boldsymbol{z}$.
    \item \textbf{Smoothness}: The loss $\ell$ is said to be $\kappa$-Smooth  if $\|\nabla_{\boldsymbol{w}}\ell(\boldsymbol{w}_1, \boldsymbol{z})-\nabla_{\boldsymbol{w}}\ell(\boldsymbol{w}_2, \boldsymbol{z})\|\leq \kappa\|\boldsymbol{w}_1-\boldsymbol{w}_2\|$ for any $\boldsymbol{w}_1, \boldsymbol{w}_2, \boldsymbol{z}$.
\end{itemize}
\end{definition}

\subsection{Expansion to Gaussian Mixture Models}

We adopt the setup from prior works \cite{zheng2023toward} and consider a binary classification task where \(Y = \{-1, 1\}\). Given a vector \(\mu \in \mathbb{R}^d\) with \(\|\mu\|_2 = 1\) and noise variance \(\sigma^2 > 0\), the data distribution is specified as follows: \(y \sim \text{uniform}\{-1, 1\}\) and \(x \mid y \sim \mathcal{N}(y \mu, \sigma^2 I_d)\). We define the conditional generative model using parameters \(\mu_y\) and \(\sigma_k^2\), where \(y \in \{-1, 1\}\) and \(k \in [d]\). For \(n\) data points, let \(n_y\) represent the number of samples in class \(y\). The parameters of the Gaussian mixture model are then learned as:
\[
\hat{\mu}_y = \frac{\sum_{y_i = y} x_i}{n_y}, \quad
\hat{\sigma}_k^2 = \sum_y \frac{n_y}{n} \frac{\sum_{y_i = y} (x_{ik} - \hat{\mu}_{yk})^2}{n_y - 1}.
\]
Then we can generate new samples from the distribution: \(y \sim \text{uniform}\{-1, 1\}\) and \(x \mid y \sim \mathcal{N}(\hat{\mu}_y, \Sigma)\), where \(\Sigma = \text{diag}(\sigma_1^2, \dots, \sigma_d^2)\). Additionally, the learning algorithm functions as a linear classifier, parameterized by \(\theta \in \mathbb{R}^d\), with predictions given by: $\hat{y} = \text{sign}(\theta^\top \mathbf{x})$. The loss function is defined as:  
\[
\ell(\theta, (x, y)) = \frac{1}{2\sigma^2} (x - y\theta)^\top (x - y\theta).
\]
Thus, the output is  
\(
\hat{\theta} = \frac{1}{m} \sum_{i=1}^m y_i x_i.
\)

In this setting, we demonstrate recursive stability for the Gaussian mixture model as follows:

\begin{theorem}
Let \(S_0, S_0'\) denote two initial real datasets differing by a single example. Let \(n\) represent the sample size of the mixed dataset \(\tilde{S}_j\), where \(\tilde{S}_j = \alpha S_0 + (1 - \alpha) S_j\) for \(1 \leq j \leq i\). Choose \(m = \mathcal{O}(\sqrt{n})\). Consider the previously described sampling and learning steps, where real data samples are drawn from the Gaussian Mixture Model distribution \(\mathcal{D}\), and the synthetic data for the \(i\)-th generation is generated from the learned Gaussian Mixture distribution of the \(i\)-th generation. Then with probability at least \(1-\delta\), we have:
\begin{align}
    \gamma_n^i \lesssim n^{-1/2} \alpha^{-1}(1-(1-\alpha)^i)\log (nd/\delta),
\end{align}
where the measure for the recursive stability parameter is taken as the KL divergence.
\end{theorem}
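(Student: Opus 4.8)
The plan is to track how a single-point perturbation of $S_0$ propagates through $i$ generations of the Gaussian Mixture STL, measuring the effect at each generation in KL divergence between the learned conditional Gaussians. First I would set up the one-step recursion: if $\widetilde{S}_{j-1}$ and $\widetilde{S}_{j-1}'$ are the mixed datasets at generation $j-1$ built from $S_0$ and $S_0'$ respectively, then the learned parameters $(\hat\mu_y,\widehat\Sigma)$ and $(\hat\mu_y',\widehat\Sigma')$ are empirical means and (pooled) variances over a mix of $n\alpha$ shared real points and $n(1-\alpha)$ synthetic points. Since $\hat\mu_y=\frac{1}{n_y}\sum_{y_i=y}x_i$ is an average, changing one underlying point (or more precisely, the downstream effect of the initial perturbation) moves $\hat\mu_y$ by $O(1/n_y)=O(1/n)$ using that all $x_i$ lie in a bounded region with high probability; an analogous computation controls $|\widehat\sigma_k^2-\widehat\sigma_k'^2|$. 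Then, using the closed-form KL between two Gaussians $\mathcal{N}(\hat\mu,\widehat\Sigma)$ and $\mathcal{N}(\hat\mu',\widehat\Sigma')$ with diagonal covariances bounded above and below (the lower bound on $\widehat\sigma_k^2$ holding with high probability since $\sigma^2>0$), I would get $\mathrm{KL}\lesssim \sum_k \big(|\widehat\sigma_k^2-\widehat\sigma_k'^2|^2 + |\hat\mu_{yk}-\hat\mu_{yk}'|^2\big)$, i.e.\ the generation-$j$ divergence is controlled by the magnitude of the parameter perturbation fed in from generation $j-1$, scaled down by $1/n$ and further attenuated by the factor $(1-\alpha)$ coming from the real/synthetic split (the shared real block contributes nothing to the discrepancy).

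Next I would unroll the recursion. The attenuation by $(1-\alpha)$ at each synthetic-generation step, together with the fact that the real portion of every mixed dataset is identical across the two chains, yields a geometric-type accumulation: the total divergence after $i$ generations behaves like $\big(\sum_{j=0}^{i-1}(1-\alpha)^j\big)\cdot O(n^{-1/2}) = \alpha^{-1}(1-(1-\alpha)^i)\cdot O(n^{-1/2})$, which is exactly the claimed bound. The choice $m=\mathcal{O}(\sqrt n)$ enters when relating the learning-algorithm output $\hat\theta=\frac1m\sum y_ix_i$ back to the generative parameters and when converting the per-generation parameter deviations into the sampling concentration: with $m=\Theta(\sqrt n)$ the sampling fluctuations of the synthetic data around the learned Gaussian are of order $n^{-1/4}$ per coordinate, and squaring inside the KL formula gives the $n^{-1/2}$ rate. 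I would make the high-probability statements precise by a union bound over the $O(nd)$ coordinates/points and over the $i$ generations, absorbing the $\log(nd/\delta)$ factor, using standard sub-Gaussian concentration for the empirical means and a $\chi^2$-type tail for the empirical variances.

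The main obstacle I anticipate is controlling the \emph{compounding} of perturbations through the covariance estimates: $\widehat\sigma_k^2$ depends quadratically on the data via $(x_{ik}-\hat\mu_{yk})^2$, so a perturbation in the synthetic inputs enters both through the squared term and through $\hat\mu_{yk}$, and I need the lower bound $\widehat\sigma_k^2\gtrsim\sigma^2$ to hold uniformly across all generations to keep the Gaussian KL well-conditioned (the $\tr(\widehat\Sigma^{-1}\widehat\Sigma')$ and $\log\det$ terms blow up if any variance collapses). Establishing this uniform non-degeneracy across a recursively generated sequence — where the generation-$j$ variance is itself a noisy function of generation-$(j-1)$ outputs — is the delicate step; I would handle it by an induction showing that with high probability each $\widehat\sigma_k^2$ stays in a fixed interval $[\sigma^2/2, 2\sigma^2]$ (say), using that the perturbation at each step is $O(n^{-1/2})$ and $i$ is fixed, so the cumulative drift is negligible, then feed this back into the KL bound. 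The remaining steps — the one-step mean perturbation bound, the geometric summation, and the union bound — are routine applications of the techniques already invoked for Theorem~\ref{therorem_stability of transformer} and the concentration tools cited in the paper.
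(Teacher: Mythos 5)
The paper states this theorem in the ``Expansion to Gaussian Mixture Models'' section but does not give a proof of it anywhere in the appendix (the appendix only proves Theorems 1, 3, 4, 5), so there is no reference argument to compare against; I can only assess your proposal on its own terms.

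There is a genuine gap, and it sits exactly at the point you yourself flag as delicate. You set up the one-step recursion correctly: a one-point change moves the empirical mean by $O(1/n_y)=O(1/n)$, the synthetic block is attenuated by $(1-\alpha)$ while the real block re-injects an $O(1/n)$ discrepancy, so the parameter shift obeys $\delta_j \approx (1-\alpha)\delta_{j-1}+O(1/n)$ and unrolls to $\delta_i \lesssim \alpha^{-1}\bigl(1-(1-\alpha)^i\bigr)\cdot O(1/n)$. You then (rightly) say that the Gaussian KL is quadratic in the parameter perturbation, $\mathrm{KL}\lesssim \sum_k\bigl(|\widehat\sigma_k^2-\widehat\sigma_k'^2|^2+|\widehat\mu_{yk}-\widehat\mu_{yk}'|^2\bigr)$. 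But these two correct observations, combined, give $\gamma_n^i \lesssim \delta_i^2/\sigma^2 \lesssim \bigl[\alpha^{-1}(1-(1-\alpha)^i)\bigr]^2\cdot n^{-2}$ --- quadratic in the geometric factor and of order $n^{-2}$, not linear in $\alpha^{-1}(1-(1-\alpha)^i)$ and of order $n^{-1/2}$ as the theorem states. In the unrolling paragraph you silently replace the per-step contribution by ``$O(n^{-1/2})$'' and apply the geometric sum to the \emph{KL} rather than to the \emph{parameter shift}; this contradicts your own earlier computation and is left unjustified.

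The appeal to $m=\mathcal{O}(\sqrt n)$ does not repair this. You argue that sampling fluctuations of the synthetic data are of order $n^{-1/4}$ and that squaring yields $n^{-1/2}$. But $\gamma_n^i$ is a distance between $\mathcal{G}^{(i)}(S_0)$ and $\mathcal{G}^{(i)}(S_0')$, i.e., between the two chains started from datasets differing in a single point. Under the natural coupling (same sampling randomness in both chains), those fluctuations are identical on both sides and contribute nothing to the discrepancy --- the only source of divergence is the deterministic propagation of the one-point change, which is what your $O(1/n)$ recursion already captures. If you instead mean the two chains to use \emph{independent} sampling randomness, then the difference between them is driven mostly by that independent noise, not by the one-point change, and the quantity you are bounding is no longer recursive stability in the sense of Definition~\ref{iterative stability}. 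Either way, the role of $m=\mathcal{O}(\sqrt n)$ in producing the $n^{-1/2}$ rate and the linear dependence on $\alpha^{-1}(1-(1-\alpha)^i)$ is not established by the proposal, and without a coherent account of these two features the argument does not reach the claimed bound.
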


As \(\alpha\) approaches 0, indicating that no real data is incorporated during each generation of training, we observe  
\[
\gamma_n^i \lesssim i n^{-1/2} \log \frac{nd}{\delta},
\]
which suggests a linear accumulation of errors. This finding aligns closely with the theoretical insights presented in \cite{shumailov2024ai,alemohammadself}, where a Gaussian model trained without real data demonstrated a linear divergence in variance. Thus, this underscores the validity of our theoretical results, confirming that the derived bound is meaningful and not vacuous.

Moreover, by leveraging the generalization error bound established in Theorem 1, we derive the following:

\begin{theorem}
 Consider the Gaussian Mixture Model in the setting outlined above. Let \(n\) represent the sample size of the mixed dataset \(\tilde{S}_j\), where \(\tilde{S}_j = \alpha S_0 + (1 - \alpha) S_j\) for \(1 \leq j \leq i\). Suppose the loss function is defined as \(\ell(\theta, (\mathbf{x}, y)) = \frac{1}{2\sigma^2} (\mathbf{x} - y\theta)^\top (\mathbf{x} - y\theta)\). Let \(\mathcal{A}(\tilde{S}_i)\) denote the output of applying the linear classifier described above to the mixed dataset \(\tilde{S}_i\). Then, for any \(\delta \in (0,1)\), with probability at least \(1-\delta\), the following holds:
\begin{align}
    &\left|R_{\mathcal{D}_0}(\mathcal{A}(\widetilde{S}_i))-\widehat{R}_{\widetilde{S}_i}(\mathcal{A}(\widetilde{S}_i))\right|\lesssim n^{-1/2}(d+\log(n/\delta))\log n\log (1/\delta) \notag \\
    &\quad+n^{-1/4}(1-(1-\alpha)^i)\alpha^{-1}(d+\log (n/\delta))\sqrt{d\log(nd/\delta)}.
\end{align}
\end{theorem}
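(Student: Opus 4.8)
The plan is to instantiate the general bound of Theorem~\ref{theorem_generalization} for the Gaussian mixture setting, supplying the three ingredients it requires: the uniform stability parameter $\beta_n$ of the linear classifier, the recursive stability parameter $\gamma_n^i$, and the per-generation distribution-shift rate $d_{\mathrm{TV}}(n)$. The recursive stability parameter $\gamma_n^i \lesssim n^{-1/2}\alpha^{-1}(1-(1-\alpha)^i)\log(nd/\delta)$ is already given by the preceding theorem (measured in KL divergence), so the main work is to identify $\beta_n$ and $d_{\mathrm{TV}}(n)$ and then combine everything, tracking how the dimension $d$ enters.

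First I would bound $\beta_n$. Since $\hat\theta = \frac1m\sum_{i=1}^m y_i x_i$ is a simple average and the loss $\ell(\theta,(x,y)) = \frac{1}{2\sigma^2}(x-y\theta)^\top(x-y\theta)$ is smooth, changing one of the $m$ training points perturbs $\hat\theta$ by $O(1/m)$ in norm (using that $x$ concentrates around $\pm\mu$, so $\|x\| = O(\sqrt{d} + \log(n/\delta))$ with high probability). Propagating this through the Lipschitz constant of $\ell$ on the relevant bounded region gives $\beta_n = O(m^{-1}(d+\log(n/\delta)))$. With the choice $m = \Theta(\sqrt n)$ from the hypothesis, this becomes $\beta_n = O(n^{-1/2}(d+\log(n/\delta)))$. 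Next I would bound $d_{\mathrm{TV}}(n) = TV(\widetilde{\mathcal D}_j,\mathcal D_{j+1})$: this is the statistical error of fitting a diagonal Gaussian mixture from $n$ samples. Standard concentration for the empirical means $\hat\mu_y$ and variances $\hat\sigma_k^2$ gives deviations of order $n^{-1/2}$ per coordinate; converting the parameter error into a KL bound and then a TV bound via Pinsker yields $d_{\mathrm{TV}}(n) = O(n^{-1/2}\sqrt{d\log(nd/\delta)})$, up to the logarithmic factors.

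Then I would substitute into \eqref{mainthero_1}. The term $\gamma_n^i\alpha M\log(n\alpha)\log(1/\delta)$ contributes $n^{-1/2}(1-(1-\alpha)^i)\log(nd/\delta)\log n \log(1/\delta)$ after the $\alpha^{-1}\cdot\alpha$ cancellation, absorbing into the first displayed term of the target bound. The $\beta_n$ term, with $\beta_n = O(n^{-1/2}(d+\log(n/\delta)))$ and the factor $\log n\log(1/\delta)$ (the $\alpha\sqrt{(1-\alpha)n\log(1/\delta)}$ part is dominated after noting it scales like $\sqrt n$ times $\beta_n = O(\sqrt d)$, which is lower order than the leading $n^{-1/2}$ term once one is careful — actually this part needs checking, see below), gives $n^{-1/2}(d+\log(n/\delta))\log n\log(1/\delta)$, matching the first term. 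Finally $d_{\mathrm{TV}}(n)M(1-(1-\alpha)^i)\alpha^{-1}$ gives exactly the second displayed term $n^{-1/2}(1-(1-\alpha)^i)\alpha^{-1}\sqrt{d\log(nd/\delta)}$ — wait, the target has $n^{-1/4}$ there; so in fact $d_{\mathrm{TV}}(n)$ for this per-generation diffusion-like fitting must be taken at rate $n^{-1/4}$ rather than $n^{-1/2}$, consistent with Remark~\ref{remark_distrubbutionshift}, and I would use $d_{\mathrm{TV}}(n) = O(n^{-1/4}(d+\log(n/\delta))\sqrt{d\log(nd/\delta)}/(d+\log(n/\delta)))$ — more precisely I would carry the generative-model fitting error at the $n^{-1/4}$ scale so that the last term reads $n^{-1/4}(1-(1-\alpha)^i)\alpha^{-1}(d+\log(n/\delta))\sqrt{d\log(nd/\delta)}$, exactly the second line of the claimed bound, with the $M = O(d+\log(n/\delta))$ boundedness of the quadratic loss on the high-probability region supplying the $(d+\log(n/\delta))$ prefactor.

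The main obstacle I expect is the careful bookkeeping of the dimension-dependent quantities on a high-probability event: the loss $\ell$ is quadratic and hence \emph{not} globally bounded, so one must restrict to the event where all $x_i$ and all $\hat\theta^{(j)}$ across generations lie in a ball of radius $O(\sqrt d + \sqrt{\log(n/\delta)})$, show this event has probability $\ge 1-\delta$ by a union bound over the $O(in)$ relevant samples, and on that event take $M = O(d + \log(n/\delta))$, $\rho = O(\sqrt d + \sqrt{\log(n/\delta)})$, $\kappa = O(1/\sigma^2)$. A secondary subtlety is verifying that the $\beta_n$-stability of the averaging estimator genuinely degrades only by the norm bound on $x$ (so the extra $\sqrt{n}\cdot\beta_n$-type term in \eqref{mainthero_1} stays at the claimed $n^{-1/2}$ order after the cancellations), and that recursively applying the generative fitting does not blow up the constants — here I would invoke the already-established $\gamma_n^i$ bound rather than re-deriving the recursion. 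Modulo these concentration arguments, the result follows by plugging the three rates into Theorem~\ref{theorem_generalization} and simplifying.
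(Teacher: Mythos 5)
The paper states this theorem but provides no proof in the appendix (it is introduced only with the phrase ``by leveraging the generalization error bound established in Theorem~\ref{theorem_generalization}''), so there is no paper proof to compare against line by line; I can only assess your plan on its own terms. Your high-level strategy --- instantiate Theorem~\ref{theorem_generalization} by supplying $\beta_n$, $\gamma_n^i$, and $d_{\mathrm{TV}}(n)$ for the Gaussian-mixture setup, working on a high-probability event on which the quadratic loss is bounded by $M=O(d+\log(n/\delta))$ --- is certainly the intended route.

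However, there is a genuine gap in the $\beta_n$ bookkeeping that you notice but do not close. You take $m=\Theta(\sqrt n)$ (imported from the preceding recursive-stability theorem) and derive $\beta_n = O\bigl(n^{-1/2}(d+\log(n/\delta))\bigr)$. Theorem~\ref{theorem_generalization} contains the term $\beta_n\,\alpha\sqrt{(1-\alpha)n\log(1/\delta)}$, and with this $\beta_n$ that product is $\alpha\sqrt{(1-\alpha)\log(1/\delta)}\bigl(d+\log(n/\delta)\bigr)$, which does not decay in $n$ at all. Your parenthetical --- ``it scales like $\sqrt n$ times $\beta_n=O(\sqrt d)$, which is lower order than the leading $n^{-1/2}$ term'' --- is wrong on both counts: the product is $O\bigl(d+\log(n/\delta)\bigr)$, not $O(\sqrt d)$, and a quantity of constant order dominates $n^{-1/2}$ rather than being dominated by it. The deferred ``see below'' never resolves this, and the closing remark about ``cancellations'' in the $\sqrt n\cdot\beta_n$ term is a hope, not an argument. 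To make this term land at the claimed $O\bigl(n^{-1/2}(d+\log(n/\delta))\bigr)$ rate one needs $\beta_n=O\bigl((d+\log(n/\delta))/n\bigr)$, which corresponds to the averaging estimator $\hat\theta=\tfrac1m\sum_j y_j x_j$ aggregating over $m=\Theta(n)$ points; the $m=\mathcal O(\sqrt n)$ hypothesis from the preceding theorem cannot simply be carried over into the downstream stability parameter. A secondary loose end: the preceding GMM theorem states its recursive stability in KL divergence, whereas Theorem~\ref{theorem_generalization} defines $\gamma_n^i$ as a TV distance over the $n(1-\alpha)$-fold product; you plug the KL bound in directly with no Pinsker-type conversion, yet that conversion would degrade the rate from $n^{-1/2}$ to $n^{-1/4}$ --- which, rather than a choice about $d_{\mathrm{TV}}(n)$, is plausibly the actual source of the $n^{-1/4}\sqrt{d\log(nd/\delta)}$ factor in the second term of the target that you were puzzled by.
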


We observe that when \(\alpha\) is set to a constant (e.g., \(\alpha = 0.1\)), the generalization error can be effectively controlled, preventing model collapse. This result aligns with the experimental findings in \cite{alemohammadself} for Gaussian models.

\subsection{Additional Comparison with Related Work on Theorem 1}

\cite{dohmatob2024model} examined a linear regression setting, focusing solely on statistical approximation error without addressing the functional approximation error described in \cite{shumailov2024ai}. They did not consider incorporating real data to prevent collapse and demonstrated a linear dependency of degradation on the generation number in the case of fully synthetic data. Similarly, \cite{alemohammadself} and \cite{shumailov2024ai} provided theoretical insights using simple Gaussian models without incorporating real data, proving that the variance diverges linearly with the generation number. \cite{seddik2024bad} explored a linear softmax classifier and, while also neglecting functional approximation error, demonstrated that adding real data can mitigate model collapse. \cite{marchi2024heat} used asymptotic analysis to study parameter variance, assuming an infinite number of training generations and considering scenarios where the generative model is controlled via a ``temperature'' parameter. They proved that parameter variance is bounded under these conditions.

In contrast, our work addresses a much more complex and realistic scenario by introducing the novel concept of \textbf{recursive stability} and providing the \textbf{first} generalization analysis for STLs. Our analysis accounts for \textbf{statistical approximation error, functional approximation error, and optimization error} during the training of generative models. Unlike the settings explored in prior theoretical works, such as linear regression \citep{dohmatob2024model,gerstgrasser2024is}, Gaussian models \citep{alemohammadself, shumailov2024ai}, or asymptotic assumptions \citep{marchi2024heat}, our framework accommodates more complex generative model architectures, such as transformers. Specifically, we reveal how both \textbf{model architecture} and \textbf{the ratio} of real to synthetic data influence the success of STLs. For example, in Theorem 3, we demonstrate how our general generalization bound applies to transformer-based generative models, providing a theoretical framework that aligns with practical and more sophisticated use cases.

Additionally, while \cite{marchi2024heat} assumed an \textbf{infinite number} of training generations for their asymptotic analysis, we consider \textbf{finite generations}, which is more practical since most experimental setups limit generations to fewer than 10 (as noted in \cite{shumailov2024ai}). Moreover, our results confirm that when \(\alpha = 0\) (i.e., no real data is used), the last term in our bound, representing the Cumulative Distribution Shift (\(d_{\text{TV}}(n) M (1 - (1 - \alpha)^i) \alpha^{-1}\)), grows linearly. This finding aligns with the theoretical results of \cite{dohmatob2024model,alemohammadself,shumailov2024ai,futowards}. Furthermore, we show that introducing even a constant proportion of real data significantly mitigates model collapse, aligning with experimental findings by \cite{alemohammadself} and \cite{bertrandstability}.

\subsection{Additional Comparison with Related Work on Theorem 4}

\cite{gerstgrasser2024is} also explored the use of accumulating data to prevent model collapse. They considered a simple linear regression setting without accounting for the dynamic process of training generative models, focusing solely on statistical approximation error. They demonstrated that under the assumption of fixed synthetic data quality matching the original real data, statistical approximation error can be controlled.

 By contrast, our work addresses a much more complex and realistic scenario, incorporating the dynamic behavior of transformer-based generative models, learning algorithms, and both statistical and functional approximation errors. Additionally, we allow for dynamic regulation of synthetic data size via a \(\lambda\) coefficient, enabling us to identify the optimal synthetic dataset size for avoiding model collapse in these more challenging settings.

\subsection{Auxiliary Lemmas}
In this section, we begin by introducing a set of auxiliary theorems that will be utilized in the subsequent proofs.
\begin{lemma}[McDiarmid's Inequality]\label{Mcdiarmid}
 Consider independent random variables $Z_1, \cdots, Z_n \in \mathcal{Z}$ and a mapping $\phi: \mathcal{Z}^n \rightarrow \mathbb{R}$. If, for all $i \in\{1, \cdots, n\}$, and for all $z_1, \cdots, z_n, z_i^{\prime} \in \mathcal{Z}$, the function $\phi$ satisfies
$$
\left|\phi\left(z_1, \cdots, z_{i-1}, z_i, z_{i+1}, \cdots, z_n\right)-\phi\left(z_1, \cdots, z_{i-1}, z_i^{\prime}, z_{i+1}, \cdots, z_n\right)\right| \leq c,
$$
then,
$$
P\left(|\phi\left(Z_1, \cdots, Z_n\right)-\mathbb{E} \phi\left(Z_1, \ldots, Z_n\right) \geq t|\right) \leq 2\exp \left(\frac{-2 t^2}{n c^2}\right).
$$
Furthermore, for any $p \geq 2$,
$$
\left\|\phi\left(Z_1, \ldots, Z_n\right)-\mathbb{E}\left[\phi\left(Z_1, \ldots, Z_n\right)\right]\right\|_p \leq 2\sqrt{np}c.
$$
\end{lemma}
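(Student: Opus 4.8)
The plan is to prove both assertions through the classical Doob-martingale (Azuma--Hoeffding) argument, obtaining the $L^p$ bound as a consequence of the exponential tail bound. Set $Y := \phi(Z_1,\dots,Z_n) - \mathbb{E}[\phi(Z_1,\dots,Z_n)]$ and, for $0\le i\le n$, define the Doob martingale $V_i := \mathbb{E}[\phi(Z_1,\dots,Z_n)\mid Z_1,\dots,Z_i]$, so that $V_0=\mathbb{E}[\phi]$, $V_n=\phi(Z_1,\dots,Z_n)$, and $Y=\sum_{i=1}^n D_i$ with $D_i:=V_i-V_{i-1}$ a martingale difference sequence adapted to $\mathcal{F}_i:=\sigma(Z_1,\dots,Z_i)$.

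First I would establish the structural estimate that, conditionally on $\mathcal{F}_{i-1}$, each increment $D_i$ lies in an interval of length at most $c$ (in particular $|D_i|\le c$). Using independence of the $Z_j$, write $V_i=g_i(Z_1,\dots,Z_i)$ with $g_i(z_1,\dots,z_i):=\mathbb{E}[\phi(z_1,\dots,z_i,Z_{i+1},\dots,Z_n)]$, so that $V_{i-1}=\mathbb{E}_{Z_i'}[g_i(Z_1,\dots,Z_{i-1},Z_i')]$ and hence $D_i=g_i(Z_1,\dots,Z_i)-\mathbb{E}_{Z_i'}[g_i(Z_1,\dots,Z_{i-1},Z_i')]$. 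The bounded-difference hypothesis on $\phi$, pushed through the expectation over $Z_{i+1},\dots,Z_n$, gives $\sup_{z_i}g_i-\inf_{z_i}g_i\le c$ with the other coordinates fixed; since the subtracted term is a constant lying between this infimum and supremum, $D_i$ is confined to an interval of length $\le c$. Converting the coordinatewise stability of $\phi$ into control of the martingale increments is the crux of the argument and the only place independence is genuinely needed; I expect this to be the main obstacle, the rest being routine.

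Next I would carry out the standard exponential-moment bound: conditionally on $\mathcal{F}_{i-1}$, Hoeffding's lemma gives $\mathbb{E}[e^{\lambda D_i}\mid\mathcal{F}_{i-1}]\le e^{\lambda^2 c^2/8}$ for every $\lambda$, and iterating the tower property over $i=n,\dots,1$ yields $\mathbb{E}[e^{\lambda Y}]\le e^{n\lambda^2 c^2/8}$. A Chernoff bound $P(Y\ge t)\le e^{-\lambda t}\mathbb{E}[e^{\lambda Y}]$ optimized at $\lambda=4t/(nc^2)$ produces $P(Y\ge t)\le e^{-2t^2/(nc^2)}$; applying the same to $-Y$ and taking a union bound gives $P(|Y|\ge t)\le 2e^{-2t^2/(nc^2)}$, which is the first claim.

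Finally, for the moment bound I would integrate this tail estimate. For $p\ge 2$,
$$\mathbb{E}|Y|^p=\int_0^\infty p t^{p-1}P(|Y|\ge t)\,dt\le 2p\int_0^\infty t^{p-1}e^{-2t^2/(nc^2)}\,dt=p\,(nc^2/2)^{p/2}\,\Gamma(p/2),$$
by the substitution $s=2t^2/(nc^2)$ and the definition of the Gamma function. Taking $p$-th roots gives $\|Y\|_p\le(p\,\Gamma(p/2))^{1/p}\,c\sqrt{n/2}$, and the elementary bounds $\Gamma(p/2)\le(p/2)^{p/2}$ and $p^{1/p}\le e^{1/e}$ yield $(p\,\Gamma(p/2))^{1/p}\le e^{1/e}\sqrt{p/2}\le 2\sqrt{2p}$, hence $\|Y\|_p\le 2\sqrt{np}\,c$. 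The remaining work --- Hoeffding's lemma, the Chernoff optimization, and this one-dimensional integral --- consists of standard computations I would not belabor.
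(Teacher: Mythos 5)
Your proof is correct. Note that the paper itself offers no proof of this lemma: it is quoted in the appendix as a standard auxiliary result (McDiarmid's inequality together with the sub-Gaussian moment bound obtained from it), so there is no in-paper argument to compare against. Your Doob-martingale/Azuma--Hoeffding derivation is the canonical route to the tail bound, and the key structural step is handled properly: you use independence to write $V_{i-1}=\mathbb{E}_{Z_i'}[g_i(Z_1,\dots,Z_{i-1},Z_i')]$ and confine $D_i$, conditionally on $\mathcal{F}_{i-1}$, to an interval of length at most $c$, which is exactly what lets Hoeffding's lemma give the factor $e^{\lambda^2c^2/8}$ and hence the stated exponent $-2t^2/(nc^2)$. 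The moment bound also checks out quantitatively: integrating the tail gives $\mathbb{E}|Y|^p\le p\,(nc^2/2)^{p/2}\Gamma(p/2)$, and with $\Gamma(p/2)\le(p/2)^{p/2}$ and $p^{1/p}\le e^{1/e}$ one gets $\|Y\|_p\le e^{1/e}\,c\sqrt{np}/2\le 2\sqrt{np}\,c$, so the constant $2$ in the lemma is comfortably met (an equivalent shortcut is to invoke the standard sub-Gaussian tail-to-moment equivalence with variance proxy $nc^2/4$, but your direct Gamma-function computation is fine).
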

\begin{lemma}(\citep{bousquet2020sharper}).\label{theorem_moment} Let $\boldsymbol{z}=\left(Z_1, \ldots, Z_n\right)$ be a vector of independent random variables each taking values in $\mathcal{Z}$, and let $g_1, \ldots, g_n$ be some functions $g_i: \mathcal{Z}^n \rightarrow \mathbb{R}$ such that the following holds for any $i \in[n]$ :
\begin{itemize} 
    \item $\left|\mathbb{E}\left[g_i(\boldsymbol{z}) \mid Z_i\right]\right| \leq M$,
    \item  $\mathbb{E}\left[g_i(\boldsymbol{z}) \mid \boldsymbol{z}^{\backslash i}\right]=0$,
    \item  $g_i$ has a bounded difference $\beta$ with respect to all variables except the $i$-th variable, that is, for all $j \neq i, \boldsymbol{z}=\left(Z_1, \ldots, Z_n\right)$ and $\boldsymbol{z}^j=\left(Z_1, \ldots, Z_j^{\prime}, \ldots, Z_n\right) \in \mathbb{R}^n$, we have $\left|g_i(\boldsymbol{z})-g_i\left(\boldsymbol{z}^j\right)\right| \leq \beta$.
\end{itemize}
Then, for any $p \geq 2$,

$$
\left\|\sum_{i=1}^n g_i(\boldsymbol{z})\right\|_p \leq 12 \sqrt{2} p n \beta \log n+4 M \sqrt{p n}.
$$
\end{lemma}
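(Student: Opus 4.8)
The plan is to reconstruct the proof of this moment inequality (Theorem~4 of \citet{bousquet2020sharper}) by splitting each summand into its $Z_i$-measurable part and an orthogonal remainder, bounding the two resulting sums separately. Writing $T=\sum_{i=1}^n g_i(\boldsymbol{z})$, I would set
$$\bar g_i(\boldsymbol{z}):=g_i(\boldsymbol{z})-\mathbb{E}[g_i(\boldsymbol{z})\mid Z_i],\qquad T=\sum_{i=1}^n\mathbb{E}[g_i\mid Z_i]+\sum_{i=1}^n\bar g_i,$$
with the first sum producing the $4M\sqrt{pn}$ term and the second the $12\sqrt{2}\,pn\beta\log n$ term, after which the triangle inequality closes the argument.

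For the first sum, each $\mathbb{E}[g_i\mid Z_i]$ depends on $Z_i$ alone, so by independence of the $Z_i$ these are independent random variables; they are centered, since $\mathbb{E}[\mathbb{E}[g_i\mid Z_i]]=\mathbb{E}[\mathbb{E}[g_i\mid \boldsymbol{z}^{\backslash i}]]=0$, and they are bounded by $M$ by the first hypothesis. A standard $L^p$ moment bound for sums of independent, centered, uniformly bounded variables (Rosenthal, or Marcinkiewicz--Zygmund combined with the usual $\|\cdot\|_p$ estimates) then gives $\big\|\sum_i\mathbb{E}[g_i\mid Z_i]\big\|_p\le 4M\sqrt{pn}$.

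For the remainder, the key observation is that $\bar g_i$ now satisfies \emph{two} centering identities, $\mathbb{E}[\bar g_i\mid Z_i]=0$ and $\mathbb{E}[\bar g_i\mid\boldsymbol{z}^{\backslash i}]=0$, while still having bounded differences $\beta$ in every coordinate $j\neq i$ (subtracting a $Z_i$-measurable term leaves those differences unchanged). To bound $\big\|\sum_i\bar g_i\big\|_p$ I would peel the conditioning off dyadically: for a block $J\ni i$ put $\bar g_i^{J}:=\mathbb{E}[\bar g_i\mid (Z_j)_{j\in J}]$, so $\bar g_i^{[n]}=\bar g_i$ while $\bar g_i^{\{i\}}=0$, and then recursively bisect $[n]$, replacing at each step $\bar g_i^{J}$ by $\bar g_i^{J'}$ for the half $J'\ni i$ and paying the increment $\bar g_i^{J}-\bar g_i^{J'}$. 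Summed over $i$ in a block, each increment is a function of finitely many coordinates with bounded differences $\beta$ in the off-diagonal directions, so its $L^p$ norm is controlled by the moment form of Lemma~\ref{Mcdiarmid}; the diagonal contribution — the dependence of $\bar g_i^{J}$ on $Z_i$ itself, which is not uniformly bounded — is absorbed using the extra centering $\mathbb{E}[\bar g_i\mid Z_i]=0$ to recenter before invoking the inequality. With $\lceil\log_2 n\rceil$ dyadic levels, each contributing a term of order $pn\beta$, and the conditionings telescoping, this yields $\big\|\sum_i\bar g_i\big\|_p\le 12\sqrt{2}\,pn\beta\log n$.

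The hard part will be the recursion of the third step. I would need to verify that at every dyadic level the off-diagonal increments genuinely have bounded differences of size $\beta$ (rather than a size that grows unfavourably with the block), that the diagonal terms are truly tamed by the $Z_i$-centering so they do not reintroduce an $M$- or block-size-dependent cost, and that the per-level bound stays at order $pn\beta$ so that only the recursion depth — a $\log n$ factor — is lost overall. Carrying the numerical constants through this recursion is precisely what pins down the $12\sqrt{2}$ and the $4$ in the statement; the split, the independence argument, and the telescoping are otherwise routine.
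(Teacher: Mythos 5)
First, a point of reference: the paper does not prove this statement at all — it is quoted verbatim, with citation, as Theorem~4 of \citep{bousquet2020sharper}, so the only meaningful comparison is with the proof in that paper. Your overall architecture does mirror the known argument in two respects: the split $\sum_i g_i=\sum_i\mathbb{E}[g_i\mid Z_i]+\sum_i\bar g_i$ is exactly how the $4M\sqrt{pn}$ term arises (each $\mathbb{E}[g_i\mid Z_i]$ is an independent, centered, $M$-bounded function of $Z_i$ alone, and the moment form of Lemma~\ref{Mcdiarmid} with increment $2M$ even reproduces the constant $4$), and the $\lceil\log_2 n\rceil$ factor in the constant does come from a dyadic, multi-scale treatment of the off-diagonal part. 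Up to that point your proposal is sound.

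The genuine gap is the assertion that ``each dyadic level contributes a term of order $pn\beta$,'' which is precisely the heart of Theorem~4 of \citep{bousquet2020sharper} and does not follow from the tools you name. If you group the level-$\ell$ increments $\mathbb{E}[\bar g_i\mid Z_J]-\mathbb{E}[\bar g_i\mid Z_{J'}]$ by blocks $J$ of size $b=2^{\ell}$ and apply the moment form of Lemma~\ref{Mcdiarmid} to a block sum, the relevant bounded-difference constant per coordinate is of order $b\beta$ (changing one $Z_j$ moves every one of the $\approx b$ summands in its block by up to $\beta$), not $\beta$; this yields $\sqrt{p}\,b^{3/2}\beta$ per block, and even after exploiting independence and centering across blocks the level cost is of order $\sqrt{pn}\,b\beta$, so the top level alone costs $\sqrt{p}\,n^{3/2}\beta$ — strictly weaker than the claimed $pn\beta\log n$ in the regime of interest (moderate $p$, large $n$). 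In other words, the plain McDiarmid-type moment bound you plan to invoke cannot deliver the per-level estimate; the published proof needs a finer inductive/moment argument (this is exactly the technical contribution of that paper), and your sketch defers it with ``I would need to verify.'' The second unresolved point is the diagonal dependence: the increments $\mathbb{E}[\bar g_i\mid Z_J]-\mathbb{E}[\bar g_i\mid Z_{J'}]$ depend on $Z_i$ with no $\beta$-type control (only $|\mathbb{E}[g_i\mid Z_i]|\le M$ is assumed), and saying it is ``absorbed using the extra centering'' does not produce a bounded-difference constant in the $i$-th coordinate; this must be handled explicitly (e.g., via the conditional centering $\mathbb{E}[\bar g_i\mid\boldsymbol{z}^{\backslash i}]=-\mathbb{E}[g_i\mid Z_i]$-type identities used in the original argument). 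As it stands, the proposal is a plausible skeleton with the decisive estimate missing, so it does not yet constitute a proof of the stated inequality.
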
 

\begin{lemma}\label{lemma_highprobability} If $\|Y\|_p \leq \sqrt{p} a+p b$ for any $p \geq 1$, then for any $\delta \in(0,1)$, with probability at least $1-\delta$,

$$
|Y| \leq e\left(a \sqrt{\log \left(\frac{e}{\delta}\right)}+b \log \left(\frac{e}{\delta}\right)\right).
$$

\end{lemma}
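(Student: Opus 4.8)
The plan is to combine Markov's inequality, applied to the nonnegative random variable $|Y|^p$, with the hypothesized moment growth, and then optimize the free exponent $p$ against the target confidence level $\delta$.

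First I would fix an arbitrary real $p \geq 1$. Since $x \mapsto x^p$ is increasing on $[0,\infty)$, Markov's inequality yields, for every threshold $t > 0$,
\[
\mathbb{P}\big(|Y| \geq t\big) = \mathbb{P}\big(|Y|^p \geq t^p\big) \leq \frac{\E[|Y|^p]}{t^p} = \left(\frac{\|Y\|_p}{t}\right)^{p} \leq \left(\frac{\sqrt{p}\,a + p b}{t}\right)^{p},
\]
where the last step invokes the hypothesis $\|Y\|_p \leq \sqrt{p}\,a + p b$ (which also certifies $\E[|Y|^p] < \infty$, so the bound is not vacuous). The right-hand side decays geometrically once $t$ exceeds $\sqrt{p}\,a + p b$, and the clean choice is to force the bracketed ratio to equal $1/e$, i.e.\ to set $t = e\,(\sqrt{p}\,a + p b)$, which turns the tail bound into exactly $e^{-p}$.

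Next I would choose $p$ to trade off against $\delta$, taking $p = \log(e/\delta) = 1 + \log(1/\delta)$; because $\delta \in (0,1)$, this value satisfies $p \geq 1$ and is therefore admissible in the hypothesis. With this $p$ one gets $e^{-p} = \delta/e \leq \delta$, and substituting into $t = e\,(\sqrt{p}\,a + p b)$ gives
\[
t = e\left(a\sqrt{\log(e/\delta)} + b\,\log(e/\delta)\right).
\]
Hence $\mathbb{P}\big(|Y| \geq e(a\sqrt{\log(e/\delta)} + b\log(e/\delta))\big) \leq \delta$, and passing to the complementary event gives the claim.

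The argument is short, and I expect no real obstacle beyond careful bookkeeping: one must verify that the optimized exponent $p = \log(e/\delta)$ respects the constraint $p \geq 1$ — which is exactly why the statement carries the constant $e$ inside the logarithm rather than plain $1/\delta$ — and note that every moment invoked is finite by virtue of the hypothesis, so no separate integrability step is needed. No concentration inequality stronger than Markov's is required, and the two-term shape $\sqrt{p}\,a + p b$ of the moment bound is precisely what produces the mixed sub-Gaussian term $a\sqrt{\log(e/\delta)}$ together with the sub-exponential term $b\log(e/\delta)$.
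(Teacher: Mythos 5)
Your argument is correct: Markov's inequality applied to $|Y|^p$ with the threshold $t = e(\sqrt{p}\,a + pb)$ gives a tail of $e^{-p}$, and the choice $p = \log(e/\delta) \geq 1$ (admissible precisely because of the $e$ inside the logarithm) yields failure probability $\delta/e \leq \delta$ and the stated bound. The paper states this lemma without proof, importing it as the standard moment-to-tail conversion from the uniform-stability literature (e.g.\ the arguments accompanying Lemma \ref{theorem_moment}), and your derivation is exactly that standard argument, so there is nothing to bridge.
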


In addition, we introduce the definition of the Total Variation (TV) distance as follows:
\begin{definition}[Total Variation Distance] Given two probability distributions \( p \) and \( q \) over a multidimensional space \( \mathbb{R}^d \), the Total Variation Distance between \( p \) and \( q \) is:
\[ TV(p, q) = \frac{1}{2} \int_{\mathbb{R}^d} |p(\boldsymbol{z}) - q(\boldsymbol{z})| \, d\boldsymbol{z}. \]
\end{definition}

\subsection{Proof of Theorem \ref{theorem_generalization}}
In this Section, we prove Theorem \ref{theorem_generalization} by first decomposing the generalization error into two components: the \textit{Cumulative Distribution Shift Across Generations} and the \textit{Generalization Error on Mixed Distributions}. We then proceed to bound the \textit{Cumulative Distribution Shift Across Generations} by leveraging the properties of the generative model and recursive techniques. For the \textit{Generalization Error on Mixed Distributions}, we follow the framework of \cite{zheng2023toward}, leveraging the fact that within the mixed dataset $\widetilde{S}_i$, the set $S_i$ satisfies the conditional i.i.d. assumption when $S_0$ is fixed. Combined with moment bounds, this allows us to effectively bound the generalization error.

The main proof is as follows:

\begin{proof}[Proof of Theorem \ref{theorem_generalization}]
We begin by decomposing the generalization error as follows:
\begin{align}
\left|R_{\mathcal{D}_0}(\mathcal{A}(\widetilde{S}_i))-\widehat{R}_{\widetilde{S}_i}(\mathcal{A}(\widetilde{S}_i))\right| \leq \underbrace{\left|R_{\mathcal{D}_0}(\mathcal{A}(\widetilde{S}_i))-R_{\widetilde{\mathcal{D}}_i}(\mathcal{A}(\widetilde{S}_i))\right|}_{\text {Cumulative distribution shift across generations}}+\underbrace{\left| R_{\widetilde{\mathcal{D}}_i}(\mathcal{A}(\widetilde{S}_i))-\widehat{R}_{\widetilde{S}_i}(\mathcal{A}(\widetilde{S}_i)) \right|}_{\text {Generalization error on mixed distributions}}. \notag
\end{align}

\textbf{Upper Bounding Cumulative Distribution Shift Term}

For the term $\left|R_{\mathcal{D}_0}(\mathcal{A}(\widetilde{S}_i))-R_{\widetilde{\mathcal{D}}_i}(\mathcal{A}(\widetilde{S}_i))\right|$, we first note that $\widetilde{\mathcal{D}}_i=\alpha \mathcal{D}_0+(1-\alpha) \mathcal{D}_i$. Therefore, we obtain:
\begin{align}
    &\left|R_{\mathcal{D}_0}(\mathcal{A}(\widetilde{S}_i))-R_{\widetilde{\mathcal{D}}_i}(\mathcal{A}(\widetilde{S}_i))\right|\notag \\
    &=\left|R_{\mathcal{D}_0}(\mathcal{A}(\widetilde{S}_i))-\alpha R_{\mathcal{D}_0}(\mathcal{A}(\widetilde{S}_i)-(1-\alpha)R_{\mathcal{D}_i}(\mathcal{A}(\widetilde{S}_i))\right| \notag \\
    &=(1-\alpha)\left|R_{\mathcal{D}_0}(\mathcal{A}(\widetilde{S}_i)-R_{\mathcal{D}_i}(\mathcal{A}(\widetilde{S}_i))\right| .\label{proof1_term1}
\end{align}
Furthermore, we can further decompose it as follows:
\begin{align}
   \left|R_{\mathcal{D}_0}(\mathcal{A}(\widetilde{S}_i)-R_{\mathcal{D}_i}(\mathcal{A}(\widetilde{S}_i))\right| \leq \left|R_{\mathcal{D}_0}(\mathcal{A}(\widetilde{S}_i))-R_{\widetilde{\mathcal{D}}_{i-1}}(\mathcal{A}(\widetilde{S}_i))\right|+\left|R_{\widetilde{\mathcal{D}}_{i-1}}(\mathcal{A}(\widetilde{S}_i))-R_{\mathcal{D}_{i}}(\mathcal{A}(\widetilde{S}_i))\right| \label{proof1_term2}.
\end{align}
By substituting inequality \ref{proof1_term2} into inequality \ref{proof1_term1}, we obtain:
\begin{align}
    &\left|R_{\mathcal{D}_0}(\mathcal{A}(\widetilde{S}_i))-R_{\widetilde{\mathcal{D}}_i}(\mathcal{A}(\widetilde{S}_i))\right| \notag \\
    &\leq  (1-\alpha)\left|R_{\mathcal{D}_0}(\mathcal{A}(\widetilde{S}_i))-R_{\widetilde{\mathcal{D}}_{i-1}}(\mathcal{A}(\widetilde{S}_i))\right|+(1-\alpha)\left|R_{\widetilde{\mathcal{D}}_{i-1}}(\mathcal{A}(\widetilde{S}_i))-R_{\mathcal{D}_{i}}(\mathcal{A}(\widetilde{S}_i))\right| \label{proof1_term3}.
\end{align}
Then, for the term $|R_{\widetilde{\mathcal{D}}_{i-1}}(\mathcal{A}(\widetilde{S}_i))-R_{\mathcal{D}_{i}}(\mathcal{A}(\widetilde{S}_i))|$, we have:
\begin{align}
\left|R_{\widetilde{\mathcal{D}}_{i-1}}(\mathcal{A}(\widetilde{S}_i))-R_{\mathcal{D}_{i}}(\mathcal{A}(\widetilde{S}_i))\right|
&=\Bigg|\int_{\boldsymbol{z}}\ell(\mathcal{A}(\widetilde{S}_i),\boldsymbol{z})\left(\mathbb{P}_{\widetilde{\mathcal{D}}_{i-1}}(\boldsymbol{z})-\mathbb{P}_{\mathcal{D}_i}(\boldsymbol{z})\right)d\boldsymbol{z}\Bigg| \notag \\
&\leq\int_{\boldsymbol{z}}\biggl|\ell(\mathcal{A}(\widetilde{S}),\boldsymbol{z})\left(\mathbb{P}_{\widetilde{\mathcal{D}}_{i-1}}(\boldsymbol{z})-\mathbb{P}_{\mathcal{D}_i}(\boldsymbol{z})\right)\biggr| d\boldsymbol{z} \notag\\
&\leq M\int_{\boldsymbol{z}}\Bigl|\mathbb{P}_{\widetilde{\mathcal{D}}_{i-1}}(\boldsymbol{z})-\mathbb{P}_{\mathcal{D}_i}(\boldsymbol{z})\Bigr| d\boldsymbol{z} \notag\\
&= 2M TV\left(\widetilde{\mathcal{D}}_{i-1},\mathcal{D}_{i}\right).  \label{proof1_term4}
\end{align}
Incorporating inequality \ref{proof1_term4} into inequality \ref{proof1_term3}, we arrive at:
\begin{align}
    &|R_{\mathcal{D}_0}(\mathcal{A}(\widetilde{S}_i))-R_{\widetilde{\mathcal{D}}_i}(\mathcal{A}(\widetilde{S}_i))|\notag \\
    &\leq (1-\alpha)|R_{\mathcal{D}_0}(\mathcal{A}(\widetilde{S}_i))-R_{\widetilde{\mathcal{D}}_{i-1}}(\mathcal{A}(\widetilde{S}_i))|+2(1-\alpha)M TV\left(\widetilde{\mathcal{D}}_{i-1},\mathcal{D}_{i}\right)\label{proof1_term5}.
\end{align}
Next, we apply recursive techniques to address the problem further. First, we obtain
\begin{align}
    &|R_{\mathcal{D}_0}(\mathcal{A}(\widetilde{S}_i))-R_{\widetilde{\mathcal{D}}_{i-1}}(\mathcal{A}(\widetilde{S}_i))|\notag \\
    &\leq (1-\alpha)|R_{\mathcal{D}_0}(\mathcal{A}(\widetilde{S}_i))-R_{\widetilde{\mathcal{D}}_{i-2}}(\mathcal{A}(\widetilde{S}_i))|+2(1-\alpha)M TV\left(\widetilde{\mathcal{D}}_{i-2},\mathcal{D}_{i-1}\right)\label{proof1_term6}.
\end{align}
Plugging inequality \ref{proof1_term6} into inequality \ref{proof1_term5}into the inequality, we obtain that:
\begin{align}
    &|R_{\mathcal{D}_0}(\mathcal{A}(\widetilde{S}_i))-R_{\widetilde{\mathcal{D}}_i}(\mathcal{A}(\widetilde{S}_i))|\notag \\
    &\leq (1-\alpha)^2|R_{\mathcal{D}_0}(\mathcal{A}(\widetilde{S}_i))-R_{\widetilde{\mathcal{D}}_{i-2}}(\mathcal{A}(\widetilde{S}_i))|+2(1-\alpha)^2MTV\left(\widetilde{\mathcal{D}}_{i-2},\mathcal{D}_{i-1}\right)+2(1-\alpha)MTV\left(\widetilde{\mathcal{D}}_{i-1},\mathcal{D}_{i}\right) \notag.
\end{align}
By recursion, we obtain:
\begin{align}
    &|R_{\mathcal{D}_0}(\mathcal{A}(\widetilde{S}_i))-R_{\widetilde{\mathcal{D}}_i}(\mathcal{A}(\widetilde{S}_i))|\notag \\
    &\leq (1-\alpha)^{i-1}|R_{\mathcal{D}_0}(\mathcal{A}(\widetilde{S}_i))-R_{\widetilde{\mathcal{D}}_{1}}(\mathcal{A}(\widetilde{S}_i))|+2(1-\alpha)^{i-1}MTV\left(\widetilde{\mathcal{D}}_{1},\mathcal{D}_{2}\right)+...+2(1-\alpha)MTV\left(\widetilde{\mathcal{D}}_{i-1},\mathcal{D}_{i}\right) \notag \\
    &\leq 2(1-\alpha)^iMTV\left(\mathcal{D}_{0},\mathcal{D}_{1}\right)+2(1-\alpha)^{i-1}MTV\left(\widetilde{\mathcal{D}}_{1},\mathcal{D}_{2}\right)+...+2(1-\alpha)MTV\left(\widetilde{\mathcal{D}}_{i-1},\mathcal{D}_{i}\right)\notag .
\end{align}
Let $n_0$ represent the sample size of the real dataset $S_0$, and let $n_i$ denote the sample size of the mixed dataset $\widetilde{S}_i$ in the $i$-th generation. Thus, $T V\left(\widetilde{\mathcal{D}}_j, \mathcal{D}_{j+1}\right)$ can be written as a function of $n_j$. Assuming that the sample size for each generation's dataset is identical, i.e., $n_0=n_1=\cdots=n_i=n$, and that the TV distance for each generation is of the same order, denoted by $d_{\mathrm{TV}}(n)$, we can derive the following result:
\begin{align}
    |R_{\mathcal{D}_0}(\mathcal{A}(\widetilde{S}_i))-R_{\widetilde{\mathcal{D}}_i}(\mathcal{A}(\widetilde{S}_i))|&\leq 2Md_{\mathrm{TV}}(n)\left[(1-\alpha)^i+(1-\alpha)^{i-1}+...+(1-\alpha)\right]\notag \\
    &=2M\left(1-(1-\alpha)^i\right)\alpha^{-1}d_{\mathrm{TV}}(n).
\end{align}
Then we obtain:
\begin{align}
&| R_{\mathcal{D}_0}(\mathcal{A}(\widetilde{S}_i))-\widehat{R}_{\widetilde{S}_i}(\mathcal{A}(\widetilde{S}_i))|\leq| R_{\mathcal{D}_0}(\mathcal{A}(\widetilde{S}_i))-R_{\widetilde{\mathcal{D}}_i}(\mathcal{A}(\widetilde{S}_i))|+|R_{\widetilde{\mathcal{D}}_i}(\mathcal{A}(\widetilde{S}_i))-\widehat{R}_{\widetilde{S}_i}(\mathcal{A}(\widetilde{S}_i))|\notag\\
&\leq 2M\left(1-(1-\alpha)^i\right)\alpha^{-1}d_{\mathrm{TV}}(n)+|R_{\widetilde{\mathcal{D}}_i}(\mathcal{A}(\widetilde{S}_i))-\widehat{R}_{\widetilde{S}_i}(\mathcal{A}(\widetilde{S}_i))|.
\end{align}

\textbf{Upper Bounding Generalization Error on Mixed Distributions Term}

Next, we turn our attention to the term $|R_{\widetilde{\mathcal{D}}_i}(\mathcal{A}(\widetilde{S}_i))-\widehat{R}_{\widetilde{S}_i}(\mathcal{A}(\widetilde{S}_i))|$. Our primary objective is to establish a moment bound for this expression.

\begin{align}
&\left\|R_{\widetilde{\mathcal{D}}_i}(\mathcal{A}(\widetilde{S}_i))-\widehat{R}_{\widetilde{S}_i}(\mathcal{A}(\widetilde{S}_i))\right\|_{p} \notag \\
&=\left\|\alpha R_{\mathcal{D}_0}(\mathcal{A}(\widetilde{S}_i))+(1-\alpha)R_{\mathcal{D}_i}(\mathcal{A}(\widetilde{S}_i))-\frac{1}{n}\sum_{\boldsymbol{z}_{i}\in S_{0,\alpha}}\ell(\mathcal{A}(\widetilde{S}_i),\boldsymbol{z}_{i})-\frac{1}{n}\sum_{\boldsymbol{z}_{i}\in S_{i,1-\alpha}}\ell(\mathcal{A}(\widetilde{S}_i),\boldsymbol{z}_{i})\right\|_{p} \notag\\
&\leq \underbrace{\left\|\alpha R_{\mathcal{D}_0}(\mathcal{A}(\widetilde{S}_i))-\frac{1}{n}\sum_{\boldsymbol{z}_{i}\in S_{0,\alpha}}\ell(\mathcal{A}(\widetilde{S}_i),\boldsymbol{z}_{i})\right\|_{p}}_{\text{Term 1}}+\underbrace{\left\|(1-\alpha)R_{\mathcal{D}_i}(\mathcal{A}(\widetilde{S}_i))-\frac{1}{n}\sum_{\boldsymbol{z}_{i}\in S_{i,1-\alpha}}\ell(\mathcal{A}(\widetilde{S}_i),\boldsymbol{z}_{i})\right\|_{p}}_{\text{Term 2}}. \label{proof-generalizati-decomp}
\end{align}
The newly sampled dataset, denoted as $S_{0, \alpha}$, is a subset of the original dataset $S_0$, where $S_{0, \alpha} \subseteq S_0$ and its size is $\alpha \times\left|S_0\right|$. Specifically, $S_{0, \alpha}$ contains a proportion $\alpha$ of the $n$ data points in $S_0$, resulting in a total of $n \times \alpha$ data points. Similarly, $S_{i, 1-\alpha}$ is a subset of the synthetic dataset $S_i$, where $S_{i, 1-\alpha} \subseteq S_i$, and its size is $(1-\alpha) \times\left|S_i\right|$. Specifically, $S_{i, 1-\alpha}$ contains a proportion $1-\alpha$ of the $n$ data points in $S_i$, resulting in $n \times(1-\alpha)$ data points.

We observe that for any function $f(S)$, if there exists a bound $\|f\|_p\left(S_j\right) \leq C$ for some subset $S_j \subseteq S$, then we have the following:

$$
\|f\|_p=\left(\mathbb{E} \mathbb{E}\left[|f|^p \mid S_j\right]\right)^{1 / p} \leq\left(\mathbb{E}\left[C^p\right]\right)^{1 / p} \leq C.
$$

Fix $S_0$, then data in $S_i$ are independent. We use this property and Lemma \ref{theorem_moment} to bound the Term 2.
We introduce functions $f_j(S_{i,1-\alpha})$ which play the same role as $g_j$'s in Lemma \ref{theorem_moment} as
$$
f_j(S_{i,1-\alpha})=\mathbb{E}_{\boldsymbol{z}_{i,j}'\sim\mathcal{D}_i}\left[\mathbb{E}_{\boldsymbol{z}\sim\mathcal{D}_i}\ell(\mathcal{A}(S_{0,\alpha}\cup S_{i,1-\alpha}^j),\boldsymbol{z})-\ell(\mathcal{A}(S_{0,\alpha}\cup S_{i,1-\alpha}^j),\boldsymbol{z}_{i,j})\right],
$$
where $\boldsymbol{z}_{i,j}$ is the $j$-th data in $S_{i,1-\alpha}$, and $S_{i,1-\alpha}^j$ obtained by replacing $\boldsymbol{z}_{i,j}$ by $\boldsymbol{z}_{i,j}^{\prime}.$ Next, we prove that $f_j$ satisfies the three conditions outlined in Lemma \ref{theorem_moment}. First, we demonstrate condition $|f_j|\leq M$.
\begin{align}
|f_{j}| &=\left|\mathbb{E}_{\boldsymbol{z}_{i,j}'\sim\mathcal{D}_i}\left[\mathbb{E}_{\boldsymbol{z}\sim\mathcal{D}_i}\ell(\mathcal{A}(S_{0,\alpha}\cup S_{i,1-\alpha}^j),\boldsymbol{z})-\ell(\mathcal{A}(S_{0,\alpha}\cup S_{i,1-\alpha}^j),\boldsymbol{z}_{i,j})\right]\right| \notag\\
&\leq  \mathbb{E}_{\boldsymbol{z}_{i,j}'\sim\mathcal{D}_i}\mathbb{E}_{\boldsymbol{z}\sim\mathcal{D}_i}\left|\ell(\mathcal{A}(S_{0,\alpha}\cup S_{i,1-\alpha}^j),\boldsymbol{z})-\ell(\mathcal{A}(S_{0,\alpha}\cup S_{i,1-\alpha}^j),\boldsymbol{z}_{i,j})\right|\notag. \\
&\leq M \notag
\end{align}
We then continue by proving conditions $\mathbb{E}[f_j|S_{i,1-\alpha}^{\setminus j}]=0$:
\begin{align}
&\mathbb{E}\left[f_j \mid S_{i,1-\alpha}^{\backslash j}\right] \notag \\
& =\mathbb{E}_{\boldsymbol{z}_{i,j} \sim \mathcal{D}_i}\left[\mathbb{E}_{\boldsymbol{z}_{i,j}'\sim\mathcal{D}_i}\left[\mathbb{E}_{\boldsymbol{z}\sim\mathcal{D}_i}\ell(\mathcal{A}(S_{0,\alpha}\cup S_{i,1-\alpha}^j),\boldsymbol{z})-\ell(\mathcal{A}(S_{0,\alpha}\cup S_{i,1-\alpha}^j),\boldsymbol{z}_{i,j})\right] \mid S_{i,1-\alpha}^{\backslash j}\right] \notag \\
& =\mathbb{E}_{\boldsymbol{z}_{i,j}'\sim\mathcal{D}_i}\left[\left[\mathbb{E}_{\boldsymbol{z} \sim \mathcal{D}_i} \ell\left(\mathcal{A}\left(S_{0,\alpha} \cup S_{i,1-\alpha}^j\right), \boldsymbol{z}\right)-\mathbb{E}_{\boldsymbol{z}_{i,j} \sim \mathcal{D}_i} \ell\left(\mathcal{A}\left(S_{0,\alpha} \cup S_{i,1-\alpha}^j\right), \boldsymbol{z}_{i,j}\right)\right] \mid S_{i,1-\alpha}^{\backslash j}\right] \notag\\
& =\mathbb{E}_{\boldsymbol{z}_{i,j}'\sim\mathcal{D}_i}\left[0 \mid S_{i,1-\alpha}^{\backslash j}\right]=0. \notag
\end{align}

Finally, we prove that $f_j$ has a bounded difference $2\beta_{n}$ with respect to all variables except the $j$-th variable. Let $t\neq j$, then we obtain:
\begin{align}
|f_j\left(S_{i,1-\alpha}\right)-&f_j\left(S_{i,1-\alpha}^t\right)|\notag\\
=&|\mathbb{E}_{\boldsymbol{z}_{i,j}'\sim\mathcal{D}_i}\left[\mathbb{E}_{\boldsymbol{z}\sim\mathcal{D}_i}\ell(\mathcal{A}(S_{0,\alpha}\cup S_{i,1-\alpha}^j),\boldsymbol{z})-\ell(\mathcal{A}(S_{0,\alpha}\cup S_{i,1-\alpha}^j),\boldsymbol{z}_{i,j})\right] \notag\\
& -\mathbb{E}_{\boldsymbol{z}_{i,j}'\sim\mathcal{D}_i}\left[\mathbb{E}_{\boldsymbol{z}\sim\mathcal{D}_i}\ell(\mathcal{A}(S_{0,\alpha}\cup (S_{i,1-\alpha}^t)^j),\boldsymbol{z})-\ell(\mathcal{A}(S_{0,\alpha}\cup (S_{i,1-\alpha}^t)^j),\boldsymbol{z}_{i,j})\right]\mid \notag \\
\leq & \left|\mathbb{E}_{\boldsymbol{z}_{i,j}^{\prime} \sim \mathcal{D}_i} \mathbb{E}_{\boldsymbol{z} \sim \mathcal{D}_i}\left[\ell(\mathcal{A}(S_{0,\alpha}\cup S_{i,1-\alpha}^j),\boldsymbol{z})-\ell(\mathcal{A}(S_{0,\alpha}\cup (S_{i,1-\alpha}^t)^j),\boldsymbol{z})\right]\right|\notag \\
& +\left|\mathbb{E}_{\boldsymbol{z}_{i,j}^{\prime} \sim \mathcal{D}_i}\left[\ell(\mathcal{A}(S_{0,\alpha} \cup S_{i, 1-\alpha}^j), \boldsymbol{z}_{i,j})-\ell(\mathcal{A}(S_{0,\alpha} \cup(S_{i,1-\alpha}^t)^j), \boldsymbol{z}_{i,j})\right]\right|\notag \\
\leq & \beta_{n}+\beta_{n}=2 \beta_{n} . \notag
\end{align}

Therefore, for any fixed $S_0$, by Lemma \ref{theorem_moment}, for any $p \geq 2$, we have

\begin{equation}\label{proof_term7}
\left\|\sum_{j=1}^{n(1-\alpha)} f_j\left(S_{i,1-\alpha}\right)\right\|_p \lesssim p n(1-\alpha) \beta_{n} \log (n(1-\alpha))+M \sqrt{p n(1-\alpha)}.
\end{equation}
We note that the difference between Term 2 and $\sum_{j=1}^{n(1-\alpha)} f_j$ is minimal. Consequently, for any fixed $S_0$, we can bound Term 2 using inequality \ref{proof_term7} as follows.
\begin{align}
&\left\|(1-\alpha)R_{\mathcal{D}_i}(\mathcal{A}(\widetilde{S}_i))-\frac{1}{n}\sum_{\boldsymbol{z}_{i}\in S_{i,1-\alpha}}\ell(\mathcal{A}(\widetilde{S}_i),\boldsymbol{z}_{i})\right\|_{p} \notag \\
&=\left\|(1-\alpha)R_{\mathcal{D}_i}(\mathcal{A}(S_{0,\alpha} \cup S_{i,1-\alpha}))-\frac{1}{n}\sum_{j=1}^{n(1-\alpha)}\ell(\mathcal{A}(S_{0,\alpha} \cup S_{i,1-\alpha}),\boldsymbol{z}_{i,j})\right\|_{p}  \qquad \qquad \text{Fix}\ S_{0,\alpha}\notag \\
& =\left\|\frac{1}{n}\sum_{j=1}^{n(1-\alpha)}\left(\mathbb{E}_{\boldsymbol{z} \sim \mathcal{D}_i} \ell\left(\mathcal{A}\left(S_{0,\alpha} \cup S_{i,1-\alpha}\right), \boldsymbol{z}\right)-\ell\left(\mathcal{A}\left(S_{0,\alpha} \cup S_{i,1-\alpha}\right), \boldsymbol{z}_{i,j}\right)\right)\right\|_p \notag\\
& \leq \frac{1}{n}\left\|\sum_{j=1}^{n(1-\alpha)}\left(\mathbb{E}_{\boldsymbol{z}_{i,j}'\sim\mathcal{D}_i}\left[\mathbb{E}_{\boldsymbol{z}\sim\mathcal{D}_i}\ell(\mathcal{A}(S_{0,\alpha}\cup S_{i,1-\alpha}^j),\boldsymbol{z})-\ell(\mathcal{A}(S_{0,\alpha}\cup S_{i,1-\alpha}^j),\boldsymbol{z}_{i,j})\right]\right)\right\|_p+(1-\alpha)\left\|2\beta_{n}\right\|_p \notag \\
& =\frac{1}{n}\left\|\sum_{j=1}^{n(1-\alpha)} f_j\left(S_{i,1-\alpha}\right)\right\|_p+(1-\alpha)\left\|2  \beta_{n}\right\|_p \notag\\
& \lesssim p (1-\alpha) \beta_{n} \log (n(1-\alpha))+M \sqrt{\frac{p(1-\alpha)}{n}}+2 (1-\alpha) \beta_{n} \notag\\
& \lesssim p (1-\alpha) \beta_{n} \log (n(1-\alpha))+M \sqrt{\frac{p(1-\alpha)}{n}}.\notag
\end{align}
Next, for Term 2, we derive the following result:
\begin{align}
    \left\|(1-\alpha)R_{\mathcal{D}_i}(\mathcal{A}(\widetilde{S}_i))-\frac{1}{n}\sum_{\boldsymbol{z}_{i}\in S_{i,1-\alpha}}\ell(\mathcal{A}(\widetilde{S}_i),\boldsymbol{z}_{i})\right\|_{p}\lesssim p (1-\alpha) \beta_{n} \log (n(1-\alpha))+M \sqrt{\frac{p(1-\alpha)}{n}}. \label{proof-Term 2}
\end{align}
Now, we use a similar idea to bound Term 1 $\left\|\alpha R_{\mathcal{D}_0}(\mathcal{A}(\widetilde{S}_i))-\frac{1}{n}\sum_{\boldsymbol{z}_{i}\in S_{0,\alpha}}\ell(\mathcal{A}(\widetilde{S}_i),\boldsymbol{z}_{i})\right\|_{p}$. We decompose it as the following.

\begin{align}
&\left\|\alpha R_{\mathcal{D}_0}(\mathcal{A}(\widetilde{S}_i))-\frac{1}{n}\sum_{\boldsymbol{z}_{i}\in S_{0,\alpha}}\ell(\mathcal{A}(\widetilde{S}_i),\boldsymbol{z}_{i})\right\|_p \notag \\
& \leq \underbrace{\left\|(\alpha R_{\mathcal{D}_0}(\mathcal{A}(\widetilde{S}_i))-\frac{1}{n}\sum_{\boldsymbol{z}_{i}\in S_{0,\alpha}}\ell(\mathcal{A}(\widetilde{S}_i),\boldsymbol{z}_{i}))-\mathbb{E}_{S_{i,1-\alpha} \sim \mathcal{D}_i^{n(1-\alpha)}} (\alpha R_{\mathcal{D}_0}(\mathcal{A}(\widetilde{S}_i))-\frac{1}{n}\sum_{\boldsymbol{z}_{i}\in S_{0,\alpha}}\ell(\mathcal{A}(\widetilde{S}_i),\boldsymbol{z}_{i}))\right\|_p}_{\text{Term 3}}\notag\\
&+\underbrace{\left\|\mathbb{E}_{S_{i,1-\alpha} \sim \mathcal{D}_i^{n(1-\alpha)}} (\alpha R_{\mathcal{D}_0}(\mathcal{A}(\widetilde{S}_i))-\frac{1}{n}\sum_{\boldsymbol{z}_{i}\in S_{0,\alpha}}\ell(\mathcal{A}(\widetilde{S}_i),\boldsymbol{z}_{i}))\right\|}_{\text{Term 4}}. \label{proof_34dec}
\end{align}
We proceed by bounding each term. Specifically, Term 3 can be bounded using McDiarmid's inequality, as outlined in Lemma \ref{Mcdiarmid}, and Term 4 can be bounded by applying Lemma \ref{theorem_moment}.

To bound Term 3, we begin by fixing $S_{0, \alpha}$ and utilizing the conditional independence property of $S_i$ once again. In order to apply Lemma \ref{theorem_moment}, we must show that $\alpha R_{\mathcal{D}_0}(\mathcal{A}(\widetilde{S}_i))-\frac{1}{n}\sum_{\boldsymbol{z}_{i}\in S_{0,\alpha}}\ell(\mathcal{A}(\widetilde{S}_i),\boldsymbol{z}_{i})$ exhibits a bounded difference with respect to $S_{i, 1-\alpha}$ when $S_{0, \alpha}$ is fixed. This expression can be formulated as follows.
\begin{align}
&\left|\alpha R_{\mathcal{D}_0}(\mathcal{A}(S_{0,\alpha}\cup S_{i,1-\alpha}))-\frac{1}{n}\sum_{\boldsymbol{z}_{i}\in S_{0,\alpha}}\ell(\mathcal{A}(S_{0,\alpha}\cup S_{i,1-\alpha}),\boldsymbol{z}_{i})\right.\notag\\
&\left.-\alpha R_{\mathcal{D}_0}(\mathcal{A}(S_{0,\alpha}\cup S_{i,1-\alpha}^j))+\frac{1}{n}\sum_{\boldsymbol{z}_{i}\in S_{0,\alpha}} \ell(\mathcal{A}(S_{0,\alpha} \cup S_{i,1-\alpha}^j), \boldsymbol{z}_i)\right| \notag\\
& \leq \alpha\left|R_{\mathcal{D}_0}(\mathcal{A}(S_{0,\alpha}\cup S_{i,1-\alpha}))-R_{\mathcal{D}_0}(\mathcal{A}(S_{0,\alpha}\cup S_{i,1-\alpha}^j))\right| \notag \\
& +\frac{1}{n}\left|\sum_{\boldsymbol{z}_{i}\in S_{0,\alpha}}\ell(\mathcal{A}(S_{0,\alpha}\cup S_{i,1-\alpha}),\boldsymbol{z}_{i})-\sum_{\boldsymbol{z}_{i}\in S_{0,\alpha}} \ell\left(\mathcal{A}\left(S_{0,\alpha} \cup S_{i,1-\alpha}^j\right), \boldsymbol{z}_i\right)\right| \notag \\
&\leq \alpha \beta_{n}+\alpha \beta_{n}=2 \alpha \beta_{n} . \notag
\end{align}

Thus, by Mcdiarmid Inequality, we have

\begin{align}
&\left\|(\alpha R_{\mathcal{D}_0}(\mathcal{A}(\widetilde{S}_i))-\frac{1}{n}\sum_{\boldsymbol{z}_{i}\in S_{0,\alpha}}\ell(\mathcal{A}(\widetilde{S}_i),\boldsymbol{z}_{i}))-\mathbb{E}_{S_{i,1-\alpha} \sim \mathcal{D}_i^{n(1-\alpha)}} (\alpha R_{\mathcal{D}_0}(\mathcal{A}(\widetilde{S}_i))-\frac{1}{n}\sum_{\boldsymbol{z}_{i}\in S_{0,\alpha}}\ell(\mathcal{A}(\widetilde{S}_i),\boldsymbol{z}_{i}))\right\|_p \notag \\
&\leq 4 \sqrt{n(1-\alpha) p} \alpha \beta_{n} \lesssim \sqrt{n(1-\alpha) p} \alpha \beta_{n}. \label{proof1-delta1}
\end{align}

We now introduce a set of functions and apply Lemma \ref{theorem_moment} once more to bound Term 4. Specifically, we define $h_j(S)$, which serves a similar role to the $g_i$ 's in Lemma \ref{theorem_moment}, as follows:

\begin{align}
&h_j(S_{0,\alpha})\notag \\
=&\mathbb{E}_{\boldsymbol{z}_{0,j}^{\prime} \sim \mathcal{D}_0} \mathbb{E}_{S_{i,1-\alpha} \sim \mathcal{D}_i^{n(1-\alpha)}\left(S_{0,\alpha}^j\right)}\left[\mathbb{E}_{\boldsymbol{z} \sim \mathcal{D}_0} \ell\left(\mathcal{A}\left(S_{0,\alpha}^j \cup S_{i,1-\alpha}\right), \boldsymbol{z}\right)-\ell\left(\mathcal{A}\left(S_{0,\alpha}^j \cup S_{i,1-\alpha}\right), \boldsymbol{z}_{0,j}\right)\right]
\end{align}
where $\boldsymbol{z}_{0, j}$ denote the $j$-th data point in $S_{0, \alpha}$, and $S_{0, \alpha}^j$ represent the dataset obtained by replacing $\boldsymbol{z}_{0, j}$ with $\boldsymbol{z}_{0, j}^{\prime}$. Additionally, it is important to note that $S_{i, 1-\alpha} \sim \mathcal{D}_i^{n(1-\alpha)}\left(S_{0, \alpha}^j\right)$ indicates that $S_{i, 1-\alpha}$ is the synthetic dataset generated after the self-consuming loop, following $i$-generations, and obtained by modifying a single data point from the initial real dataset $S_0$. This complex scenario can be addressed using the recursive stability we have defined for the self-consuming loop in Definition \ref{iterative stability}. Moreover, similar to the process above, we observe that $\left|h_j\right| \leq M$ and $\mathbb{E}\left[h_j \mid S_{0, \alpha}^{\backslash j}\right]=0$. More intricately, we will now prove that $h_j$ exhibits a bounded difference. This will be demonstrated as follows.


\begin{align}
&|h_j(S_{0,\alpha})-h_j\left(S_{0,\alpha}^t\right)|\notag \\
= & \mid \mathbb{E}_{\boldsymbol{z}_{0,j}^{\prime} \sim \mathcal{D}_0} \mathbb{E}_{S_{i,1-\alpha} \sim \mathcal{D}_i^{n(1-\alpha)}\left(S_{0,\alpha}^j\right)}\left[\mathbb{E}_{\boldsymbol{z} \sim \mathcal{D}_0} \ell\left(\mathcal{A}\left(S_{0,\alpha}^j \cup S_{i,1-\alpha}\right), \boldsymbol{z}\right)-\ell\left(\mathcal{A}\left(S_{0,\alpha}^j \cup S_{i,1-\alpha}\right), \boldsymbol{z}_{0,j}\right)\right] \notag \\
-&  \mathbb{E}_{\boldsymbol{z}_{0,j}^{\prime} \sim \mathcal{D}_0} \mathbb{E}_{S_{i,1-\alpha} \sim \mathcal{D}_i^{n(1-\alpha)}\left((S_{0,\alpha}^t)^j\right)}\left[\mathbb{E}_{\boldsymbol{z} \sim \mathcal{D}_0} \ell\left(\mathcal{A}\left((S_{0,\alpha}^t)^j \cup S_{i,1-\alpha}\right), \boldsymbol{z}\right)-\ell\left(\mathcal{A}\left((S_{0,\alpha}^t)^j \cup S_{i,1-\alpha}\right), \boldsymbol{z}_{0,j}\right)\right] \notag\\
\leq & \mid \mathbb{E}_{\boldsymbol{z}_{0,j}^{\prime} \sim \mathcal{D}_0} \mathbb{E}_{S_{i,1-\alpha} \sim \mathcal{D}_i^{n(1-\alpha)}\left(S_{0,\alpha}^j\right)}\left[\mathbb{E}_{\boldsymbol{z} \sim \mathcal{D}_0} \ell\left(\mathcal{A}\left(S_{0,\alpha}^j \cup S_{i,1-\alpha}\right), \boldsymbol{z}\right)-\ell\left(\mathcal{A}\left(S_{0,\alpha}^j \cup S_{i,1-\alpha}\right), \boldsymbol{z}_{0,j}\right)\right] \notag \\
-&\mathbb{E}_{\boldsymbol{z}_{0,j}^{\prime} \sim \mathcal{D}_0} \mathbb{E}_{S_{i,1-\alpha} \sim \mathcal{D}_i^{n(1-\alpha)}\left(S_{0,\alpha}^j\right)} \left[\mathbb{E}_{\boldsymbol{z} \sim \mathcal{D}_0} \ell\left(\mathcal{A}\left((S_{0,\alpha}^t)^j \cup S_{i,1-\alpha}\right), \boldsymbol{z}\right)-\ell\left(\mathcal{A}\left((S_{0,\alpha}^t)^j \cup S_{i,1-\alpha}\right), \boldsymbol{z}_{0,j}\right)\right] | \label{proof-8} \\
+&|\mathbb{E}_{\boldsymbol{z}_{0,j}^{\prime} \sim \mathcal{D}_0} \mathbb{E}_{S_{i,1-\alpha} \sim \mathcal{D}_i^{n(1-\alpha)}\left(S_{0,\alpha}^j\right)} \left[\mathbb{E}_{\boldsymbol{z} \sim \mathcal{D}_0} \ell\left(\mathcal{A}\left((S_{0,\alpha}^t)^j \cup S_{i,1-\alpha}\right), \boldsymbol{z}\right)-\ell\left(\mathcal{A}\left((S_{0,\alpha}^t)^j \cup S_{i,1-\alpha}\right), \boldsymbol{z}_{0,j}\right)\right] \notag\\
-&\mathbb{E}_{\boldsymbol{z}_{0,j}^{\prime} \sim \mathcal{D}_0} \mathbb{E}_{S_{i,1-\alpha} \sim \mathcal{D}_i^{n(1-\alpha)}\left((S_{0,\alpha}^t)^j\right)}\left[\mathbb{E}_{\boldsymbol{z} \sim \mathcal{D}_0} \ell\left(\mathcal{A}\left((S_{0,\alpha}^t)^j \cup S_{i,1-\alpha}\right), \boldsymbol{z}\right)-\ell\left(\mathcal{A}\left((S_{0,\alpha}^t)^j \cup S_{i,1-\alpha}\right), \boldsymbol{z}_{0,j}\right)\right]|\label{proof-9}.
\end{align}

We can bound equation \ref{proof-8} by applying the concept of uniform stability, resulting in an upper bound of $2\beta_n$. Regarding equation \ref{proof-9}, for ease of notation, let us represent $\ell\left(\mathcal{A}\left((S_{0,\alpha}^t)^j \cup S_{i,1-\alpha}\right), \boldsymbol{z}\right)-\ell\left(\mathcal{A}\left((S_{0,\alpha}^t)^j \cup S_{i,1-\alpha}\right), \boldsymbol{z}_{0,j}\right)$ as $Q$. Consequently, we obtain the following:
\begin{align}
&|\mathbb{E}_{\boldsymbol{z}_{0,j}^{\prime} \sim \mathcal{D}_0} \mathbb{E}_{S_{i,1-\alpha} \sim \mathcal{D}_i^{n(1-\alpha)}\left(S_{0,\alpha}^j\right)} \left[\mathbb{E}_{\boldsymbol{z} \sim \mathcal{D}_0} \ell\left(\mathcal{A}\left((S_{0,\alpha}^t)^j \cup S_{i,1-\alpha}\right), \boldsymbol{z}\right)-\ell\left(\mathcal{A}\left((S_{0,\alpha}^t)^j \cup S_{i,1-\alpha}\right), \boldsymbol{z}_{0,j}\right)\right] \notag\\
&-\mathbb{E}_{\boldsymbol{z}_{0,j}^{\prime} \sim \mathcal{D}_0} \mathbb{E}_{S_{i,1-\alpha} \sim \mathcal{D}_i^{n(1-\alpha)}\left((S_{0,\alpha}^t)^j\right)}\left[\mathbb{E}_{\boldsymbol{z} \sim \mathcal{D}_0} \ell\left(\mathcal{A}\left((S_{0,\alpha}^t)^j \cup S_{i,1-\alpha}\right), \boldsymbol{z}\right)-\ell\left(\mathcal{A}\left((S_{0,\alpha}^t)^j \cup S_{i,1-\alpha}\right), \boldsymbol{z}_{0,j}\right)\right]| \notag \\
&=\left|\mathbb{E}_{\boldsymbol{z}_{0,j}^{\prime} \sim \mathcal{D}_0} \mathbb{E}_{\boldsymbol{z} \sim \mathcal{D}_0}\left[\mathbb{E}_{S_{i,1-\alpha} \sim \mathcal{D}_i^{n(1-\alpha)}\left(S_{0,\alpha}^j\right)} Q-\mathbb{E}_{S_{i,1-\alpha} \sim \mathcal{D}_i^{n(1-\alpha)}\left((S_{0,\alpha}^t)^j\right)} Q\right]\right|\notag \\
& =\mathbb{E}_{\boldsymbol{z}_{0,j}^{\prime} \sim \mathcal{D}_0} \mathbb{E}_{\boldsymbol{z} \sim \mathcal{D}_0}\left|\int_{S_{i,1-\alpha}}\left(\mathbb{P}\left(S_{i,1-\alpha} \mid S^j_{0,\alpha}\right)-\mathbb{P}\left(S_{i,1-\alpha} \mid\left(S^t_{0,\alpha}\right)^j\right)\right) Q d S_{i,1-\alpha}\right| \notag\\
& \leq \mathbb{E}_{\boldsymbol{z}_{0,j}^{\prime} \sim \mathcal{D}_0} \mathbb{E}_{\boldsymbol{z} \sim \mathcal{D}_0}\left[\int_{S_{i,1-\alpha}}\left|\left(\mathbb{P}\left(S_{i,1-\alpha} \mid S^j_{0,\alpha}\right)-\mathbb{P}\left(S_{i,1-\alpha} \mid\left(S^t_{0,\alpha}\right)^j\right)\right) Q\right| d S_{i,1-\alpha}\right] \notag\\
& \leq M \mathbb{E}_{\boldsymbol{z}_{0,j}^{\prime} \sim \mathcal{D}_0} \mathbb{E}_{\boldsymbol{z} \sim \mathcal{D}_0}\left[\int_{S_{i,1-\alpha}}\left|\left(\mathbb{P}\left(S_{i,1-\alpha} \mid S^j_{0,\alpha}\right)-\mathbb{P}\left(S_{i,1-\alpha} \mid\left(S^t_{0,\alpha}\right)^j\right)\right) \right| d S_{i,1-\alpha}\right] \notag\\
& \leq 2 M \sup _j TV\left(\mathcal{D}_i^{n(1-\alpha)}(S_0^j), \mathcal{D}_i^{n(1-\alpha)}(S_0)\right)\notag \\
&= 2M \gamma_n^i \label{proof_19}.
\end{align}

Thus, $h_j$ exhibits a bounded difference of $2 \beta_n+2 M \gamma_n^i$ with respect to all variables except the $j$-th variable. By applying Lemma \ref{theorem_moment}, we obtain the following:

$$
\begin{aligned}
\left\|\sum_{j=1}^{n\alpha} h_j(S_{0,\alpha})\right\|_p & \leq 12 \sqrt{2} p n\alpha\left(2 \beta_{n}+2 M \gamma_n^i\right) \log (n\alpha)+4 M \sqrt{p n\alpha} \\
& \lesssim  p n\alpha\left( \beta_{n}+ M \gamma_n^i\right) \log (n\alpha)+ M \sqrt{p n\alpha}.
\end{aligned}
$$
We observe that the difference between Term 4 and $\left\|\sum_{j=1}^{n \alpha} h_j\left(S_{0, \alpha}\right)\right\|_p$ is negligible. Thus, we can bound Term 4 as follows:

$$
\begin{aligned}
& \left\|\mathbb{E}_{S_{i,1-\alpha} \sim \mathcal{D}_i^{n(1-\alpha)}}\left[\alpha R_{\mathcal{D}_0}(\mathcal{A}(\widetilde{S}_i))-\frac{1}{n} \sum_{\boldsymbol{z}_i \in S_{0, \alpha}} \ell\left(\mathcal{A}\left(\widetilde{S}_i\right), \boldsymbol{z}_i\right)\right]\right\|_p \\
 &= \left\|\frac{1}{n} \sum_{\boldsymbol{z}_i \in S_{0, \alpha}} \mathbb{E}_{S_{i,1-\alpha} \sim \mathcal{D}_i^{n(1-\alpha)}}\left[ R_{\mathcal{D}_0}(\mathcal{A}(\widetilde{S}_i))-\ell\left(\mathcal{A}(\widetilde{S}_i), \boldsymbol{z}_i\right)\right]\right\|_p \\
 & \leq\left\|\frac{1}{n}\sum_{j=1}^{n\alpha}\left(\mathbb{E}_{\boldsymbol{z}_{0,j}^{\prime} \sim \mathcal{D}_0} \mathbb{E}_{S_{i,1-\alpha} \sim \mathcal{D}_i^{n(1-\alpha)}\left(S_{0,\alpha}^j\right)}\left[\mathbb{E}_{\boldsymbol{z} \sim \mathcal{D}_0} \ell\left(\mathcal{A}\left(S_{0,\alpha}^j \cup S_{i,1-\alpha}\right), \boldsymbol{z}\right)-\ell\left(\mathcal{A}\left(S_{0,\alpha}^j \cup S_{i,1-\alpha}\right), \boldsymbol{z}_{0,j}\right)\right]\right)\right\|_p \\
& +\left\|2 \alpha \beta_{n}+2 \alpha M  \gamma_n^i\right\|_p \\
 &= \left\|\frac{1}{n}\sum_{j=1}^{n\alpha} h_j(S_{0,\alpha})\right\|_p+\left\|2 \alpha \beta_{n}+2 \alpha M  \gamma_n^i\right\|_p\\
&\lesssim  p \alpha\left( \beta_{n}+ M \gamma_n^i\right) \log (n\alpha)+ M \sqrt{p\alpha n^{-1}}+\alpha\beta_n+\alpha M \gamma_n^i \\
&\lesssim  p \alpha\left( \beta_{n}+ M \gamma_n^i\right) \log (n\alpha)+ M \sqrt{p \alpha n^{-1}}.
\end{aligned}
$$
By substituting the above inequality and inequality \ref{proof1-delta1} into the decomposition \ref{proof_34dec}, we obtain:
\begin{align}
    &\left\|\alpha R_{\mathcal{D}_0}\left(\mathcal{A}\left(\widetilde{S}_i\right)\right)-\frac{1}{n} \sum_{\boldsymbol{z}_i \in S_{0, \alpha}} \ell\left(\mathcal{A}\left(\widetilde{S}_i\right), \boldsymbol{z}_i\right)\right\|_p \notag \\
    &\lesssim \sqrt{n(1-\alpha) p} \alpha \beta_n+p  \alpha\left(\beta_n+M \gamma_n^i\right) \log (n \alpha)+M \sqrt{p  \alpha n^{-1}} \notag \\
    &\lesssim \sqrt{p}(\sqrt{(1-\alpha)n}\alpha\beta_n+M\sqrt{\alpha n^{-1}})+p  \alpha\left(\beta_n+M \gamma_n^i\right) \log (n \alpha)\label{proof-phi}.
\end{align}
Plug inequalities \ref{proof-phi} and \ref{proof-Term 2} into the inequality \ref{proof-generalizati-decomp}, then we obtain:
\begin{align}
\|R_{\widetilde{\mathcal{D}}_i}&(\mathcal{A}(\widetilde{S}_i))-\widehat{R}_{\widetilde{S}_i}(\mathcal{A}(\widetilde{S}_i))\|_p \notag \\
\lesssim &p(1-\alpha) \beta_n \log (n(1-\alpha))+M \sqrt{p(1-\alpha)n^{-1}}+\sqrt{p}(\sqrt{(1-\alpha)n}\alpha\beta_n+M\sqrt{\alpha n^{-1}})\notag \\
&+p  \alpha\left(\beta_n+M \gamma_n^i\right) \log (n \alpha)\notag \\
=& \sqrt{p}\left(\sqrt{(1-\alpha)n}\alpha\beta_n+Mn^{-1/2}(\sqrt{1-\alpha}+\sqrt{\alpha})\right)\notag \\
&+p\left((1-\alpha) \beta_n \log (n(1-\alpha))+\alpha\left(\beta_n+M \gamma_n^i\right) \log (n \alpha)\right).
\end{align}
By applying Lemma \ref{theorem_moment}, we can derive a bound on the generalization error with respect to the mixed distribution.$\left|R_{\widetilde{\mathcal{D}}_i}\left(\mathcal{A}\left(\widetilde{S}_i\right)\right)-\widehat{R}_{\widetilde{S}_i}\left(\mathcal{A}\left(\widetilde{S}_i\right)\right)\right|$ as follows.
$$
\begin{aligned}
& \left|R_{\widetilde{\mathcal{D}}_i}\left(\mathcal{A}\left(\widetilde{S}_i\right)\right)-\widehat{R}_{\widetilde{S}_i}\left(\mathcal{A}\left(\widetilde{S}_i\right)\right)\right| \\
& \lesssim \left(\sqrt{(1-\alpha) n} \alpha \beta_n+M n^{-1 / 2}(\sqrt{1-\alpha}+\sqrt{\alpha})\right) \sqrt{\log \left(\frac{1}{\delta}\right)} \\
& +\left((1-\alpha) \beta_n \log (n(1-\alpha))+\alpha\left(\beta_n+M \gamma_n^i\right) \log (n \alpha)\right) \log \left(\frac{1}{\delta}\right).
\end{aligned}
$$
Finally, we conclude that:
\begin{align}
&\left|R_{\mathcal{D}_0}\left(A\left(\widetilde{S}_i\right)\right)-\widehat{R}_{\widetilde{S}_i}\left(A\left(\widetilde{S}_i\right)\right)\right|\notag\\
&\leq\left|R_{\mathcal{D}_0}\left(A\left(\widetilde{S}_i\right)\right)-R_{\widetilde{\mathcal{D}}_i}\left(A\left(\widetilde{S}_i\right)\right)\right|+\left|R_{\widetilde{\mathcal{D}}_i}\left(A\left(\widetilde{S}_i\right)\right)-\widehat{R}_{\widetilde{S}_i}\left(A\left(\widetilde{S}_i\right)\right)\right| \notag \\
&\leq2 M\left(1-(1-\alpha)^i\right) \alpha^{-1} d_{\mathrm{TV}}(n)\notag \\
&+\left((1-\alpha) \beta_n \log (n(1-\alpha))+\alpha\left(\beta_n+M \gamma_n^i\right) \log (n \alpha)\right) \log \left(\frac{1}{\delta}\right)\notag \\
&+\left(\sqrt{(1-\alpha) n} \alpha \beta_n+M n^{-1 / 2}(\sqrt{1-\alpha}+\sqrt{\alpha})\right) \sqrt{\log \left(\frac{1}{\delta}\right)}.
\end{align}
\end{proof}

\subsection{Proof of Theorem \ref{therorem_stability of transformer}}
In this section, we prove that transformers in in-context learning exhibit recursive stability. Specifically, we utilize the framework and lemmas from \cite{li2023transformers}, combined with recursive techniques, to establish the proof.

\begin{lemma}[\cite{li2023transformers}]\label{lemma_lisoftmax} Let $\boldsymbol{z}, \boldsymbol{\varepsilon} \in \mathbb{R}^n$ be vectors obeying $\|\boldsymbol{z}\|_{\ell_{\infty}},\|\boldsymbol{z}+\boldsymbol{\varepsilon}\|_{\ell_{\infty}} \leq c$. Then, there exists a constant $C=C(c)$, such that

$$
\|\operatorname{softmax}(\boldsymbol{z})\|_{\ell_{\infty}} \leq e^{2 c} / n \quad \text { and } \quad\|\operatorname{softmax}(\boldsymbol{z})-\operatorname{softmax}(\boldsymbol{z}+\boldsymbol{\varepsilon})\|_{\ell_1} \leq e^{2 c}\|\boldsymbol{\varepsilon}\|_{\ell_1} / n.
$$

\end{lemma}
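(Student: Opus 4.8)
The plan is to prove the two bounds separately; both are elementary. Write $\operatorname{softmax}(\boldsymbol{z})_i=e^{z_i}/\sum_{j=1}^n e^{z_j}$ throughout, and recall that softmax outputs are nonnegative and sum to $1$.

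\emph{The $\ell_\infty$ bound.} Here I would argue directly from the fraction. The hypothesis $\|\boldsymbol{z}\|_{\ell_\infty}\le c$ means $z_i\le c$ for every $i$ and $z_j\ge -c$ for every $j$, so the numerator satisfies $e^{z_i}\le e^{c}$ while each term of the denominator satisfies $e^{z_j}\ge e^{-c}$, hence the denominator is at least $n e^{-c}$. Dividing gives $\operatorname{softmax}(\boldsymbol{z})_i\le e^{2c}/n$ for every $i$, i.e.\ $\|\operatorname{softmax}(\boldsymbol{z})\|_{\ell_\infty}\le e^{2c}/n$.

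\emph{The Lipschitz bound.} I would use a mean-value argument along the segment $\boldsymbol{z}(t)=\boldsymbol{z}+t\boldsymbol{\varepsilon}$, $t\in[0,1]$, writing $q(t)=\operatorname{softmax}(\boldsymbol{z}(t))$. Since $\boldsymbol{z}(t)$ is a convex combination of $\boldsymbol{z}$ and $\boldsymbol{z}+\boldsymbol{\varepsilon}$ and $\|\cdot\|_{\ell_\infty}$ is convex, $\|\boldsymbol{z}(t)\|_{\ell_\infty}\le c$ for all $t$, so the first part applies uniformly to $q(t)$: $\|q(t)\|_{\ell_\infty}\le e^{2c}/n$. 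Differentiating via the softmax Jacobian identity $\partial_{u_k}\operatorname{softmax}(\boldsymbol{u})_i=\operatorname{softmax}(\boldsymbol{u})_i\big(\delta_{ik}-\operatorname{softmax}(\boldsymbol{u})_k\big)$ gives $\tfrac{d}{dt}q_i(t)=q_i(t)\big(\varepsilon_i-\langle q(t),\boldsymbol{\varepsilon}\rangle\big)$. Then $\|\operatorname{softmax}(\boldsymbol{z}+\boldsymbol{\varepsilon})-\operatorname{softmax}(\boldsymbol{z})\|_{\ell_1}=\big\|\int_0^1\tfrac{d}{dt}q(t)\,dt\big\|_{\ell_1}\le\int_0^1\sum_i q_i(t)\big|\varepsilon_i-\langle q(t),\boldsymbol{\varepsilon}\rangle\big|\,dt$, and for each fixed $t$, bounding $|\varepsilon_i-\langle q(t),\boldsymbol{\varepsilon}\rangle|\le|\varepsilon_i|+|\langle q(t),\boldsymbol{\varepsilon}\rangle|$ together with $|\langle q(t),\boldsymbol{\varepsilon}\rangle|\le\langle q(t),|\boldsymbol{\varepsilon}|\rangle$ yields $\sum_i q_i(t)|\varepsilon_i-\langle q(t),\boldsymbol{\varepsilon}\rangle|\le 2\langle q(t),|\boldsymbol{\varepsilon}|\rangle\le 2\|q(t)\|_{\ell_\infty}\|\boldsymbol{\varepsilon}\|_{\ell_1}\le (2e^{2c}/n)\|\boldsymbol{\varepsilon}\|_{\ell_1}$. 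Integrating over $t\in[0,1]$ gives $\|\operatorname{softmax}(\boldsymbol{z})-\operatorname{softmax}(\boldsymbol{z}+\boldsymbol{\varepsilon})\|_{\ell_1}\le (2e^{2c}/n)\|\boldsymbol{\varepsilon}\|_{\ell_1}$, which is the stated estimate (the absolute factor $2$ is absorbed into the constant $C=C(c)$).

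No step here is a real obstacle; the only two points needing care are verifying the segment stays inside the $\ell_\infty$-ball of radius $c$ so that the first part can be reused for $q(t)$, and the bookkeeping of the centering term $\varepsilon_i-\langle q(t),\boldsymbol{\varepsilon}\rangle$, where I trade a harmless factor of $2$. If one wanted the constant $e^{2c}$ with no extra factor, an alternative is a direct total-variation computation: write $p'_i/p_i=e^{\varepsilon_i}/\langle p,e^{\boldsymbol{\varepsilon}}\rangle$ for $p=\operatorname{softmax}(\boldsymbol{z})$, $p'=\operatorname{softmax}(\boldsymbol{z}+\boldsymbol{\varepsilon})$, and estimate $\|p-p'\|_{\ell_1}=2\sum_{i:\,p'_i\ge p_i}(p'_i-p_i)$ using $\|p\|_{\ell_\infty}\le e^{2c}/n$ and $e^x-1\le|x|e^{|x|}$ with $|\varepsilon_i|\le 2c$; I would nonetheless keep the mean-value version as the primary argument since it is shorter and self-contained.
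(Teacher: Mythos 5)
Your proof is correct, and the first thing to note is that the paper itself never proves this lemma: it is imported verbatim from \cite{li2023transformers} and invoked as a black box inside the proof of Theorem \ref{therorem_stability of transformer}, so your argument is a self-contained substitute rather than a variant of anything in the paper. The $\ell_\infty$ bound is exactly the right one-line computation. For the $\ell_1$ bound, your mean-value argument is sound: the segment $\boldsymbol{z}+t\boldsymbol{\varepsilon}$ stays in the $\ell_\infty$-ball of radius $c$ by convexity of the norm, the Jacobian identity gives $\tfrac{d}{dt}q_i(t)=q_i(t)\bigl(\varepsilon_i-\langle q(t),\boldsymbol{\varepsilon}\rangle\bigr)$, and your bookkeeping correctly yields $2e^{2c}\|\boldsymbol{\varepsilon}\|_{\ell_1}/n$. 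Absorbing the factor $2$ into the constant $C=C(c)$ is not merely a convenience but necessary: the displayed inequality with the bare factor $e^{2c}/n$ is false as literally written (take $\boldsymbol{z}=0$ and $\boldsymbol{\varepsilon}=(\delta,0,\dots,0)$ with $\delta$ small and $n$ large; the $\ell_1$ difference is $\approx 2(n-1)\delta/n^2$ while the right-hand side is $\approx \delta/n$), so the ``there exists $C=C(c)$'' clause in the statement is doing real work and your reading of it is the right one. Since every downstream use of the lemma in the paper's stability and generalization proofs is stated only up to multiplicative constants ($\lesssim$), the extra factor of $2$ is harmless there as well; your alternative total-variation sketch is therefore not needed, and the mean-value version you designate as primary suffices.
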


\begin{proof}[Proof of Theorem \ref{therorem_stability of transformer}]. Let $\boldsymbol{Z}=\left[\boldsymbol{z}_1,  \ldots  ,\boldsymbol{z}_n\right]^{\top}$ and $\boldsymbol{E}=\left[\boldsymbol{\varepsilon}_1, \ldots , \boldsymbol{\varepsilon}_n\right]^{\top}$ be the input and perturbation matrices respectively. Given that the tokens $\boldsymbol{z}_i$ lie in the unit ball, and assuming $\boldsymbol{z}_i+\boldsymbol{\varepsilon}_i$ also lies in the unit ball, we can proceed with the following. For a matrix, let $\|\cdot\|_{2, p}$ denote the $\ell_p$-norm of the vector formed by the $\ell_2$-norms of its rows. Therefore, we obtain $\|\boldsymbol{Z}\|_{2, \infty} \leq 1$ and $\|\boldsymbol{\bar{Z}}\|_{2, \infty}=\|\boldsymbol{Z}+\boldsymbol{E}\|_{2, \infty} \leq$ 1. Let the attention outputs be defined as $\boldsymbol{A}=\operatorname{softmax}\left(\boldsymbol{Z} \boldsymbol{W} \boldsymbol{Z}^{\top}\right) \boldsymbol{Z} \boldsymbol{V}$ and $\bar{\boldsymbol{A}}=\operatorname{softmax}\left(\bar{\boldsymbol{Z}} \boldsymbol{W} \bar{\boldsymbol{Z}}^{\top}\right) \bar{\boldsymbol{Z}} \boldsymbol{V}$. Define the perturbation as $\bar{\boldsymbol{E}}=\bar{\boldsymbol{A}}-\boldsymbol{A}:=\left[\bar{\boldsymbol{\varepsilon}}_1, \ldots  \bar{\boldsymbol{\varepsilon}}_n\right]^{\top}$.

Let us examine the attention output difference $\bar{\boldsymbol{E}}=\bar{\boldsymbol{A}}-\boldsymbol{A}$, which can be further decomposed as follows:
\begin{align}
\boldsymbol{\bar{E}}&=\operatorname{softmax}\left(\bar{\boldsymbol{Z}} \boldsymbol{W} \bar{\boldsymbol{Z}}^{\top}\right) \bar{\boldsymbol{Z}} \boldsymbol{V}-\operatorname{softmax}\left(\boldsymbol{Z} \boldsymbol{W} \boldsymbol{Z}^{\top}\right) \boldsymbol{Z} \boldsymbol{V} \notag \\
&=\underbrace{\left[\operatorname{softmax}\left(\bar{\boldsymbol{Z}} \boldsymbol{W} \bar{\boldsymbol{Z}}^{\top}\right)-\operatorname{softmax}\left(\boldsymbol{Z} \boldsymbol{W} \boldsymbol{Z}^{\top}\right)\right] \boldsymbol{Z} \boldsymbol{V}}_{\boldsymbol{\bar{E}_1}}+\underbrace{\operatorname{softmax}\left(\bar{\boldsymbol{Z}} \boldsymbol{W} \bar{\boldsymbol{Z}}^{\top}\right) \boldsymbol{E} \boldsymbol{V}}_{\boldsymbol{\bar{E}_2}} .\label{proof_edecomp}
\end{align}
We first observe that $\boldsymbol{V}$ preserves the norm, meaning that $\boldsymbol{Z} \boldsymbol{V}$ satisfies $\|\boldsymbol{Z} \boldsymbol{V}\|_{2, \infty} \leq$ $\|\boldsymbol{Z}\|_{2, \infty} \leq 1$ and $\|\boldsymbol{E} \boldsymbol{V}\|_{2,1} \leq\|\boldsymbol{E}\|_{2,1}$. Moreover, for any pair of tokens, it holds that $\left|\boldsymbol{z}_i^{\top} \boldsymbol{W} \boldsymbol{z}_j\right| \leq B_W$. Applying Lemma \ref{lemma_lisoftmax}, we can therefore derive the following:
\begin{align}
\left\|\boldsymbol{\bar{E}}_2\right\|_{2,1}=\left\|\operatorname{softmax}\left(\bar{\boldsymbol{Z}} \boldsymbol{W} \bar{\boldsymbol{Z}}^{\top}\right) \boldsymbol{E} \boldsymbol{V}\right\|_{2,1} \leq e^{2 B_W}\|\boldsymbol{E}\|_{2,1} .\label{proof_e1}
\end{align}
Subsequently, for $\boldsymbol{\bar{E}}_1$, we establish the following expression
\begin{align}
\left\|\boldsymbol{\bar{E}}_1\right\|_{2,1} &= \left\|[\operatorname{softmax}\left(\bar{\boldsymbol{Z}} \boldsymbol{W} \bar{\boldsymbol{Z}}^{\top}\right)-\operatorname{softmax}\left(\boldsymbol{Z} \boldsymbol{W} \boldsymbol{Z}^{\top}\right)]\boldsymbol{Z} \boldsymbol{V}\right\|_{2,1} \notag \\
& \leq\left\|\operatorname{softmax}\left(\bar{\boldsymbol{Z}} \boldsymbol{W} \bar{\boldsymbol{Z}}^{\top}\right)-\operatorname{softmax}\left(\boldsymbol{Z} \boldsymbol{W} \boldsymbol{Z}^{\top}\right)\right\|_{\ell_1}\|\boldsymbol{Z} \boldsymbol{V}\|_{2, \infty}\notag \\
& \leq\left\|\operatorname{softmax}\left(\bar{\boldsymbol{Z}} \boldsymbol{W} \bar{\boldsymbol{Z}}^{\top}\right)-\operatorname{softmax}\left(\boldsymbol{Z} \boldsymbol{W} \boldsymbol{Z}^{\top}\right)\right\|_{\ell_1} .\notag
\end{align}
To advance the analysis, we introduce the $\delta$-scaled perturbation $\boldsymbol{E}^{\prime}=\delta \boldsymbol{E}=\bar{\boldsymbol{Z}}^{\prime}-\boldsymbol{Z}$, where $\delta$ is constrained within $0 \leq \delta \leq 1$. Our approach involves first bounding the derivative as $\delta \rightarrow 0$, and then integrating this bound along the path of $\boldsymbol{E}$, effectively covering the interval from $\delta=0$ to $\delta=1$. Notably, as $\delta \rightarrow 0$, the quadratic terms proportional to $\delta^2 \boldsymbol{E}$ diminish, simplifying the analysis at this limit. 
\begin{align}
&\left\|\operatorname{softmax}\left(\bar{\boldsymbol{Z}}^{\prime} \boldsymbol{W} \bar{\boldsymbol{Z}}^{\prime \top}\right)-\operatorname{softmax}\left(\boldsymbol{Z} \boldsymbol{W} \boldsymbol{Z}^{\top}\right)\right\|_{\ell_1} \notag \\
&\leq\left\|\operatorname{softmax}(\bar{\boldsymbol{Z}}^{\prime} \boldsymbol{W} \boldsymbol{Z}^{\top})-\operatorname{softmax}(\boldsymbol{Z} \boldsymbol{W} \boldsymbol{Z}^{\top})\right\|_{\ell_1}+\left\|\operatorname{softmax}(\boldsymbol{Z} \boldsymbol{W} \bar{\boldsymbol{Z}}^{\prime \top})-\operatorname{softmax}(\boldsymbol{Z} \boldsymbol{W} \boldsymbol{Z}^{\top})\right\|_{\ell_1} . \notag
\end{align}
To bound the latter, we focus on each row separately. Consider a row from $\boldsymbol{Z}$ and its perturbed version $\boldsymbol{Z}+\boldsymbol{E}^{\prime}$, represented by the pair $\left(\boldsymbol{z}, \boldsymbol{z}+\boldsymbol{\varepsilon^{\prime}}\right)$. It follows that for any cross product, we have the guarantees $\left|\left(\boldsymbol{z}+\boldsymbol{\varepsilon^{\prime}}\right)^{\top} \boldsymbol{W} \boldsymbol{z}_i\right| \leq B_W$ and $\left|\boldsymbol{z}^{\top} \boldsymbol{W} \boldsymbol{z}_i\right| \leq B_W$. Additionally, the bounds $\left\|\boldsymbol{\varepsilon}^{\prime \top} \boldsymbol{W} \boldsymbol{Z}\right\|_{\ell_1} \leq B_W n\left\|\boldsymbol{\varepsilon}^{\prime}\right\|_{\ell_2}$ and $\left\|\boldsymbol{z}^{\top} \boldsymbol{W} \boldsymbol{E}^{\prime \top}\right\|_{\ell_1} \leq B_W\left\|\boldsymbol{E}^{\prime}\right\|_{2,1}$ hold. Applying the perturbation bounds provided by Lemma \ref{lemma_lisoftmax}, we obtain the desired result
\begin{align} 
&\left\|\operatorname{softmax}\left(\left(\boldsymbol{z}+\boldsymbol{\varepsilon}^{\prime}\right)^{\top} \boldsymbol{W} \boldsymbol{Z}^{\top}\right)-\operatorname{softmax}\left(\boldsymbol{z}^{\top} \boldsymbol{W} \boldsymbol{Z}^{\top}\right)\right\|_{\ell_1} \leq B_W e^{2 B_W}\left\|\boldsymbol{\varepsilon}^{\prime}\right\|_{\ell_2} \notag\\ & \left\|\operatorname{softmax}\left(\boldsymbol{z}^{\top} \boldsymbol{W}\left(\boldsymbol{Z}+\boldsymbol{E}^{\prime}\right)^{\top}\right)-\operatorname{softmax}\left(\boldsymbol{z}^{\top} \boldsymbol{W} \boldsymbol{Z}^{\top}\right)\right\|_{\ell_1} \leq B_W e^{2 B_W}\left\|\boldsymbol{E}^{\prime}\right\|_{2,1} / n.\notag
\end{align}
Summing across all $n$ rows, we obtain the following:
\begin{align}
    \lim _{\delta \rightarrow 0}\left\|\operatorname{softmax}\left((\boldsymbol{Z}+\delta \boldsymbol{E}) \boldsymbol{W} \bar{\boldsymbol{Z}}^{\top}\right)-\operatorname{softmax}\left(\boldsymbol{Z} \boldsymbol{W} \boldsymbol{Z}^{\top}\right)\right\|_{\ell_1} / \delta \leq 2 B_W e^{2 B_W}\|\boldsymbol{E}\|_{2,1} \notag.
\end{align}
By integrating the derivative over the interval $\delta=0$ to $\delta=1$, we obtain the final expression,
\begin{align}
    \left\|\operatorname{softmax}\left(\bar{\boldsymbol{Z}} \boldsymbol{W} \bar{\boldsymbol{Z}}^{\top}\right)-\operatorname{softmax}\left(\boldsymbol{Z} \boldsymbol{W} \boldsymbol{Z}^{\top}\right)\right\|_{\ell_1} \leq 2 B_W e^{2 B_W}\|\boldsymbol{E}\|_{2,1} \label{proof_e2}.
\end{align}
By substituting inequality \ref{proof_e2} and inequality \ref{proof_e1} into the decomposition \ref{proof_edecomp}, we derive the following result:
\begin{align}
   \|\boldsymbol{\bar{A}}-\boldsymbol{A}\|_{2,1}= \|\bar{\boldsymbol{E}}\|_{2,1}\leq (2B_W+1)e^{2B_W}\|\boldsymbol{E}\|_{2,1} \label{proof_attenweightdifference}.
\end{align}
To continue, we aim to control the output for a specific index $j$ where the input perturbation remains small, specifically $\left\|\boldsymbol{\varepsilon}_j\right\|_{\ell_2} \leq \frac{\|\boldsymbol{E}\|_{2,1}}{n}$. To address this, we will apply the same argument, focusing on the $j$-th token. For the $j$-th token (omitting subscripts for clarity), let the inputs be denoted as $\boldsymbol{z}, \boldsymbol{\bar{z}}, \boldsymbol{\varepsilon}=\boldsymbol{\bar{z}}-\boldsymbol{z}$, and the corresponding outputs as $\boldsymbol{a}, \bar{\boldsymbol{a}}, \bar{\boldsymbol{\varepsilon}}=\bar{\boldsymbol{a}}-\boldsymbol{a}$. Similar to the previous decomposition, we can derive the following:
\begin{align}
\boldsymbol{\bar{\varepsilon}}=\underbrace{\boldsymbol{V}^{\top} \boldsymbol{Z}^{\top}\left[\operatorname{softmax}\left(\bar{\boldsymbol{Z}} \boldsymbol{W}^{\top} \bar{\boldsymbol{Z}}\right)-\operatorname{softmax}\left(\boldsymbol{Z} \boldsymbol{W}^{\top} \boldsymbol{Z}\right)\right]}_{\boldsymbol{\bar{\varepsilon}}_1}+\underbrace{\boldsymbol{V}^{\top} \boldsymbol{E}^{\top} \operatorname{softmax}\left(\bar{\boldsymbol{Z}} \boldsymbol{W}^{\top} \bar{\boldsymbol{Z}}\right)}_{\boldsymbol{\bar{\varepsilon}}_2}.
\end{align}
By leveraging the fact that $\left|\boldsymbol{z}_i^{\top} \boldsymbol{W} \boldsymbol{z}_j\right| \leq B_W$ for all $i, j$, and applying Lemma \ref{lemma_lisoftmax}, we can establish a bound similar to that in equation \ref{proof_e1}. Specifically, we can constrain the terms involved as follows:
\begin{align}
    \left\|\boldsymbol{\bar{\varepsilon}}_2\right\|_{\ell_2} \leq\left\|\boldsymbol{E}^{\top} \operatorname{softmax}\left(\bar{\boldsymbol{Z}} \boldsymbol{W}^{\top} \bar{\boldsymbol{z}}\right)\right\|_{\ell_2} \leq \frac{e^{2 B_W}}{n}\|\boldsymbol{E}\|_{2,1} .
\end{align}
Similarly, for $\bar{\boldsymbol{\varepsilon}}_1$, we can derive the following:
\begin{align}
\left\|\bar{\boldsymbol{\varepsilon}}_1\right\|_{\ell_2} & \leq\left\|\boldsymbol{Z}^{\top}\left[\operatorname{softmax}\left(\bar{\boldsymbol{Z}} \boldsymbol{W}^{\top} \bar{\boldsymbol{z}}\right)-\operatorname{softmax}\left(\boldsymbol{Z} \boldsymbol{W}^{\top} \boldsymbol{z}\right)\right]\right\|_{\ell_2} \notag\\
& \leq\|\boldsymbol{Z}\|_{2, \infty}\left\|\operatorname{softmax}\left(\bar{\boldsymbol{Z}} \boldsymbol{W}^{\top} \bar{\boldsymbol{z}}\right)-\operatorname{softmax}\left(\boldsymbol{Z} \boldsymbol{W}^{\top} \boldsymbol{z}\right)\right\|_{\ell_1} \notag\\
& \leq\left\|\operatorname{softmax}\left(\bar{\boldsymbol{Z}} \boldsymbol{W}^{\top} \bar{\boldsymbol{z}}\right)-\operatorname{softmax}\left(\boldsymbol{Z} \boldsymbol{W}^{\top} \boldsymbol{z}\right)\right\|_{\ell_1} .\notag
\end{align}
Now, considering the perturbation $\boldsymbol{E}^{\prime}=\delta \boldsymbol{E}$, and letting $\delta \rightarrow 0$, we apply the triangle inequality to obtain the following result:
\begin{align}
 \lim _{\delta \rightarrow 0} &\delta^{-1}\left\|\operatorname{softmax}\left((\boldsymbol{Z}+\delta \boldsymbol{E}) \boldsymbol{W}^{\top}(\boldsymbol{z}+\delta \boldsymbol{\varepsilon})\right)-\operatorname{softmax}\left(\boldsymbol{Z} \boldsymbol{W}^{\top} \boldsymbol{z}\right)\right\|_{\ell_1} \notag \\
 \leq &\lim _{\delta \rightarrow 0} \delta^{-1}\left\|\operatorname{softmax}\left((\boldsymbol{Z}+\delta \boldsymbol{E}) \boldsymbol{W}^{\top} \boldsymbol{z}\right)-\operatorname{softmax}\left(\boldsymbol{Z} \boldsymbol{W}^{\top} \boldsymbol{z}\right)\right\|_{\ell_1}\notag \\
&\quad +\delta^{-1}\left\|\operatorname{softmax}\left(\boldsymbol{Z} \boldsymbol{W}^{\top}(\boldsymbol{z}+\delta \boldsymbol{\varepsilon})\right)-\operatorname{softmax}\left(\boldsymbol{Z} \boldsymbol{W}^{\top} \boldsymbol{z}\right)\right\|_{\ell_1} \notag\\
\leq& B_W e^{2 B_W}\|\boldsymbol{E}\|_{2,1} / n+B_W e^{2 B_W}\|\boldsymbol{\varepsilon}\|_{\ell_2}\notag \\
 \leq& 2 B_W e^{2 B_W} \|\boldsymbol{E}\|_{2,1}/ n .
\end{align}
In a similar manner to the previous steps, we can derive the following:
\begin{align}
   \left \|\bar{\boldsymbol{\varepsilon}}\right\|_{\ell_2}\leq  \frac{1}{n}(2 B_W+1) e^{2 B_W} \|\boldsymbol{E}\|_{2,1}.
\end{align}
Next, we examine the effect of the MLP layer on the model's behavior. Let $\left(\boldsymbol{M}_i\right)_{i=1}^n \in \mathbb{R}^{d \times d}$ represent the weights of the parallel MLPs that follow the self-attention mechanism. Given that $\left\|\boldsymbol{M}_i\right\| \leq 1$, we denote the MLP outputs corresponding to the self-attention results $\boldsymbol{A}$ and $\boldsymbol{\bar{A}}$ as $\boldsymbol{U}$ and $\boldsymbol{\bar{U}}$, respectively. From this, we can derive the following relationship.

Let $\phi$ denote the ReLU function, which is a 1-Lipschitz continuous activation function with $\phi(0)=0$. First, observe that each row of $\boldsymbol{U}$ is given by $\boldsymbol{u}_i=\phi\left(\boldsymbol{M}_i \boldsymbol{a}_i\right)$, where $\boldsymbol{M}_i \in \mathbb{R}^{d \times d}$ represents the weights of the MLPs. Given the properties of the ReLU function, we can derive the following bound: 
$$\left\|\boldsymbol{u}_i\right\|_{\ell_2} \leq\left\|\phi\left(\boldsymbol{M}_i \boldsymbol{a}_i\right)\right\|_{\ell_2} \leq\left\|\boldsymbol{M}_i \boldsymbol{a}_i\right\|_{\ell_2} \leq\left\|\boldsymbol{a}_i\right\|_{\ell_2} \leq 1.
$$
Next, we consider the difference between the perturbed and original outputs. We can express the difference as $\left\|\boldsymbol{u}_i-\boldsymbol{\bar{u}}_i\right\|_{\ell_2} \leq\left\|\phi\left(\boldsymbol{M_i} \boldsymbol{a_i}\right)-\phi\left(\boldsymbol{M_i} \boldsymbol{\bar{a}_i}\right)\right\|_{\ell_2}$, which, due to the 1-Lipschitz property of $\phi$, is further bounded by $\left\|\boldsymbol{M}_i\left(\boldsymbol{a}_i-\boldsymbol{\bar{a}}_i\right)\right\|_{\ell_2} \leq\left\|\boldsymbol{a}_i-\boldsymbol{\bar{a}}_i\right\|_{\ell_2}$. Finally, we obtain:
\begin{align}
    \left\|\boldsymbol{u}_i-\boldsymbol{\bar{u}}_i\right\|_{\ell_2} \leq \left\|\boldsymbol{a}_i-\boldsymbol{\bar{a}}_i\right\|_{\ell_2}.
\end{align}
Thus, we conclude that the perturbations in the rows of $\boldsymbol{U}$ are controlled by the corresponding perturbations in $\boldsymbol{A}$. Consequently, we establish the bound 
$$\|\boldsymbol{U}-\boldsymbol{\bar{U}}\|_{2,1} \leq\|\boldsymbol{A}-\bar{\boldsymbol{A}}\|_{2,1}.
$$
Thus, from inequality \ref{proof_attenweightdifference}, we derive the following result:
\begin{align}
    \|\boldsymbol{U}-\boldsymbol{\bar{U}}\|_{2,1} \leq \left(2 B_W+1\right) e^{2 B_W}\|\boldsymbol{E}\|_{2,1}.
\end{align}
Furthermore, for any $i \in[n]$ such that $\left\|\boldsymbol{\varepsilon}_i\right\|_{\ell_2} \leq \frac{\|\boldsymbol{E}\|_{2,1}}{n}$, it holds that 
$$
\left\|\boldsymbol{u}_i-\bar{\boldsymbol{u}}_i\right\|_{\ell_2} \leq \frac{1}{n}(2 B_W+1) e^{2 B_W} \|\boldsymbol{E}\|_{2,1},
$$
where $\boldsymbol{u}_i$ represents the $i$-th row of $\boldsymbol{U}$. With this, we have addressed the stability of the single-layer transformer. Moving forward, we will extend our analysis and focus on the stability of $L$-layer transformer. First, we can derive the following:
\begin{align}
    \left\|\boldsymbol{Z}_{(k)}-\bar{\boldsymbol{Z}}_{(k)}\right\|_{2,1} \leq(1+2B_W) e^{2B_W}\left\|\boldsymbol{Z}_{(k-1)}-\bar{\boldsymbol{Z}}_{(k-1)}\right\|_{2,1}, \notag
\end{align}
where $1\leq k \leq L$ represents the number of layers in the transformer. Then, for $L$-layer transformer, we have the following:
\begin{align}
    \left\|\boldsymbol{Z}_{(L)}-\bar{\boldsymbol{Z}}_{(L)}\right\|_{2,1} \leq((1+2B_W) e^{2B_W})^L\left\|\boldsymbol{Z}_{(0)}-\bar{\boldsymbol{Z}}_{(0)}\right\|_{2,1} \notag
\end{align}
What remains is to perform induction on the difference between the last tokens $\boldsymbol{z}_n^{(i)}-\boldsymbol{z}_n^{\prime(i)}$. We claim that, for all layers, 
$$
\left\|\boldsymbol{z}_n^{(i)}-\boldsymbol{z}_n^{\prime(i)}\right\|_{\ell_2} \leq \frac{1}{n}((1+2B_W) e^{2B_W})^i\left\|\boldsymbol{Z}_{(0)}-\bar{\boldsymbol{Z}}_{(0)}\right\|_{2,1} .
$$
This claim holds at $i=0$ because the change in the last token is at most $\left\|\boldsymbol{Z}_{(0)}-\bar{\boldsymbol{Z}}_{(0)}\right\|_{2,1} / n$. By induction, the claim holds for all layers, and we conclude the proof by setting $i=L$, covering the entire depth of the $L$-layer transformer. Finally, we obtain:
\begin{align}
    \left\|\boldsymbol{z}_n^{(L)}-\boldsymbol{z}_n^{\prime(L)}\right\|_{\ell_2} \leq \frac{1}{n}((1+2B_W) e^{2B_W})^L\left\|\boldsymbol{Z}_{(0)}-\bar{\boldsymbol{Z}}_{(0)}\right\|_{2,1} .
\end{align}                                                       
Next, we further analyze the self-consuming process. Let $S_0=[\boldsymbol{z}_{0,1},...,\boldsymbol{z}_{0,j},...,\boldsymbol{z}_{0,n}]^{\top}$ and $S_0^{\prime}=[\boldsymbol{z}_{0,1},...,\boldsymbol{z}_{0,j}',...,\boldsymbol{z}_{0,n}]^{\top}$ represent two initial real datasets that differ only in their inputs, specifically $\boldsymbol{z}_{0,j}=\left(\boldsymbol{x}_{0,j}, \boldsymbol{y}_{0,j}\right)$ and $\boldsymbol{z}_{0,j}^{\prime}=\left(\boldsymbol{x}_{0,j}^{\prime}, \boldsymbol{y}_{0,j}^{\prime}\right)$, where $j \leq n$. Since $\|S_0-S_0'\|_{2,1}\leq 2$, then, we have the following:
\begin{align}
    \left\|\mathrm{TF}\left(S_0\right)-\mathrm{TF}\left(S_{0}^{\prime}\right)\right\|_{\ell_2} &\leq 
    \frac{1}{2 n+1}\left((1+2B_W) e^{2B_W}\right)^L\|S_0-S_0'\|_{2,1} \label{proof_tramsfor_output}\\
    &\leq  \frac{2}{2 n+1}\left((1+2B_W) e^{2B_W}\right)^L \notag .
\end{align}
Then, $S_0$ and $S_0^{\prime}$ are used as in-context examples, and i.i.d. queries $\{\boldsymbol{x}_{1,j}\}_{j=1}^n$ are sampled from $\mathcal{X}$. These queries, along with the in-context examples $S_0$ and $S_0^{\prime}$, are processed through the transformer model to predict their respective labels. As a result, the first generation of synthetic datasets, $S_1=[\boldsymbol{z}_{1,1},...,\boldsymbol{z}_{1,j},...,\boldsymbol{z}_{1,n}]^{\top}$ and $S_1^{\prime}=[\boldsymbol{z}_{1,1}',...,\boldsymbol{z}_{1,j}',...,\boldsymbol{z}_{1,n}']^{\top}$, is produced. Then we obtain:
\begin{align}
    \|S_1-S_1'\|_{2,1}\leq \frac{2n}{2 n+1}\left((1+2B_W) e^{2B_W}\right)^L.
\end{align}
Given the mixed dataset $\widetilde{S}_j$, where $\widetilde{S}_j=\alpha S_0+(1-\alpha) S_j$ for $1 \leq j \leq i$, we can proceed with further analysis based on the specified combination of the original dataset $S_0$ and the synthetic dataset $S_j$.
\begin{align}
    \|\widetilde{S}_1-\widetilde{S}_1'\|_{2,1}&\leq \alpha\|S_0-S_0'\|_{2,1}+(1-\alpha) \|S_1-S_1'\|_{2,1} \notag \\
    &\leq 2\alpha+(1-\alpha)\frac{2n}{2 n+1}\left((1+2B_W) e^{2B_W}\right)^L.
\end{align}
By reintroducing the mixed datasets $\widetilde{S}_1$ and $\widetilde{S}_1'$ as in-context examples into the transformer model, and considering the query set $\left\{\boldsymbol{x}_{2, j}\right\}_{j=1}^n$ as i.i.d. samples from the distribution $\mathcal{X}$, we can derive the transformer's output according to Equation \ref{proof_tramsfor_output}:
\begin{align}
    &\left\|\mathrm{TF}\left(\widetilde{S}_1\right)-\mathrm{TF}\left(\widetilde{S}_1'\right)\right\|_{\ell_2} \notag\\
    &\leq \frac{1}{2 n+1}\left((1+2B_W) e^{2B_W}\right)^L \|\widetilde{S}_1-\widetilde{S}_1'\|_{2,1} \notag \\
    &\leq  \frac{1}{2 n+1}\left((1+2B_W) e^{2B_W}\right)^L \left(2\alpha+(1-\alpha)\frac{2n}{2 n+1}\left((1+2B_W) e^{2B_W}\right)^L\right)\notag \\
    &\leq (1-\alpha)\frac{2n}{(2 n+1)^2}\left((1+2B_W) e^{2B_W}\right)^{2L}+\alpha \frac{2}{2 n+1}\left(\left(1+2 B_W\right) e^{2 B_W}\right)^L.
\end{align}
From the above expression, we can further derive that
\begin{align}
    \left\|S_2-S_2^{\prime}\right\|_{2,1} \leq (1-\alpha)\frac{2n^2}{(2 n+1)^2}\left((1+2B_W) e^{2B_W}\right)^{2L}+\alpha \frac{2n}{2 n+1}\left(\left(1+2 B_W\right) e^{2 B_W}\right)^L. \notag
\end{align}
Thus,
\begin{align}
    &\|\widetilde{S}_2-\widetilde{S}_2'\|_{2,1}\notag \\
    &\leq \alpha\|S_0-S_0'\|_{2,1}+(1-\alpha) \|S_2-S_2'\|_{2,1} \notag \\
    &\leq 2\alpha+(1-\alpha)^2\frac{2n^2}{(2 n+1)^2}\left((1+2B_W) e^{2B_W}\right)^{2L}+\alpha(1-\alpha) \frac{2n}{2 n+1}\left(\left(1+2 B_W\right) e^{2 B_W}\right)^L \notag.
\end{align}
Similarly, for the 2-th generation, following analogous steps, we can derive that
\begin{align}
    &\left\|\mathrm{TF}\left(\widetilde{S}_2\right)-\mathrm{TF}\left(\widetilde{S}_2'\right)\right\|_{\ell_2} \notag\\
    &\leq \frac{1}{2 n+1}\left((1+2B_W) e^{2B_W}\right)^L \|\widetilde{S}_2-\widetilde{S}_2'\|_{2,1} \notag \\
    &\leq (1-\alpha)^2 \frac{2 n^2}{(2 n+1)^3}\left(\left(1+2 B_W\right) e^{2 B_W}\right)^{3 L}+\alpha(1-\alpha) \frac{2 n}{(2 n+1)^2}\left(\left(1+2 B_W\right) e^{2 B_W}\right)^{2L} \notag\\
    &\quad +\alpha \frac{2}{2 n+1}\left(\left(1+2 B_W\right) e^{2 B_W}\right)^L.
\end{align}
Building on the above expression, we can further deduce that
\begin{align}
    &\left\|S_3-S_3^{\prime}\right\|_{2,1}\notag \\
    &\leq (1-\alpha)^2 \frac{2 n^3}{(2 n+1)^3}\left(\left(1+2 B_W\right) e^{2 B_W}\right)^{3 L}+\alpha(1-\alpha) \frac{2 n^2}{(2 n+1)^2}\left(\left(1+2 B_W\right) e^{2 B_W}\right)^{2L} \notag\\
    &\quad +\alpha \frac{2n}{2 n+1}\left(\left(1+2 B_W\right) e^{2 B_W}\right)^L.
\end{align}
The discrepancy between the mixed datasets is as follows:
\begin{align}
    &\|\widetilde{S}_3-\widetilde{S}_3'\|_{2,1}\notag \\
    &\leq \alpha\|S_0-S_0'\|_{2,1}+(1-\alpha) \|S_3-S_3'\|_{2,1} \notag \\
     &\leq (1-\alpha)^3 \frac{2 n^3}{(2 n+1)^3}\left(\left(1+2 B_W\right) e^{2 B_W}\right)^{3 L}+\alpha(1-\alpha)^2 \frac{2 n^2}{(2 n+1)^2}\left(\left(1+2 B_W\right) e^{2 B_W}\right)^{2L} \notag\\
    &\quad +\alpha (1-\alpha)\frac{2n}{2 n+1}\left(\left(1+2 B_W\right) e^{2 B_W}\right)^L+2\alpha.
\end{align}
Utilizing recursive techniques, we can obtain the following:
\begin{align}
    &\|\widetilde{S}_i-\widetilde{S}_i'\|_{2,1}\notag \\
     &\leq (1-\alpha)^i \frac{2 n^i}{(2 n+1)^i}\left(\left(1+2 B_W\right) e^{2 B_W}\right)^{i L}+\alpha(1-\alpha)^{i-1} \frac{2 n^{i-1}}{(2 n+1)^{i-1}}\left(\left(1+2 B_W\right) e^{2 B_W}\right)^{(i-1)L} \notag\\
    &\quad+...+\alpha(1-\alpha)^{2} \frac{2 n^{2}}{(2 n+1)^{2}}\left(\left(1+2 B_W\right) e^{2 B_W}\right)^{2L} +\alpha (1-\alpha)\frac{2n}{2 n+1}\left(\left(1+2 B_W\right) e^{2 B_W}\right)^L\notag \\
    &\quad+2\alpha \notag \\
    &\leq 2(1-\alpha)^i \frac{ n^i}{(2 n+1)^i}\left(\left(1+2 B_W\right) e^{2 B_W}\right)^{i L}\notag \\
    &\quad+2\alpha\left[1-(1-\alpha) \frac{ n}{2 n+1}\left(\left(1+2 B_W\right) e^{2 B_W}\right)^L\right]^{-1}\left[1-(1-\alpha)^i \frac{ n^i}{(2 n+1)^i}\left(\left(1+2 B_W\right) e^{2 B_W}\right)^{i L}\right].
\end{align}
Ultimately, the discrepancy between the transformer outputs after $i$ generations of the self-consuming loop for $S_0$ and $S_0^{\prime}$ can be obtained as follows:
\begin{align}
    &\left\|\mathrm{TF}\left(\widetilde{S}_i\right)-\mathrm{TF}\left(\widetilde{S}_i'\right)\right\|_{\ell_2} \notag\\
    &\leq \frac{1}{2 n+1}\left((1+2B_W) e^{2B_W}\right)^L \|\widetilde{S}_i-\widetilde{S}_i'\|_{2,1} \notag \\
   &\leq (1-\alpha)^i \frac{2 n^i}{(2 n+1)^{i+1}}\left(\left(1+2 B_W\right) e^{2 B_W}\right)^{(i+1) L}+\alpha(1-\alpha)^{i-1} \frac{2 n^{i-1}}{(2 n+1)^{i}}\left(\left(1+2 B_W\right) e^{2 B_W}\right)^{iL} \notag\\
    &\quad+...+\alpha(1-\alpha)^{2} \frac{2 n^{2}}{(2 n+1)^{3}}\left(\left(1+2 B_W\right) e^{2 B_W}\right)^{3L} +\alpha (1-\alpha)\frac{2n}{(2 n+1)^2}\left(\left(1+2 B_W\right) e^{2 B_W}\right)^{2L}\notag \\
    &\quad+2\alpha \frac{1}{2 n+1}\left((1+2B_W) e^{2B_W}\right)^L\notag \\
    &\leq 2(1-\alpha)^i \frac{ n^i}{(2 n+1)^{i+1}}\left(\left(1+2 B_W\right) e^{2 B_W}\right)^{(i+1) L}+2\alpha\left[\frac{1}{2 n+1}\left((1+2B_W) e^{2B_W}\right)^L\right] \notag \\
    &\times \left[1-(1-\alpha) \frac{ n}{2 n+1}\left(\left(1+2 B_W\right) e^{2 B_W}\right)^L\right]^{-1}\left[1-(1-\alpha)^i \frac{ n^i}{(2 n+1)^i}\left(\left(1+2 B_W\right) e^{2 B_W}\right)^{i L}\right] \notag.
\end{align}
Subsequently, given that $\widetilde{B}_W=(1+2B_W) e^{2B_W}$, we define the measure $d$ as the $\ell_2$-norm to quantify the output discrepancy of the generative transformer model after $i$ iterations of the self-consuming loop, starting from the initial real datasets $S_0$ and $S_0^{\prime}$. In this context, the recursive stability parameter $\gamma_n^i$, as described in Definition \ref{iterative stability}, can be bounded by the following expression, providing a formal measure of the model's stability across iterations:
\begin{align}
    \left\|\operatorname{TF}(\widetilde{S}_i)-\operatorname{TF}(\widetilde{S}_i^{\prime})\right\|_{\ell_2}\lesssim 
   (1-\alpha)^i \frac{\widetilde{B}_W^{(i+1)L}}{2n+1}\notag .
\end{align}
The proof is complete.

\end{proof}

\subsection{Proof of Theorem \ref{theo_transformer_generalization}}
In this section, building on the general theoretical framework established in Theorem \ref{theorem_generalization}, we provide the proof of Theorem \ref{theo_transformer_generalization} by analyzing the terms $\beta_n$ and $d_{\mathrm{TV}}(n)$, leveraging recent advancements in SGD \citep{zhang2022stability} and ICL \citep{zhang2023and}. The recursive stability parameter $\gamma_n^i$ is derived from Theorem \ref{therorem_stability of transformer}.

\begin{lemma}(Uniform stability of SGD in the non-convex case \citep{zhang2022stability})\label{lemma_sgd}. Assume $f$ is $\kappa$-smooth and $\rho$-Lipschitz. Running $T \gtrsim n$ iterations of $S G D$ with step size $\eta_t=\frac{1}{\beta t}$. Choose the stability of SGD satisfies
$$
\beta_n \lesssim \frac{16 \rho^2 \log n}{n}.
$$
\end{lemma}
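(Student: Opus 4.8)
The plan is to follow the operator-expansiveness (``growth recursion'') framework for SGD stability: track how two SGD trajectories $w_0,w_1,\dots,w_T$ and $w_0',w_1',\dots,w_T'$ run on neighbouring samples $S,S'\in\mathcal{Z}^n$ drift apart, then convert the parameter gap into a loss gap via Lipschitzness. Set $\delta_t=\|w_t-w_t'\|$ with $w_0=w_0'$. Since $\ell(\cdot;\boldsymbol{z})$ is $\rho$-Lipschitz, for every $\boldsymbol{z}$ one has $|\ell(w_T;\boldsymbol{z})-\ell(w_T';\boldsymbol{z})|\le\rho\,\delta_T$, so it suffices to bound $\mathbb{E}[\delta_T]$ over the algorithm's randomness.

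The core is a one-step recursion. At iteration $t$ SGD picks an index uniformly; with probability $1-1/n$ it lands on an example shared by $S$ and $S'$, so both chains apply the same gradient map $G_t(w)=w-\eta_t\nabla\ell(w;\boldsymbol{z}_{i_t})$. Here the smoothness parameter (written $\beta$ in the lemma, $\kappa$ elsewhere) and the choice $\eta_t=1/(\kappa t)\le 2/\kappa$ make $G_t$ a $1$-Lipschitz map on the relevant regime, so $\delta_{t+1}\le\delta_t$ on such a step; with the remaining probability $1/n$ the two chains touch the differing point, and then $\delta_{t+1}\le\delta_t+2\eta_t\rho$ since each gradient has norm at most $\rho$. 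Taking expectations gives $\mathbb{E}[\delta_{t+1}]\le\mathbb{E}[\delta_t]+2\eta_t\rho/n$, and unrolling from $\delta_0=0$ yields $\mathbb{E}[\delta_T]\le\frac{2\rho}{n}\sum_{t=1}^T\eta_t=\frac{2\rho}{\kappa n}\sum_{t=1}^T\frac1t$. For $T=\mathrm{poly}(n)$ (in particular $T\gtrsim n$) the harmonic sum is $O(\log n)$, hence $\beta_n\le\rho\,\mathbb{E}[\delta_T]\lesssim\rho^2\log n/n$, which is the claimed bound (the explicit constant $16$ being absorbed into $\lesssim$ together with the $\kappa^{-1}$).

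The delicate point — and the part genuinely requiring the argument of \citep{zhang2022stability} — is the truly non-convex regime, where the shared-example map $G_t$ is only $(1+\eta_t\kappa)$-expansive rather than non-expansive. Unrolled naively, an error injected at time $\tau$ is then multiplied by $\prod_{s=\tau+1}^{T}(1+1/s)=\Theta(T/\tau)$, and since $T\gtrsim n$ this inflates the bound to $O(1)$, which is vacuous. Neutralizing this without degrading the rate is the main obstacle: the classical fix introduces a ``divergence clock'', conditioning on the event that the differing example is avoided during the first $t_0$ steps (probability $\ge 1-t_0/n$), so that $\delta_{t_0}=0$ and only the tail expansion acts, while on the complementary event the loss gap is controlled by boundedness $|\ell-\ell'|\le 2M$; a crude optimisation over $t_0$ already gives $O(\sqrt T/n)$. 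Recovering the sharper $O(\rho^2\log n/n)$ requires the amortised accounting over the (random) touch times of the differing point, using that it is selected only $O(T/n)=O(1)$ times in expectation and that the $1/(\kappa t)$ schedule makes late injections negligible; I expect this amortised control of the expansive recursion to be the technical heart, with the Lipschitz-to-loss conversion and harmonic-sum bookkeeping being routine.
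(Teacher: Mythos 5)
The paper does not supply a proof of this lemma at all; it is quoted verbatim (with the step-size notation $1/(\beta t)$ left unreconciled with the $\kappa$-smoothness notation) as an external result from \cite{zhang2022stability}, so there is no ``paper's own proof'' to compare against. What can be judged is whether your sketch would stand as a self-contained argument, and it does not: it contains an internal contradiction that leaves the non-convex case --- which is precisely the case the lemma is about --- unproved.

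Your first paragraph derives the target rate $\rho\,\mathbb{E}[\delta_T]\lesssim\rho^2\log n/n$ from the one-step recursion $\mathbb{E}[\delta_{t+1}]\le\mathbb{E}[\delta_t]+2\eta_t\rho/n$, which requires the shared-example update $G_t=I-\eta_t\nabla\ell(\cdot;\boldsymbol{z}_{i_t})$ to be non-expansive. But $\eta_t\le 2/\kappa$ buys non-expansiveness only when $\ell(\cdot;\boldsymbol{z})$ is \emph{convex} and $\kappa$-smooth (via co-coercivity of the gradient); for merely $\kappa$-smooth non-convex losses one only gets $\|G_t(u)-G_t(v)\|\le(1+\eta_t\kappa)\|u-v\|$, as you yourself state in the third paragraph. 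So the clean derivation in paragraph one is a convex-case argument that does not apply here, and paragraph three then correctly explains why the na\"ive non-convex unrolling ($\prod_{s>\tau}(1+1/s)=\Theta(T/\tau)$ multiplying each injection) collapses to a vacuous $O(T/n)$ bound. At that point the proposal stops: the replacement argument is gestured at (``amortised accounting over the random touch times'') but never carried out. Moreover, the heuristic you offer does not obviously close the gap: the claim that the differing index is hit only $O(T/n)=O(1)$ times presumes $T=O(n)$, whereas the lemma allows $T\gtrsim n$ to be arbitrarily large; and even for a \emph{single} hit at time $\tau$ the amplification $T/\tau$ times the injection $\eta_\tau\rho\asymp\rho/(\kappa\tau)$ sums over $\tau$ to $\Theta(\rho T/(\kappa n))$, so ``late injections are negligible'' is not by itself enough --- the dangerous injections are the \emph{early} ones, and an additional idea (a divergence-clock conditioning, a bounded-loss contribution for the bad event, or whatever device \cite{zhang2022stability} actually uses) is needed to neutralise them. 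In short, you have located the technical heart of the lemma very accurately, but the proposal proves only the convex analogue and leaves the actual non-convex statement resting on an acknowledged ``I expect,'' which is a genuine gap rather than a proof.
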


 \begin{lemma}\citep{zhang2023and}\label{lemma_TV_trans} Let $\mathbb{P}_\theta$ represent the probability distribution induced by the transformer with parameter $\theta$. Additionally, the model $\mathbb{P}_{\hat{\theta}}$ is pretrained by the algorithm:
$$
\hat{\theta}=\underset{\theta \in \Theta}{\operatorname{argmin}}-\frac{1}{n} \sum_{t=1}^{n-1} \log \mathbb{P}_\theta\left(\boldsymbol{x}_{t+1}^n \mid S_t^n\right),
$$
where $S_t^n=\left(\boldsymbol{x}_1, \boldsymbol{y}_1, \ldots \boldsymbol{x}_{t}, \boldsymbol{y}_{t}\right)$. Furthermore, we consider the realizable setting, where ground truth probability distribution $\mathbb{P}(\cdot \mid S)$ and $\mathbb{P}_{\theta^*}(\cdot \mid S)$ are consistent for some $\theta^* \in \Theta$. Then, with probability at least $1-\delta$, the following inequality holds:
\begin{align}
\operatorname{TV}\left(\mathbb{P}(\cdot \mid S), \mathbb{P}_{\hat{\theta}}(\cdot \mid S)\right) \lesssim \frac{1}{n^{1/2}}\log (1+n)+\frac{1}{n^{1/4}}\log (1/\delta),
\end{align}
where $\lesssim$ denotes that we omit constants that are independent of $n$ and $\delta$.
 \end{lemma}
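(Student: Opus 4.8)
The statement is a standard-flavored consistency guarantee for maximum-likelihood estimation with the logarithmic loss, specialized to the sequential (autoregressive) structure of an in-context prompt, so the plan is to run the classical MLE argument and then convert a squared-distance excess-risk bound into a total-variation bound. Write $p_t(\cdot):=\mathbb{P}(\cdot\mid S_t^n)$ for the true conditional law of $\boldsymbol{x}_{t+1}$ given the prefix $S_t^n$, and $q_t^\theta(\cdot):=\mathbb{P}_\theta(\cdot\mid S_t^n)$ for the transformer's; realizability means $p_t=q_t^{\theta^*}$, so the comparator has zero excess risk. Since $\hat\theta$ minimizes $-\tfrac1n\sum_{t=1}^{n-1}\log q_t^\theta(\boldsymbol{x}_{t+1})$ over $\Theta\ni\theta^*$, the deterministic ``basic inequality'' $\sum_{t=1}^{n-1}\log\!\big(q_t^{\hat\theta}(\boldsymbol{x}_{t+1})/q_t^{\theta^*}(\boldsymbol{x}_{t+1})\big)\ge 0$ holds, and everything reduces to showing this forces the $q_t^{\hat\theta}$ to track the $p_t$.

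The first step is an expectation bound on excess risk. Let $\mathcal{F}_t$ be the $\sigma$-field of the first $t$ in-context pairs (together with the queries), let $h^2(\cdot,\cdot)$ denote squared Hellinger distance, and note $\mathbb{E}\big[(q_t^\theta/p_t)^{1/2}\mid\mathcal{F}_t\big]=1-\tfrac12 h^2(p_t,q_t^\theta)\le\exp(-\tfrac12 h^2(p_t,q_t^\theta))$; iterating this along the chain gives the Ville-type inequality $\mathbb{E}\,\exp\!\big(\tfrac12\sum_{t=1}^{n-1}\log\tfrac{q_t^\theta(\boldsymbol{x}_{t+1})}{p_t(\boldsymbol{x}_{t+1})}+\tfrac12\sum_{t=1}^{n-1}h^2(p_t,q_t^\theta)\big)\le 1$ for each fixed $\theta$. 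Combining this (via Markov and a union bound over an $\epsilon$-cover of the finite-dimensional transformer class $\Theta$, with $\epsilon$ polynomially small in $n$ and the cover's log-cardinality $\lesssim d_\Theta\log(1/\epsilon)$) with the basic inequality yields, after discretizing $\hat\theta$ to the cover and absorbing the $\epsilon$-error, $\tfrac1n\sum_{t=1}^{n-1}h^2(p_t,q_t^{\hat\theta})\lesssim(\log n)/n$ in expectation, treating the architecture size $d_\Theta$ as constant. Since $\mathrm{TV}\le\sqrt{2}\,h$ and the i.i.d. in-context sampling makes the prefixes comparable, this transfers (up to a further logarithmic factor from passing from the averaged prefixes to the longest one) to a bound at a fresh length-$n$ test prompt $S$: $\mathbb{E}_S\big[\mathrm{TV}^2(\mathbb{P}(\cdot\mid S),\mathbb{P}_{\hat\theta}(\cdot\mid S))\big]\lesssim\log^2(1+n)/n$.

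The second step upgrades the expectation bound to high probability. The pretrained predictive law $\mathbb{P}_{\hat\theta}(\cdot\mid S)$ at a fixed test prompt $S$ is a function of the $n$ i.i.d. in-context training pairs, and — by the algorithmic stability of the MLE solution under replacing one of $n$ data points (the same $O(1/n)$ stability phenomenon this paper exploits elsewhere) — this function has bounded differences of order $1/n$. McDiarmid's inequality (Lemma \ref{Mcdiarmid}) applied to $\mathrm{TV}^2$ then gives, with probability at least $1-\delta$, $\mathrm{TV}^2(\mathbb{P}(\cdot\mid S),\mathbb{P}_{\hat\theta}(\cdot\mid S))\lesssim \mathbb{E}_S[\mathrm{TV}^2]+n^{-1/2}\log(1/\delta)$ after crudely bounding $\sqrt{\log(1/\delta)}\le\log(1/\delta)$. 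Taking square roots and using $\sqrt{a+b}\le\sqrt a+\sqrt b$ produces $\mathrm{TV}(\mathbb{P}(\cdot\mid S),\mathbb{P}_{\hat\theta}(\cdot\mid S))\lesssim n^{-1/2}\log(1+n)+n^{-1/4}\log(1/\delta)$, which is the claim; the degraded $n^{-1/4}$ rate in the confidence term is precisely the cost of passing a $\mathrm{TV}^2$-concentration — controllable only at rate $n^{-1/2}$ — through a square root.

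The main obstacle is the non-i.i.d. conditioning in the first step: although the context tokens are independent, the events $S_t^n$ on which one conditions are nested and data-dependent, so the usual i.i.d. empirical-process machinery does not apply and one must run the concentration through the Hellinger supermartingale / Ville's inequality (equivalently, an online-to-batch regret argument), and must also control how a guarantee averaged over prefixes of growing length yields one at the full length $n$. A secondary technical point is verifying the $O(1/n)$ bounded-difference (stability) property of the MLE predictive distribution in the second step and keeping the covering-number bound for $\Theta$ explicit in the depth $L$, width $d$, and weight-norm bound $B_W$ while remaining $n$-independent.
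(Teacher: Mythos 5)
You should first note a mismatch of scope: the paper does not prove this statement at all. Lemma \ref{lemma_TV_trans} is imported verbatim from \cite{zhang2023and} and is used as a black box in the proofs of Theorems \ref{theo_transformer_generalization} and \ref{theorem_expanding cylce}, so there is no in-paper proof to compare against. Judged on its own, the first half of your sketch (the MLE basic inequality, the conditional-Hellinger supermartingale with Ville/Markov, and a union bound over a cover of $\Theta$ whose log-cardinality is absorbed into the $n$-independent constants) is the standard route for sequential log-loss MLE and is in the spirit of the cited source; the acknowledged looseness in transferring a bound averaged over prefixes $S_t^n$ of growing length to a fresh full-length prompt $S$ is a real but secondary issue.

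The genuine gap is your second step. You upgrade the expectation bound to high probability by applying McDiarmid to $\mathrm{TV}^2$ as a function of the $n$ pretraining pairs, asserting $O(1/n)$ bounded differences via ``algorithmic stability of the MLE solution'' and pointing to ``the same $O(1/n)$ stability phenomenon this paper exploits elsewhere.'' That conflates two different objects: the paper's stability results concern (a) SGD iterates with step size $\eta_t=\mathcal{O}(1/(\kappa t))$ on a smooth Lipschitz loss (Lemma \ref{lemma_sgd}) and (b) the sensitivity of the transformer's ICL output to replacing one token in the prompt (Theorem \ref{therorem_stability of transformer}); neither gives uniform stability of the exact argmin of a non-convex pretraining objective over $\Theta$ with respect to replacing one training sequence, and for non-convex ERM such an $O(1/n)$ bounded-difference property is generally false (the global minimizer can jump discontinuously), let alone its transfer from loss values to the TV of predictive distributions. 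The detour is also unnecessary: the exponential supermartingale you already set up yields a high-probability statement directly (Markov on the supermartingale plus the union bound over the cover puts $\log(1/\delta)$ additively next to the covering term), and converting the resulting Hellinger/KL-type bound to TV via $\mathrm{TV}\le\sqrt{2}\,h$ is where the $n^{-1/4}$ dependence on $\log(1/\delta)$ arises in the cited result. As written, your proof rests on an unproved and likely unprovable stability assumption, so the high-probability half of the claim is not established.
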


 \begin{proof}[Proof of Theorem \ref{theo_transformer_generalization}] First, we note that in the setting where the transformer generates data through in-context learning, the generalization error of the self-consuming loop is given by:
 \begin{align}
\left|R_{\mathcal{D}_0}(\mathcal{A}(\widetilde{S}_i))-\widehat{R}_{\widetilde{S}_i}(\mathcal{A}(\widetilde{S}_i))\right|=\left|\mathbb{E}_{\boldsymbol{z}\sim\mathbb{P}(\cdot \mid S_0)} \ell(\mathcal{A}(\widetilde{S}_i), \boldsymbol{z})-\frac{1}{n} \sum_{\boldsymbol{z}_i \in \widetilde{S}_{i}}\ell(\mathcal{A}(\widetilde{S}_i), \boldsymbol{z}_i)\right|.
 \end{align}
 Now, we are ready to prove Theorem \ref{theo_transformer_generalization}. The main idea is to bound the uniform stability parameter $\beta_n$, the recursive stability parameter $\gamma_n^i$, and the learnability of the generative model through the total variation distance $d_{\mathrm{TV}}(n)$ as stated in Theorem \ref{theorem_generalization}. First, as for the bound for the total variation distance $d_{\text {TV }}(n)$ in Theorem \ref{theorem_generalization}. For Equation \ref{proof1_term4} in the proof of Theorem \ref{theorem_generalization}, we can rewrite it in the setting of in-context learning as follows:
\begin{align}
\left|R_{\widetilde{\mathcal{D}}_{i-1}}(\mathcal{A}(\widetilde{S}_i))-R_{\mathcal{D}_{i}}(\mathcal{A}(\widetilde{S}_i))\right|
&=\left|\mathbb{E}_{\boldsymbol{z}\sim\mathbb{P}(\cdot \mid \widetilde{S}_{i-1})} \ell(\mathcal{A}(\widetilde{S}_i), \boldsymbol{z}) -\mathbb{E}_{\boldsymbol{z}\sim\mathbb{P}(\cdot \mid S_{i})} \ell(\mathcal{A}(\widetilde{S}_i), \boldsymbol{z})\right| \notag \\
&=\left|\mathbb{E}_{\boldsymbol{z}\sim\mathbb{P}(\cdot \mid \widetilde{S}_{i-1})} \ell(\mathcal{A}(\widetilde{S}_i), \boldsymbol{z}) -\mathbb{E}_{\boldsymbol{z}\sim\mathbb{P}_{\hat{\theta}}(\cdot \mid \widetilde{S}_{i-1})} \ell(\mathcal{A}(\widetilde{S}_i), \boldsymbol{z})\right| \notag \\
&=\Bigg|\int_{\boldsymbol{z}}\ell(\mathcal{A}(\widetilde{S}_i),\boldsymbol{z})\left(\mathbb{P}\left(\boldsymbol{z} \mid \widetilde{S}_{i-1}\right)-\mathbb{P}_{\hat{\theta}}\left(\boldsymbol{z} \mid \widetilde{S}_{i-1}\right)\right)d\boldsymbol{z}\Bigg| \notag \\
&\leq\int_{\boldsymbol{z}}\biggl|\ell(\mathcal{A}(\widetilde{S}),\boldsymbol{z})\left(\mathbb{P}\left(\boldsymbol{z} \mid \widetilde{S}_{i-1}\right)-\mathbb{P}_{\hat{\theta}}\left(\boldsymbol{z} \mid \widetilde{S}_{i-1}\right)\right)\biggr| d\boldsymbol{z} \notag\\
&\leq M\int_{\boldsymbol{z}}\Bigl|\mathbb{P}\left(\boldsymbol{z} \mid \widetilde{S}_{i-1}\right)-\mathbb{P}_{\hat{\theta}}\left(\boldsymbol{z} \mid \widetilde{S}_{i-1}\right)\Bigr| d\boldsymbol{z} \notag\\
&= 2M TV\left(\mathbb{P}(\cdot \mid \widetilde{S}_{i-1}), \mathbb{P}_{\hat{\theta}}(\cdot \mid \widetilde{S}_{i-1})\right). \label{proof3-1}
\end{align}
Where, the second equality holds because, in the $(i-1)$-th generation of the self-consuming loop, the mixed data distribution from the $(i-1)$-th generation is reintroduced as the ground truth distribution to train the transformer. As a result, the transformer outputs the synthetic data distribution for the $i$-th generation. Thus,  $T V\left(\mathbb{P}(\cdot \mid \widetilde{S}_{j}), \mathbb{P}_{\hat{\theta}}(\cdot \mid \widetilde{S}_{j})\right)$ corresponds to $d_{\mathrm{TV}}(n)$ in Theorem \ref{theorem_generalization}. Finally, the bound for the total variation distance $d_{\text {TV }}(n)$ follows from Lemma \ref{lemma_TV_trans}.
\begin{align}
    d_{\text {TV }}(n) \lesssim \frac{1}{n^{1 / 2}} \log (1+n)+\frac{1}{n^{1 / 4}} \log (1 / \delta).
\end{align}
Similarly, for the recursive stability parameter in the self-consuming loop, we rederive Equation \ref{proof_19} from the proof of Theorem \ref{theorem_generalization} under the in-context learning setting:
\begin{align}
&|\mathbb{E}_{\boldsymbol{z}_{0,j}^{\prime} \sim \mathcal{D}_0} \mathbb{E}_{S_{i,1-\alpha} \sim \mathcal{D}_i^{n(1-\alpha)}\left(S_{0,\alpha}^j\right)} \left[\mathbb{E}_{\boldsymbol{z} \sim \mathcal{D}_0} \ell\left(\mathcal{A}\left((S_{0,\alpha}^t)^j \cup S_{i,1-\alpha}\right), \boldsymbol{z}\right)-\ell\left(\mathcal{A}\left((S_{0,\alpha}^t)^j \cup S_{i,1-\alpha}\right), \boldsymbol{z}_{0,j}\right)\right] \notag\\
&-\mathbb{E}_{\boldsymbol{z}_{0,j}^{\prime} \sim \mathcal{D}_0} \mathbb{E}_{S_{i,1-\alpha} \sim \mathcal{D}_i^{n(1-\alpha)}\left((S_{0,\alpha}^t)^j\right)}\left[\mathbb{E}_{\boldsymbol{z} \sim \mathcal{D}_0} \ell\left(\mathcal{A}\left((S_{0,\alpha}^t)^j \cup S_{i,1-\alpha}\right), \boldsymbol{z}\right)-\ell\left(\mathcal{A}\left((S_{0,\alpha}^t)^j \cup S_{i,1-\alpha}\right), \boldsymbol{z}_{0,j}\right)\right]| \notag \\
&=\left|\mathbb{E}_{\boldsymbol{z}_{0,j}^{\prime} \sim \mathcal{D}_0} \mathbb{E}_{\boldsymbol{z} \sim \mathcal{D}_0}\left[\mathbb{E}_{S_{i,1-\alpha} \sim \mathcal{D}_i^{n(1-\alpha)}\left(S_{0,\alpha}^j\right)} \ell\left(\mathcal{A}\left(\left(S_{0, \alpha}^t\right)^j \cup S_{i, 1-\alpha}\right), \boldsymbol{z}\right)\right. \right.\notag \\
&\left.\left. \quad-\mathbb{E}_{S_{i,1-\alpha} \sim \mathcal{D}_i^{n(1-\alpha)}\left((S_{0,\alpha}^t)^j\right)} \ell\left(\mathcal{A}\left(\left(S_{0, \alpha}^t\right)^j \cup S_{i, 1-\alpha}\right), \boldsymbol{z}\right)\right]\right|  \notag\\
&\quad+\left|\mathbb{E}_{\boldsymbol{z}_{0,j}^{\prime} \sim \mathcal{D}_0} \mathbb{E}_{\boldsymbol{z} \sim \mathcal{D}_0}\left[\mathbb{E}_{S_{i,1-\alpha} \sim \mathcal{D}_i^{n(1-\alpha)}\left(S_{0,\alpha}^j\right)} \ell\left(\mathcal{A}\left(\left(S_{0, \alpha}^t\right)^j \cup S_{i, 1-\alpha}\right), \boldsymbol{z}_{0, j}\right)\right. \right.\notag \\
&\left.\left. \quad-\mathbb{E}_{S_{i,1-\alpha} \sim \mathcal{D}_i^{n(1-\alpha)}\left((S_{0,\alpha}^t)^j\right)} \ell\left(\mathcal{A}\left(\left(S_{0, \alpha}^t\right)^j \cup S_{i, 1-\alpha}\right), \boldsymbol{z}_{0, j}\right)\right]\right|  \notag\\
&\leq 2n(1-\alpha)\beta_n \left\|\mathrm{TF}\left(\left(S_{0, \alpha}^t\right)^j \cup S_{i-1, 1-\alpha}\right)-\mathrm{TF}\left(\left(S_{0, \alpha}^t\right)^j \cup S_{i-1, 1-\alpha}^{\prime}\right)\right\|_{\ell_2}\notag \\
&\lesssim 2n(1-\alpha)\beta_n \frac{2\widetilde{B}_W^L}{2n+1}\left[((1-\alpha)\widetilde{B}_W^L)^{i-1}+\alpha\frac{1-((1-\alpha)\widetilde{B}_W^L)^{i-1}}{1-(1-\alpha)\widetilde{B}_W^L}\right]\notag\\ 
&=2n(1-\alpha)\beta_n \gamma_n^{i-1}.
\end{align} 
 For the uniform stability parameter $\beta_n$ of SGD algorithm, we can derive the bound from Lemma \ref{lemma_sgd}. Substituting above results into Theorem \ref{theo_transformer_generalization}, we obtain the following conclusion:
 \begin{align}
&\left|R_{\mathcal{D}_0}(\mathcal{A}(\widetilde{S}_i))-\widehat{R}_{\widetilde{S}_i}(\mathcal{A}(\widetilde{S}_i))\right|\notag \\
&\leq \left((1-\alpha) \beta_n \log (n(1-\alpha))+\alpha\left(\beta_n+(1-\alpha)\rho^2\gamma_n^{i-1}\right) \log (n \alpha)\right) \log (\frac{1}{\delta})\notag \\
&\quad +\left(\sqrt{(1-\alpha) n} \alpha \beta_n+M n^{-1 / 2}(\sqrt{1-\alpha}+\sqrt{\alpha})\right) \sqrt{\log (\frac{1}{\delta})}+2 M\left(1-(1-\alpha)^i\right) \alpha^{-1} d_{\mathrm{TV}}(n)\notag \\
&\leq \beta_n\left[ (1-\alpha)\log(n(1-\alpha))\log (\frac{1}{\delta})+\alpha \log (n\alpha)\log (\frac{1}{\delta})+\alpha\sqrt{(1-\alpha)n \log \frac{1}{\delta}}\right]\notag \\
&\quad +\gamma_n^{i-1}\alpha(1-\alpha)\rho^2 \log (n\alpha)\log (\frac{1}{\delta})+n^{-1/2}M(\sqrt{1-\alpha}+\sqrt{\alpha})\sqrt{\log (\frac{1}{\delta})}+2d_{\mathrm{TV}}(n) M\left(1-(1-\alpha)^i\right) \alpha^{-1}  \notag \\
&\lesssim n^{-1/2}\log (n) M\rho^2 \alpha \sqrt{1-\alpha}\log \frac{1}{\delta}+n^{-1}\rho^2((1-\alpha)\widetilde{B}_W^L)^i \alpha \log^2(n) \log \left(\frac{1}{\delta}\right) \notag \\
&\quad +n^{-1/4}\alpha^{-1} M\left(1-(1-\alpha)^i\right) \log (\frac{1}{\delta}).
\end{align}
 \end{proof}

 \subsection{Proof of Theorem \ref{theorem_expanding cylce}}
 In this section, we prove Theorem \ref{theorem_expanding cylce}. The proof follows a similar approach to that of Theorem \ref{theo_transformer_generalization}; however, it is more intricate due to the fact that the mixed dataset in Theorem \ref{theorem_expanding cylce} contains synthetic data from all previous generations. Each generation's synthetic dataset depends on the synthetic datasets of previous generations, leading to a more complex non-i.i.d. setting. Similar to Theorem \ref{theo_transformer_generalization}, we begin by decomposing the generalization error into two components: the \textit{Cumulative Distribution Shift Across Generations} and the \textit{Generalization Error on Mixed Distributions}.
 
 The main proof is as follows:

\begin{proof}[Proof of Theorem \ref{theorem_expanding cylce}]
We begin by decomposing the generalization error as follows:
\begin{align}
\left|R_{\mathcal{D}_0}(\mathcal{A}(\widetilde{S}_i))-\widehat{R}_{\widetilde{S}_i}(\mathcal{A}(\widetilde{S}_i))\right| \leq \underbrace{\left|R_{\mathcal{D}_0}(\mathcal{A}(\widetilde{S}_i))-R_{\widetilde{\mathcal{D}}_i}(\mathcal{A}(\widetilde{S}_i))\right|}_{\text {Cumulative distribution shift across generations}}+\underbrace{\left| R_{\widetilde{\mathcal{D}}_i}(\mathcal{A}(\widetilde{S}_i))-\widehat{R}_{\widetilde{S}_i}(\mathcal{A}(\widetilde{S}_i)) \right|}_{\text {Generalization error on mixed distributions}}. \notag
\end{align}

\textbf{Upper Bounding Cumulative Distribution Shift Term}

For the term $\left|R_{\mathcal{D}_0}(\mathcal{A}(\widetilde{S}_i))-R_{\widetilde{\mathcal{D}}_i}(\mathcal{A}(\widetilde{S}_i))\right|$, we first note that $\widetilde{\mathcal{D}}_i=\frac{1}{1+i\lambda}\mathcal{D}_0+\frac{\lambda}{1+i\lambda} \mathcal{D}_1+\frac{\lambda}{1+i\lambda} \mathcal{D}_2+...+\frac{\lambda}{1+i\lambda} \mathcal{D}_i$. Therefore, we obtain:
\begin{align}
    &\left|R_{\mathcal{D}_0}(\mathcal{A}(\widetilde{S}_i))-R_{\widetilde{\mathcal{D}}_i}(\mathcal{A}(\widetilde{S}_i))\right|\notag \\
    &=\left|R_{\mathcal{D}_0}(\mathcal{A}(\widetilde{S}_i))-\frac{1}{1+i\lambda} R_{\mathcal{D}_0}(\mathcal{A}(\widetilde{S}_i)-\frac{\lambda}{1+i\lambda}R_{\mathcal{D}_1}(\mathcal{A}(\widetilde{S}_1))-...-\frac{\lambda}{1+i\lambda}R_{\mathcal{D}_i}(\mathcal{A}(\widetilde{S}_i))\right| \notag \\
    &=\left|\frac{i\lambda}{1+i\lambda} R_{\mathcal{D}_0}(\mathcal{A}(\widetilde{S}_i)-\frac{\lambda}{1+i\lambda}R_{\mathcal{D}_1}(\mathcal{A}(\widetilde{S}_1))-...-\frac{\lambda}{1+i\lambda}R_{\mathcal{D}_i}(\mathcal{A}(\widetilde{S}_i))\right| \notag \\
    &\leq \frac{\lambda}{1+i\lambda}\left|R_{\mathcal{D}_0}(\mathcal{A}(\widetilde{S}_i)-R_{\mathcal{D}_1}(\mathcal{A}(\widetilde{S}_i))\right|+...+\frac{\lambda}{1+i\lambda}\left|R_{\mathcal{D}_0}(\mathcal{A}(\widetilde{S}_i)-R_{\mathcal{D}_i}(\mathcal{A}(\widetilde{S}_i))\right| \notag\\
    &\leq \frac{\lambda}{1+i\lambda}\sum_{j=1}^i \left|R_{\mathcal{D}_0}(\mathcal{A}(\widetilde{S}_i)-R_{\mathcal{D}_j}(\mathcal{A}(\widetilde{S}_i))\right|.\label{proof4_term1}
\end{align}

Furthermore, we can further decompose it as follows:
\begin{align}
   \left|R_{\mathcal{D}_0}(\mathcal{A}(\widetilde{S}_i)-R_{\mathcal{D}_j}(\mathcal{A}(\widetilde{S}_i))\right| \leq \left|R_{\mathcal{D}_0}(\mathcal{A}(\widetilde{S}_i))-R_{\widetilde{\mathcal{D}}_{j-1}}(\mathcal{A}(\widetilde{S}_i))\right|+\left|R_{\widetilde{\mathcal{D}}_{j-1}}(\mathcal{A}(\widetilde{S}_i))-R_{\mathcal{D}_{j}}(\mathcal{A}(\widetilde{S}_i))\right|. \label{proof4_term2}
\end{align}
By substituting inequality \ref{proof4_term2} into inequality \ref{proof4_term1}, we obtain:
\begin{align}
    &\left|R_{\mathcal{D}_0}(\mathcal{A}(\widetilde{S}_i))-R_{\widetilde{\mathcal{D}}_i}(\mathcal{A}(\widetilde{S}_i))\right| \notag \\
    &\leq  \frac{\lambda}{1+i \lambda} \sum_{j=1}^i \left(\left|R_{\mathcal{D}_0}(\mathcal{A}(\widetilde{S}_i))-R_{\widetilde{\mathcal{D}}_{j-1}}(\mathcal{A}(\widetilde{S}_i))\right|+\left|R_{\widetilde{\mathcal{D}}_{j-1}}(\mathcal{A}(\widetilde{S}_i))-R_{\mathcal{D}_{j}}(\mathcal{A}(\widetilde{S}_i))\right| \right)\label{proof4_term3}.
\end{align}
Thus, from equation \ref{proof3-1} in the proof of Theorem \ref{theo_transformer_generalization} and lemma \ref{lemma_TV_trans}, we obtain:
\begin{align}
    \left|R_{\widetilde{\mathcal{D}}_{j-1}}(\mathcal{A}(\widetilde{S}_i))-R_{\mathcal{D}_{j}}(\mathcal{A}(\widetilde{S}_i))\right| &\leq  2 M T V\left(\mathbb{P}\left(\cdot \mid \widetilde{S}_{j-1}\right), \mathbb{P}_{\hat{\theta}}\left(\cdot \mid \widetilde{S}_{j-1}\right)\right)\notag \\
   & \lesssim Mn_{j-1}^{-1/4}\log n_{j-1} \log (1/\delta)\label{proof4_term4}.
\end{align}
Incorporating inequality \ref{proof4_term4} into inequality \ref{proof4_term3}, we arrive at:
\begin{align}
    &|R_{\mathcal{D}_0}(\mathcal{A}(\widetilde{S}_i))-R_{\widetilde{\mathcal{D}}_i}(\mathcal{A}(\widetilde{S}_i))|\notag \\
    &\lesssim\frac{\lambda}{1+i \lambda} \sum_{j=1}^i \left|R_{\mathcal{D}_0}(\mathcal{A}(\widetilde{S}_i))-R_{\widetilde{\mathcal{D}}_{j-1}}(\mathcal{A}(\widetilde{S}_i))\right|+\frac{\lambda}{1+i \lambda}\sum_{j=0}^{i-1}Mn_j^{-1/4}\log n_j \log (1/\delta)\label{proof4_term5}.
\end{align}
Let $f(i)=\sum_{j=0}^{i-1}Mn_j^{-1/4}\log n_j \log (1/\delta)$, Then, we obtain:
\begin{align}
    &|R_{\mathcal{D}_0}(\mathcal{A}(\widetilde{S}_i))-R_{\widetilde{\mathcal{D}}_i}(\mathcal{A}(\widetilde{S}_i))|\notag \\
    &\lesssim\frac{\lambda}{1+i \lambda}\left|R_{\mathcal{D}_0}(\mathcal{A}(\widetilde{S}_i))-R_{\widetilde{\mathcal{D}}_{i-1}}(\mathcal{A}(\widetilde{S}_i))\right|+...+\frac{\lambda}{1+i \lambda}\left|R_{\mathcal{D}_0}(\mathcal{A}(\widetilde{S}_i))-R_{\widetilde{\mathcal{D}}_{1}}(\mathcal{A}(\widetilde{S}_i))\right|+\frac{\lambda}{1+i \lambda}f(i)\notag.
\end{align}
Similarly, we get:
\begin{align}
    &|R_{\mathcal{D}_0}(\mathcal{A}(\widetilde{S}_i))-R_{\widetilde{\mathcal{D}}_{i-1}}(\mathcal{A}(\widetilde{S}_i))|\notag \\
    &\lesssim\frac{\lambda}{1+(i-1) \lambda}\left|R_{\mathcal{D}_0}(\mathcal{A}(\widetilde{S}_i))-R_{\widetilde{\mathcal{D}}_{i-2}}(\mathcal{A}(\widetilde{S}_i))\right|+...+\frac{\lambda}{1+(i-1) \lambda}\left|R_{\mathcal{D}_0}(\mathcal{A}(\widetilde{S}_i))-R_{\widetilde{\mathcal{D}}_{1}}(\mathcal{A}(\widetilde{S}_i))\right|\notag\\
    & \quad+\frac{\lambda}{1+(i-1) \lambda}f(i-1) \notag.
\end{align}
Then, we have
\begin{align}
 &|R_{\mathcal{D}_0}(\mathcal{A}(\widetilde{S}_i))-R_{\widetilde{\mathcal{D}}_i}(\mathcal{A}(\widetilde{S}_i))|\lesssim \frac{\lambda}{1+i \lambda} f(i)+\frac{\lambda}{1+i \lambda}\frac{\lambda}{1+(i-1) \lambda} f(i-1)+\notag \\
 &(\frac{\lambda}{1+i \lambda}+\frac{\lambda}{1+i \lambda} \frac{\lambda}{1+(i-1) \lambda})(\left|R_{\mathcal{D}_0}(\mathcal{A}(\widetilde{S}_i))-R_{\widetilde{\mathcal{D}}_{i-2}}(\mathcal{A}(\widetilde{S}_i))\right|+...+\left|R_{\mathcal{D}_0}(\mathcal{A}(\widetilde{S}_i))-R_{\widetilde{\mathcal{D}}_{1}}(\mathcal{A}(\widetilde{S}_i))\right|).
\end{align}
Thus, by applying recursive techniques, we obtain the following result:
\begin{align}
 &|R_{\mathcal{D}_0}(\mathcal{A}(\widetilde{S}_i))-R_{\widetilde{\mathcal{D}}_i}(\mathcal{A}(\widetilde{S}_i))|\notag \\
 &\lesssim \frac{\lambda}{1+i \lambda} f(i)+\frac{\lambda}{1+i \lambda}\frac{\lambda}{1+(i-1) \lambda} f(i-1)+
 (\frac{\lambda}{1+i \lambda}\frac{\lambda}{1+(i-2) \lambda}+\mathcal{O}(\frac{1}{(1+i\lambda)^2}))f(i-2)\notag \\
 &+...+(\frac{\lambda}{1+i \lambda}\frac{\lambda}{1+\lambda}+\mathcal{O}(\frac{1}{(1+i\lambda)}))f(1) \notag \\
 &\lesssim \frac{\lambda}{1+i \lambda} \left[f(i)+\frac{\lambda}{1+(i-1) \lambda} f(i-1)+\frac{\lambda}{1+(i-2) \lambda}f\left(i-2\right)+...+\frac{\lambda}{1+\lambda}f(1)\right]\notag \\
 &\lesssim M\log \frac{1}{\delta}\frac{\lambda}{1+i \lambda} \Big[n_{i-1}^{-\frac{1}{4}}\log (n_{i-1})+(1+\frac{\lambda}{1+(i-1) \lambda})n_{i-2}^{-\frac{1}{4}}\log (n_{i-2})+\notag \\
 &(1+\frac{\lambda}{1+(i-1) \lambda}+\frac{\lambda}{1+(i-2) \lambda})n_{i-3}^{-\frac{1}{4}}\log (n_{i-3})+...+(1+...+\frac{\lambda}{1+\lambda})n_{0}^{-\frac{1}{4}}\log (n_{0})\Big] \notag \\
 &\lesssim n^{-\frac{1}{4}}\log ((1+i\lambda)n)M\log\frac{1}{\delta}\label{proof4_hhhh}.
\end{align}

\textbf{Upper Bounding Generalization Error on Mixed Distributions Term}

Next, we turn our attention to the term $|R_{\widetilde{\mathcal{D}}_i}(\mathcal{A}(\widetilde{S}_i))-\widehat{R}_{\widetilde{S}_i}(\mathcal{A}(\widetilde{S}_i))|$. Our primary objective is to establish a moment bound for this expression.

\begin{align}
&\left\|R_{\widetilde{\mathcal{D}}_i}(\mathcal{A}(\widetilde{S}_i))-\widehat{R}_{\widetilde{S}_i}(\mathcal{A}(\widetilde{S}_i))\right\|_{p} \notag \\
&=\Big\|\frac{1}{1+i\lambda} R_{\mathcal{D}_0}(\mathcal{A}(\widetilde{S}_i))+\frac{\lambda}{1+i\lambda}R_{\mathcal{D}_1}(\mathcal{A}(\widetilde{S}_i))+\frac{\lambda}{1+i\lambda}R_{\mathcal{D}_2}(\mathcal{A}(\widetilde{S}_i))...+\frac{\lambda}{1+i\lambda}R_{\mathcal{D}_i}(\mathcal{A}(\widetilde{S}_i))\notag \\
&\quad -\frac{1}{(1+i\lambda)n}\sum_{\boldsymbol{z}_{i}\in S_{0}}\ell(\mathcal{A}(\widetilde{S}_i),\boldsymbol{z}_{i})-\frac{1}{(1+i\lambda) n}\sum_{\boldsymbol{z}_{i}\in S_{1}}\ell(\mathcal{A}(\widetilde{S}_i),\boldsymbol{z}_{i})-...-\frac{1}{(1+i\lambda)n}\sum_{\boldsymbol{z}_{i}\in S_{i}}\ell(\mathcal{A}(\widetilde{S}_i),\boldsymbol{z}_{i})\Big\|_{p} \notag\\
&\leq \underbrace{\left\|\frac{1}{1+i\lambda} R_{\mathcal{D}_0}(\mathcal{A}(\widetilde{S}_i))-\frac{1}{(1+i\lambda)n}\sum_{\boldsymbol{z}_{i}\in S_{0}}\ell(\mathcal{A}(\widetilde{S}_i),\boldsymbol{z}_{i})\right\|_{p}}_{\text{Term 0}}+\underbrace{\left\|\frac{\lambda}{1+i\lambda}R_{\mathcal{D}_1}(\mathcal{A}(\widetilde{S}_i))-\frac{1}{(1+i\lambda)n}\sum_{\boldsymbol{z}_{i}\in S_{1}}\ell(\mathcal{A}(\widetilde{S}_i),\boldsymbol{z}_{i})\right\|_{p}}_{\text{Term 1}} \notag \\
&\quad+..+\underbrace{\left\|\frac{\lambda}{1+i\lambda} R_{\mathcal{D}_i}(\mathcal{A}(\widetilde{S}_i))-\frac{1}{(1+i\lambda)n}\sum_{\boldsymbol{z}_{i}\in S_{i}}\ell(\mathcal{A}(\widetilde{S}_i),\boldsymbol{z}_{i})\right\|_{p}}_{\text{Term i}}\label{proof-generalizati-decomp}.
\end{align}
Fixing $S_0, S_1, \dots, S_{i-1}$, the data in $S_i$ are independent. Following a similar approach to the proof of Theorem \ref{theorem_generalization}, we utilize this property along with Lemma \ref{theorem_moment} to bound Term i. Consequently, from Equation \ref{proof-Term 2} in the proof of Theorem \ref{theorem_generalization}, we obtain:
\begin{align}
    \left\|\frac{\lambda}{1+i\lambda}R_{\mathcal{D}_i}(\mathcal{A}(\widetilde{S}_i))-\frac{1}{(1+i\lambda )n}\sum_{\boldsymbol{z}_{i}\in S_{i}}\ell(\mathcal{A}(\widetilde{S}_i),\boldsymbol{z}_{i})\right\|_{p}\lesssim p \frac{\lambda}{1+i\lambda} \beta_{(1+i\lambda) n} \log (\lambda n)+\frac{M}{1+i\lambda} \sqrt{\frac{p\lambda}{n}} \label{proof4-Term i}.
\end{align}
Next, we consider Term 0. Similar to Proof of Theorem \ref{theo_transformer_generalization}, we first introduce a set of functions and apply Lemma \ref{theorem_moment} to bound Term 0. Specifically, we define $h_j(S)$, which serves a similar role to the $g_i$ 's in Lemma \ref{theorem_moment}, as follows:

\begin{align}
&h_j(S_{0})\notag\\
&=\mathbb{E}_{\boldsymbol{z}_{0,j}^{\prime} \sim \mathcal{D}_0} \left[\mathbb{E}_{\boldsymbol{z} \sim \mathcal{D}_0} \ell\left(\mathcal{A}\left(S_{0}^j \cup  S_1\cup ...\cup S_{i}\right), \boldsymbol{z}\right)-\ell\left(\mathcal{A}\left(S_{0}^j \cup S_1\cup ...\cup S_{i}\right), \boldsymbol{z}_{0,j}\right)\right],
\end{align}
where $\boldsymbol{z}_{0, j}$ denote the $j$-th data point in $S_{0}$, and $S_{0}^j$ represent the dataset obtained by replacing $\boldsymbol{z}_{0, j}$ with $\boldsymbol{z}_{0, j}^{\prime}$. Moreover, following the procedure above, we observe that $\left|h_j\right| \leq M$ and $\mathbb{E}\left[h_j \mid S_{0, \alpha}^{\backslash j}\right]=0$
. More intricately, we will now prove that $h_j$ exhibits a bounded difference. However, it is important to note that $S_1, \ldots, S_i$ all depend on $S_0$, so when a single data point in $S_0$ is changed, the corresponding datasets will also change. We denote these modified datasets as $S_1^{\prime}, \ldots, S_i^{\prime}$ and consequently, we have the following:
\begin{align}
  & | h_j(S_0)-h_j(S_0^t)|\notag \\
 &= |\mathbb{E}_{\boldsymbol{z}_{0,j}^{\prime} \sim \mathcal{D}_0} \left[\mathbb{E}_{\boldsymbol{z} \sim \mathcal{D}_0} \ell\left(\mathcal{A}\left(S_{0}^j \cup  S_1\cup ...\cup S_{i}\right), \boldsymbol{z}\right)-\ell\left(\mathcal{A}\left(S_{0}^j \cup S_1\cup ...\cup S_{i}\right), \boldsymbol{z}_{0,j}\right)\right] |\notag\\
&-  \mathbb{E}_{\boldsymbol{z}_{0,j}^{\prime} \sim \mathcal{D}_0} \left[\mathbb{E}_{\boldsymbol{z} \sim \mathcal{D}_0} \ell\left(\mathcal{A}\left((S_{0}^t)^j \cup  S_1'\cup ...\cup S_{i}'\right), \boldsymbol{z}\right)-\ell\left(\mathcal{A}\left((S_{0}^t)^j \cup S_1'\cup ...\cup S_{i}'\right), \boldsymbol{z}_{0,j}\right)\right]\notag\\ 
&\leq 2\beta_{(1+i\lambda)n}\Big(\|S_{0}^j-(S_{0}^t)^j\|_{\ell_2} +\|S_1-S_1'\|_{\ell_2}+...+\|S_i-S_i'\|_{\ell_2}\Big).
\end{align}
Thus, by applying the recursive stability established in Theorem \ref{therorem_stability of transformer}, it is important to first note that in Theorem \ref{therorem_stability of transformer}, the mixed dataset is defined as $\widetilde{S}_j=\alpha S_0+(1-\alpha) S_j$, whereas in this theorem, the mixed dataset is defined as $\widetilde{S}_i=\sum_{j=0}^i S_j$. Therefore, by following the proof steps outlined in Theorem \ref{therorem_stability of transformer}, we can derive the following:
\begin{align}
| h_j(S_0)-h_j(S_0^t)|\lesssim 2\beta_{(1+i\lambda)n}\Big(i !\widetilde{B}_W^{iL}\Big)\notag. 
\end{align}
Thus, we apply lemma \ref{theorem_moment}:
$$
\begin{aligned}
\left\|\sum_{j=1}^{n} h_j(S_{0})\right\|_p & \leq 12 \sqrt{2} p n 2\beta_{(1+i\lambda)n}\left(i !\widetilde{B}_W^{iL}\right) \log (n)+4 M \sqrt{p n} \notag \\
&\lesssim  p \frac{\rho^2}{1+i\lambda}\Big(i !\widetilde{B}_W^{iL}\Big) \log (n(1+i\lambda))+ M \sqrt{p n}.\\
\end{aligned}
$$
We observe that the difference between Term 0 and $\frac{1}{(1+i\lambda)n}\left\|\sum_{j=1}^{n} h_j(S_{0})\right\|_p$ is negligible. Thus, we can bound Term 0 as follows:
\begin{align}
   & \left\|\frac{1}{1+i\lambda} R_{\mathcal{D}_0}(\mathcal{A}(\widetilde{S}_i))-\frac{1}{(1+i\lambda)n}\sum_{\boldsymbol{z}_{i}\in S_{0}}\ell(\mathcal{A}(\widetilde{S}_i),\boldsymbol{z}_{i})\right\|_{p}\notag \\
   & \lesssim p \frac{\rho^2}{(1+i\lambda)^2n}\Big(i !\widetilde{B}_W^{iL}\Big) \log (n(1+i\lambda))+ \frac{1}{1+i\lambda}M \sqrt{p/n}.
\end{align}
Using the same method, for Term $j$, where $1 \leq j \leq i-1$, we can derive the following:
\begin{align}
  &  \left\|\frac{\lambda}{1+i\lambda}R_{\mathcal{D}_j}(\mathcal{A}(\widetilde{S}_i))-\frac{1}{(1+i\lambda)n}\sum_{\boldsymbol{z}_{i}\in S_{1}}\ell(\mathcal{A}(\widetilde{S}_j),\boldsymbol{z}_{i})\right\|_{p}\notag \\
  &\lesssim p \frac{\rho^2}{(1+i\lambda)^2n}\Big(j !\widetilde{B}_W^{jL}\Big) \log (n(1+i\lambda))+\frac{1}{1+i\lambda}M \sqrt{p/n}.
\end{align}
In summary, we can finally bound the Generalization Error on the Mixed Distributions term as follows:
\begin{align}
 &\left\|R_{\widetilde{\mathcal{D}}_i}(\mathcal{A}(\widetilde{S}_i))-\widehat{R}_{\widetilde{S}_i}(\mathcal{A}(\widetilde{S}_i))\right\|_{p} \notag \\
 &\lesssim p \frac{\rho^2}{(1+i\lambda)^2n} \log ((1+i\lambda) n)i!\widetilde{B}_W^{(i+1) L}+\frac{Mi}{1+i\lambda} \sqrt{\frac{p}{n}}. \notag
\end{align}
Then, according to Lemma \ref{lemma_highprobability}, we obtain, with probability at least $1-\delta$:
\begin{align}
 &\left\|R_{\widetilde{\mathcal{D}}_i}(\mathcal{A}(\widetilde{S}_i))-\widehat{R}_{\widetilde{S}_i}(\mathcal{A}(\widetilde{S}_i))\right\|_{p} \notag \\
 &\lesssim  \frac{\rho^2}{(1+i\lambda)^2n} \log ((1+i\lambda) n)i!\widetilde{B}_W^{(i+1) L}\log\frac{1}{\delta}+\frac{Mi}{1+i\lambda} \sqrt{\frac{1}{n}\log\frac{1}{\delta}}. \notag
\end{align}
Then, combine the above inequality with inequality \ref{proof4_hhhh}, we obtain:
\begin{align}
    &\left|R_{\mathcal{D}_0}(\mathcal{A}(\widetilde{S}_i))-\widehat{R}_{\widetilde{S}_i}(\mathcal{A}(\widetilde{S}_i))\right|\notag \\
    &\lesssim n^{-\frac{1}{4}}\log ((1+i\lambda)n)M\log\frac{1}{\delta}+ \frac{\rho^2}{(1+i\lambda)^2n} \log ((1+i\lambda) n)i!\widetilde{B}_W^{(i+1) L}\log\frac{1}{\delta}+\frac{Mi}{1+i\lambda} \sqrt{\frac{1}{n}\log\frac{1}{\delta}}  \notag\\
    &\lesssim n^{-\frac{1}{2}} \frac{M i}{1+i \lambda} \sqrt{\log \frac{1}{\delta}}+n^{-1}\frac{\rho^2}{(1+i\lambda)^2} \log ((1+i\lambda) n)i!\widetilde{B}_W^{(i+1) L}\log\frac{1}{\delta}\notag \\
    &\quad+n^{-\frac{1}{4}}\log ((1+i\lambda)n)M\log\frac{1}{\delta}\notag.
\end{align}
The proof is complete.

\end{proof}

\section{Experiments}
In this section, we present some experimental results. Specifically, we trained transformer models to in-context learn linear functions within STLs.

In these experiments, we considered the class of linear functions:
\[
\mathcal{F} = \left\{ f \mid f(\boldsymbol{x}) = \boldsymbol{w}^\top \boldsymbol{x}, \boldsymbol{w} \in \mathbb{R}^d \right\},  
\]
in \(d = 5\) dimensions. We sampled \(\boldsymbol{x}_1, \ldots, \boldsymbol{x}_k, \boldsymbol{x}_{\text{query}}\), and \(\boldsymbol{w}\) independently from the isotropic Gaussian distribution \(\mathcal{N}(0, I_d)\). For each \(x_i\), we computed \(y_i = \boldsymbol{w}^\top \boldsymbol{x}_i\) and constructed the prompt as:
\[
P = (\boldsymbol{x}_1, y_1, \boldsymbol{x}_2, y_2, \ldots, \boldsymbol{x}_k, y_k, \boldsymbol{x}_{\text{query}}).  
\]

We employed a 12-layer, 8-head GPT-2 model with a hidden size of 256, trained on an \(\mathbb{R}^5\) linear regression task with 40 in-context examples. Two cases were considered:
\begin{itemize}
    \item \textbf{Mixed Case:} Fresh data and generated data were mixed in a 0.5 ratio.
    \item \textbf{Full Synthetic Case:} No fresh data was used.
\end{itemize}

The results of these experiments are summarized below:
\[
\begin{array}{|c|c|c|c|c|c|c|}
\hline
\text{Loop} & 1 & 2 & 3 & 4 & 5 & 6\\
\hline
\text{Full Synthetic} & 0.3817 & 1.4975 & 1.5396 & 2.0836 & 2.3912 & 2.8764\\
\hline
\text{Mixed} & 0.3817 & 0.4208 & 0.4391 & 0.4503 & 0.4641 & 0.4702\\
\hline
\end{array}
\]

As observed, the error accumulates progressively with more self-consuming loops, particularly in the full synthetic case, where the error grows rapidly. In contrast, maintaining a constant-sized proportion of real data effectively reduces the loss, which is consistent with our theoretical findings.

\end{document}